\documentclass[a4paper,fleqn]{cas-sc}

\usepackage[authoryear,longnamesfirst]{natbib}

\ExplSyntaxOn
\bool_gset_true:N \g_stm_nologo_bool
\ExplSyntaxOff

\newtheorem{theorem}{Theorem}
\newtheorem{lemma}[theorem]{Lemma}
\newtheorem{corollary}[theorem]{Corollary}
\newdefinition{remark}{Remark}
\newdefinition{definition}{Definition}
\newproof{proof}{Proof}

\graphicspath{{figures/}}

\begin{document}
\let\WriteBookmarks\relax
\def\floatpagepagefraction{1}
\def\textpagefraction{.001}



\shorttitle{Making two action heads agree}
\shortauthors{J. Sun, W. Zhou, B. Yang, X. Xiao and L. Yang}

\title[mode = title]{Making two action heads agree: coordination mechanisms and a runtime collapse certificate for flow-matching policies}


\author[1]{Jinhui Sun}[orcid=0009-0000-4600-2771]
\fnmark[1]
\ead{jinhuisun@njust.edu.cn}
\credit{Conceptualization, Methodology, Software, Validation,
        Formal analysis, Investigation, Data curation,
        Writing -- original draft, Visualization}

\author[1]{Wei Zhou}[orcid=0000-0003-3225-0576]
\cormark[1]
\fnmark[1]
\ead{weizhou@njust.edu.cn}
\credit{Conceptualization, Methodology, Formal analysis, Validation,
        Writing -- review \& editing, Supervision, Project administration,
        Funding acquisition}

\author[1]{Bowen Yang}[orcid=0009-0008-0229-0060]
\ead{bowenyang@njust.edu.cn}
\credit{Software, Investigation, Validation}

\author[1]{Xinliang Xiao}[orcid=0009-0001-9670-0575]
\ead{xinliangxiao@njust.edu.cn}
\credit{Investigation, Data curation, Visualization}

\author[1]{Li Yang}[orcid=0000-0002-8152-7642]
\ead{yangli945@njust.edu.cn}
\credit{Resources, Supervision, Funding acquisition,
        Writing -- review \& editing}

\affiliation[1]{organization={School of Automation, Nanjing University of Science and Technology},
                city={Nanjing}, postcode={210094},
                state={Jiangsu}, country={China}}

\cortext[1]{Corresponding author.}
\fntext[1]{These authors contributed equally to this work.}

\begin{abstract}
A dual-representation flow-matching policy decodes each predicted motion into joint space and into end-effector space, and the residual between the two kinematically equivalent decodings is a physically interpretable runtime signal. On multimodal tasks, independently sampled branches can select different valid modes, and the residual then fires while nothing is wrong. We ask whether the two branches can be made to choose the same mode, and at what cost. Across two robot environments and a non-robotic testbed, the tested mechanisms sort into four classes. An auxiliary latent supplied to both branches but absent from the flow-matching construction is erased at the population optimum: a provable dead end, confirmed within a prespecified $\pm2\%$ equivalence band. Sharing the source noise can coordinate or anti-coordinate: its effect swings by 90.3 percentage points and changes sign with the representation map, a reversal confirmed in a preregistered factorial at ten seeds per cell, tracking the alignment of the decoders' mode basins. Consistency regularization buys intermediate coordination but reduces the valid-pair rate. Training-supported discrete partitions reach near-ceiling coordination robustly. The chance-corrected effect $\Delta$ obeys $|\Delta|\le C_K=\sqrt{G(q)G(p)}$, a capacity built from each branch's own Gini--Simpson diversity alone; the bound organizes the mechanisms into an attainable region and yields a certificate needing no ground-truth labels: it separates coordination from collapse where zero mismatch is ambiguous. On LIBERO-Plus, benign multimodality adds 1.57 percentage points of false alarms to the residual, which remains the strongest evaluated failure signal; the preregistered token intervention neither meets its registered false-alarm criterion nor shows a seed-robust detection change, in a regime whose monitored heads offer almost nothing to coordinate. Code, models and per-run configurations are released at \url{https://github.com/kimo423/dual-head-coordination}.
\end{abstract}

\begin{highlights}
\item Shared source noise coordinates or anti-coordinates; the representation map decides
\item Auxiliary latents outside the CFM construction are erased at the population optimum
\item Training-supported partition tokens reach near-ceiling coordination robustly
\item Zero mismatch can be collapse; a label-free runtime certificate tells them apart
\item Benign multimodality adds 1.57 pp false alarms to the best evaluated failure signal
\end{highlights}

\begin{keywords}
flow matching \sep generative action policies \sep multimodality \sep mode collapse \sep runtime failure monitoring \sep chance-corrected agreement
\end{keywords}

\maketitle

%
%

\section{Introduction}
\label{sec:intro}

Diffusion and flow-matching action heads fit the multimodal action distributions of manipulation \citep{chi2023diffusion,lipman2023flow}, and runtime failure detection for such policies has developed along two main lines: observation-side distribution-shift scores, which are largely agnostic to the policy's action distribution, and distribution-side quantities such as entropy or ensemble disagreement, which can add substantial training or inference cost. A dual-representation policy suggests a third signal. One predicted motion admits two kinematically equivalent decodings, in joint space and in end-effector space, reconciled through differentiable forward kinematics and compared in physical units. Such a residual is inexpensive relative to a deep ensemble and directly tied to the policy output. Its interpretation is nevertheless fragile on multimodal tasks. If two valid routes pass an obstacle on different sides, independently sampled branches may select different routes. The resulting residual is large even though neither branch is wrong. Related failures of disagreement-based uncertainty at multimodal decision points have been observed for policy ensembles \citep{menda2019ensembledagger,lee2025diffdagger}; here the disagreement occurs between two representations of one policy.

A natural response is to give both branches a shared random variable intended to select the same mode. This design raises a more general question: under what conditions can a shared variable coordinate two generative branches that are trained separately and sample from private randomness? The answer depends on both the training objective and the way coordination is measured.

The measurement half is not routine. Whether two branches coordinate cannot be read off the rate at which they disagree: a low mismatch rate can arise from chance agreement, from marginal imbalance, or from collapse onto a single mode. Established evaluation measures have been shown to misreport the property they name, for out-of-domain generalization and for uncertainty estimates alike \citep{nn2025hwang,nn2024sluijterman}. This paper therefore runs the two questions together. First, which shared variables can make two separately trained branches choose the same mode, and at what cost? Second, how must the resulting agreement be measured so that success is not mistaken for chance or collapse? We use the chance-corrected effect
\[
\Delta=\mathrm{BL}-\mathrm{obs},
\]
where $\mathrm{obs}$ is the observed mode-mismatch rate and $\mathrm{BL}$ is the mismatch expected if the branches chose independently from their own marginals. For $K$ modes,
\[
|\Delta|\le C_K:=\sqrt{G(q)G(p)}\le 1-1/K,
\]
with $G$ the Gini--Simpson diversity. The inequality defines an attainable region whose horizontal coordinate is the modal-diversity capacity $C_K$ and whose vertical coordinate is the chance-corrected coordination effect $\Delta$ (Fig.~\ref{fig:region}). Both coordinates are computed from the branches' own outputs, with no ground-truth labels, which is what makes the region a runtime object rather than a post-hoc analysis: Remark~\ref{rmk:correspondence} gives its exact dictionary into classical agreement statistics, and Section~\ref{sec:calculus} states what the dual-generative setting adds, namely the both-valid conditioning, the collapse certificate, and the per-condition baselines.

The experiments occupy four characteristic parts of this region. An auxiliary latent that does not participate in constructing the interpolation state or regression target is ignored by the population squared-CFM optimum and remains on the zero-coordination floor. An amortized posterior can produce zero mismatch while collapsing the modal capacity, placing the model near the origin. Endpoint-consistency regularization produces intermediate coordination but reduces the valid-pair rate. Data-derived discrete symbols provide a robust route to the high-capacity corner in the robot environments. They are sufficient rather than necessary: on a balanced non-robotic testbed, an amortized posterior also reaches the corner. The same testbed separates mechanism from environment by showing that shared source noise can coordinate or anti-coordinate as the representation map changes, and that skewed target priors alter both coordination and behavioral-prior fidelity.

The paper makes three main contributions.
\begin{enumerate}
\item \textbf{A mechanism atlas for cross-branch coordination.} One hundred seventeen robot-side training runs and 423 testbed runs compare auxiliary latents, amortized posteriors, source-noise coupling, consistency regularization, and data-derived symbols under one frozen evaluation calculus. Lemma~\ref{lem:zblind} closes one route in advance: an auxiliary latent absent from the construction of the CFM interpolation state and regression target is ignored at the population squared-loss optimum, and equivalence tests in both robot environments and the testbed place its measured effect within the prespecified $\pm2\%$ band at seed level. Shared source noise is the opposite case: it is part of the interpolation construction, acts as a coupling of the samplers, and its effect swings by 90.3 percentage points and changes sign with the representation map, a reversal confirmed in a preregistered factorial at ten seeds per cell (interaction permutation $p=10^{-4}$). Consistency regularization pays for intermediate coordination with valid-pair rate; training-supported discrete symbols reach the high-capacity corner robustly; and the study separates coordination from matching the data's mode frequencies.

\item \textbf{A coordination calculus whose certificate needs no ground-truth labels.} Theorem~\ref{thm:region} gives an attribution identity, a modal-diversity bound, its reverse use as a coverage certificate, the tight bound at fixed marginals, and the correction required when conditioning on both branches producing valid trajectories. Because $C_K$ depends only on each branch's own modal marginal, the certificate is computable at deployment, where zero mismatch is otherwise ambiguous between coordination and collapse. The resulting reporting scheme keeps coordination, modal diversity, valid-pair rate, and behavioral-prior fidelity separate.

\item \textbf{An external monitoring characterization on LIBERO-Plus.} Benign multimodality raises the cross-representation residual's false-alarm rate by 1.57 percentage points while observation-side baselines show no corresponding surcharge, and the residual remains the best-performing evaluated signal family at a fixed 100-step horizon. The preregistered token intervention does not meet its registered false-alarm criterion, and the detection contrast is inconclusive across three monitor seeds; the monitored heads have very low modal capacity and an almost deterministic shared symbol, so the experiment bounds the regime in which coordination has material to work with, rather than the general value of coordination.
\end{enumerate}

The paper is an analysis of measurement and mechanism rather than a proposal for a new policy or monitor. The constructed environments isolate the mechanisms; the external benchmark tests one practical consequence. Section~\ref{sec:limitations} states the corresponding scope limits.

\begin{figure}[pos=tbp]
  \centering
  \includegraphics[width=\textwidth]{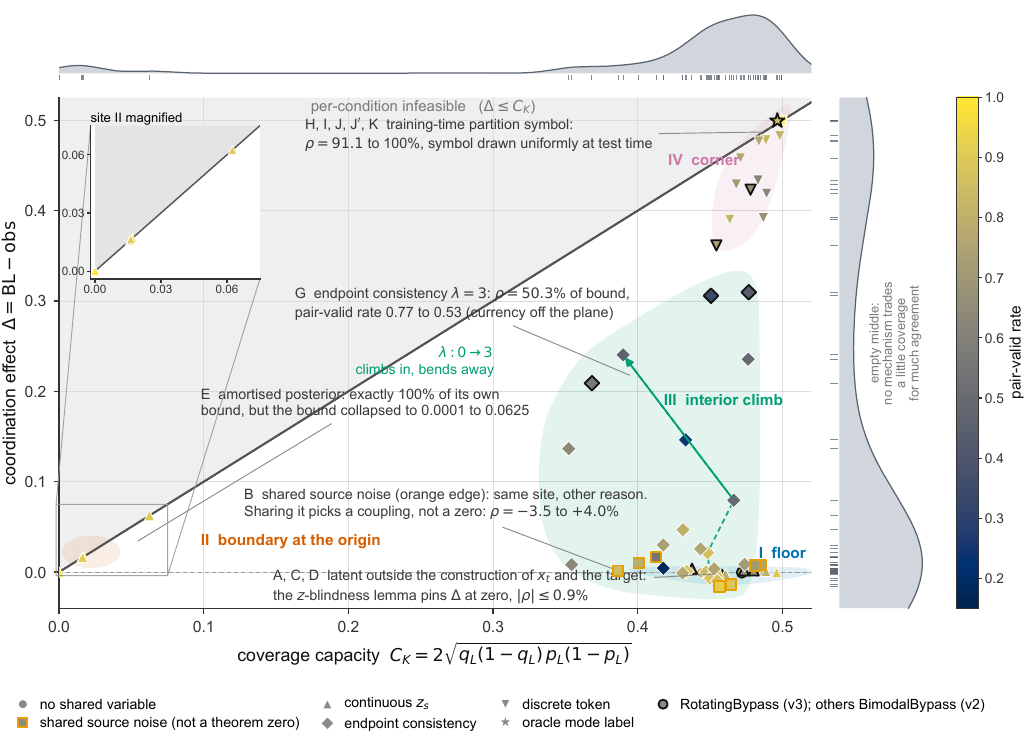}
  \caption{Attainable region for the two robot environments ($K=2$). The horizontal axis is modal-diversity capacity $C$ and the vertical axis is the chance-corrected effect $\Delta=\mathrm{BL}-\mathrm{obs}$, both formed per condition and aggregated with both-valid weights. The diagonal is the feasibility boundary. Each marker is one training run; fill indicates valid-pair rate. Seventeen runs have exact both-valid marginals. The remaining 50 use stored head-valid marginals and are therefore approximate; six lie at most $0.6\%$ above the line, whereas every re-estimable condition satisfies the bound under exact accounting. The dashed path is the shared-source-noise consistency sweep.}
  \label{fig:region}
\end{figure}

\section{Setting and notation}
\label{sec:setting}

\subsection{Dual-branch flow-matching action heads}
\label{sec:heads}

Let $c$ denote the conditioning information (an encoding of proprioceptive state and task context) and let the action chunk have length $H=16$. Two branches model the same future trajectory segment in different spaces:
\begin{itemize}
\item a joint branch, $X^{1}\in\mathbb{R}^{H\times 7}$, with velocity field $v_q(x,t,E(c),z)$;
\item a position branch, $X^{2}\in\mathbb{R}^{H\times 3}$, with velocity field $v_T(x,t,E(c),z)$.
\end{itemize}
The branches share the condition encoder $E(\cdot)$ but share \emph{no} velocity-field parameters (verified on every checkpoint by an empty intersection of parameter identities), and neither branch reads the other's output. The variable $z$ is an optional shared random input. Section~\ref{sec:third-rep} adds a third branch, an end-effector \emph{orientation} head $X^{3}$ that runs flow matching on the continuous 6D rotation representation.

Both branches are trained by conditional flow matching on linear interpolation paths: $x_0\sim\mathcal{N}(0,I)$, $x_1\sim p(\cdot\mid c)$, $x_t=(1-t)x_0+tx_1$, regression target $x_1-x_0$, and branch loss
\begin{equation}
\label{eq:cfm}
\mathcal{L}_i \;=\; \mathbb{E}_{c,\,x_1,\,x_0^i,\,t,\,z}\,\bigl\lVert\, v_i\bigl(x_t^i,\,t,\,E(c),\,z\bigr)-\bigl(x_1^i-x_0^i\bigr)\,\bigr\rVert^2 .
\end{equation}
Coupling of the two branches' source noise is controlled by $\beta\in[0,1]$: $x_0^{1}=\sqrt{1-\beta}\,s+\sqrt{\beta}\,\varepsilon_q$ and $x_0^{2}=\sqrt{1-\beta}\,s_{[:3]}+\sqrt{\beta}\,\varepsilon_T$, with $s\sim\mathcal{N}(0,I_7)$ and the $\varepsilon$'s independent, so that $\beta=0$ is fully shared source noise and $\beta=1$ fully independent. The family leaves the marginal source distribution untouched and is therefore a legitimate instance of initial-noise coupling design \citep{jia2026couple}. All runs with a continuous $z_s$ or a discrete token use $\beta=1$: the only shared quantity is $z$.

An optional endpoint-consistency regularizer acts on the clean prediction at the denoising endpoint, $\hat a = x_t+(1-t)v$:
\begin{equation}
\label{eq:cons}
\mathcal{L}_{\mathrm{cons}} \;=\; \lambda\cdot\mathbb{E}\,\bigl\lVert\, \mathrm{FK}(\hat a^{1})-\hat a^{2}\,\bigr\rVert^2_{W},
\end{equation}
where $\mathrm{FK}$ is differentiable forward kinematics; the analogous penalty on noisy intermediate states is ill-posed. The orientation branch has a counterpart $\mathcal{L}_{\mathrm{rot}}$, the squared geodesic angle between the rotation of $\mathrm{FK}(\hat a^{1})$ and the orientation branch's rotation, with weight $\lambda_{\mathrm{rot}}$.

\subsection{Mode assignment, the coordination effect, and its target population}
\label{sec:mode-def}

Every sampled trajectory is assigned to one of two modes or to \texttt{invalid} (collision, or goal not reached). The assignment rule reads the trajectory's \emph{lateral geometry}---a fact that Section~\ref{sec:tautology} turns on: in BimodalBypass the rule takes the sign of $y$ at the point of largest $|y|$ along the trajectory, with $|y|<\tau$ declared invalid; in RotatingBypass it takes the sign of the larger of two orthogonal projections on a whitened cross-section, with the same threshold role for $\tau$.

\paragraph{Evaluation conventions}
The mode ontology, validity rule, and threshold are fixed before computing any coordination effect.  Both branches are evaluated under the same environment-defined partition, so no label matching is fitted to their outputs.  The threshold $\tau$ is set from task geometry; the coverage-matched values in Section~\ref{sec:envs} use only the independent family's valid-pair rate, and the sensitivity analysis in \ref{app:sensitivity} is reported separately.  Samples inside the dead band are treated as \texttt{invalid} rather than assigned to the nearest mode.  This convention is consequential on LIBERO-Plus, where tightening the boundary rule can reduce the estimated capacity from 0.0055 to zero (Section~\ref{sec:liberoplus-coverage}).

\paragraph{Definitions}
Write $M_q$ and $M_T$ for the two branches' assignments. Every coordination quantity below lives on the \emph{both-valid} event and is estimated from the $N_{\mathrm{BV}}$ sample pairs in which neither branch returned \texttt{invalid}:
\begin{itemize}
\item \emph{valid-pair rate}: $\Pr(M_q\neq I \wedge M_T\neq I)$, reported next to every coordination number below;
\item \emph{mismatch given validity}: $\mathrm{obs}=\Pr(M_q\neq M_T \mid M_q\neq I,\, M_T\neq I)$;
\item per-branch side fractions on that same event, $q_L=\Pr(M_q=L\mid\text{both valid})$ and $p_L=\Pr(M_T=L\mid\text{both valid})$;
\item \emph{independent-choice baseline}: $\mathrm{BL}=q_L(1-p_L)+(1-q_L)p_L$;
\item \emph{same-label coordination effect}: $\Delta=\mathrm{BL}-\mathrm{obs}$, positive when the branches land on the \emph{same} mode more often than their own marginals predict;
\item \emph{coverage capacity}: $C=2\sqrt{q_L(1-q_L)\,p_L(1-p_L)}$, the outer bound of Theorem~\ref{thm:region}(ii);
\item \emph{normalized effect}: $\rho=\Delta/C$, a correlation-like normalization---the phi coefficient of the branches' mode choices, i.e.\ the MCC of the $2\times2$ table---not the fraction attainable at these marginals, which the smaller $C_{K,\mathrm{tight}}$ of Section~\ref{sec:calculus} governs. Defined only for $C>0$: at $C=0$ we report $\Delta=0$ with $\rho$ undefined, never $0\%$ or $100\%$; at $N_{\mathrm{BV}}=0$ nothing here is estimable and only the valid-pair rate is reported.
\end{itemize}
The convention holds throughout: $\Delta$ and $\rho$ are positive for coordination, negative for anti-coordination. Some archived plot panels use $y$ for the same effect; in those panels $y\equiv\Delta$. $\mathrm{BL}$ is the mismatch rate expected if each branch chose modes independently from its own marginal; reporting $\mathrm{obs}$ alone would book marginal differences as coordination, and Section~\ref{sec:calculus} shows the decomposition is exact for arbitrary $K$, of which the robots' $K=2$---their assignment rule being the sign of a projection---is a verbatim specialization. Evidence for $K>2$ is testbed-only (Section~\ref{sec:toy}).

\paragraph{The both-valid conditioning is a selection}
$\Delta$ is computed on survivors, so two estimands must stay apart: \emph{conditional coordination}, defined above on each condition's both-valid sub-population, and the \emph{validity-weighted outcome} over all sampled pairs, which $\Delta$ alone does not recover. A mechanism preferentially invalidating the pairs that would have disagreed raises the first while lowering the second, and the consistency regularizer is a live instance. Every headline effect is therefore reported with its valid-pair rate; Section~\ref{sec:interior} examines this trade-off for the consistency regularizer.

\subsection{Reconciled residual}
\label{sec:residual-def}

\begin{figure}[pos=tbp]
  \centering
  \includegraphics[width=0.92\textwidth]{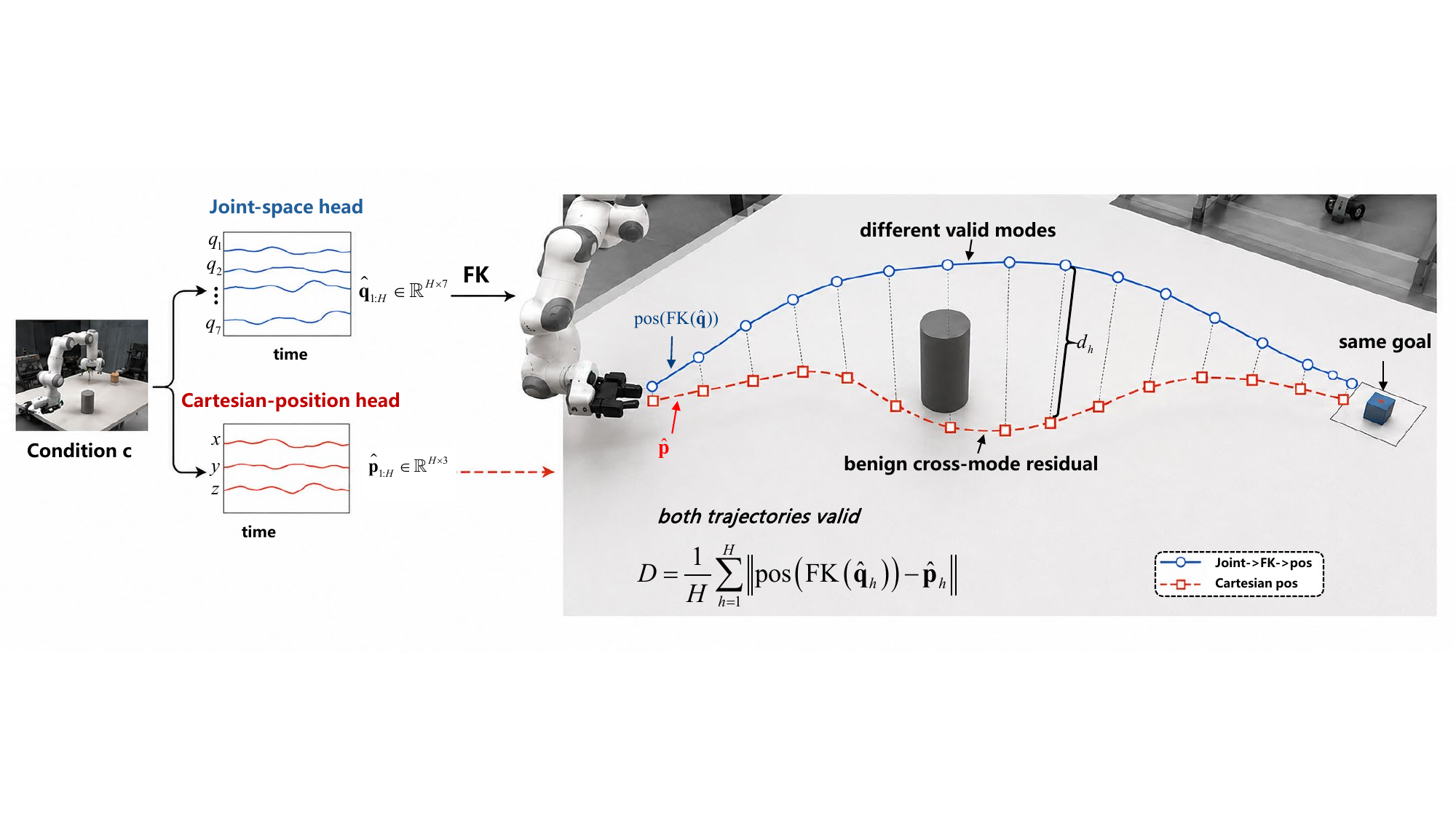}
  \caption{Why two correct branches can disagree. The same condition is decoded into two kinematically equivalent action representations. Independent samples can select different valid routes, so the reconciled forward-kinematics residual becomes large even though both endpoints are correct. The drawing is schematic; measured decompositions appear in Section~\ref{sec:share}.}
  \label{fig:mismatch}
\end{figure}

The cross-branch residual is $D=\operatorname{mean}_h\lVert \mathrm{pos}(\mathrm{FK}(\hat q_h))-\hat p_h\rVert$, in millimetres; Fig.~\ref{fig:mismatch} shows why it can be large even when both branches are correct. The constructed environments use this residual diagnostically, for mechanism and decomposition, rather than as a detector. Detector performance is evaluated only in the external experiment of Section~\ref{sec:liberoplus}. Section~\ref{sec:tautology} further splits the residual into a lateral component and its orthogonal complement. The orientation branch's counterpart is the mean geodesic angle in degrees.

\paragraph{Terminology}
The line $\Delta=C$ is a feasibility boundary, not an empirical Pareto frontier.  We use \emph{modal coverage} for the output-marginal diversity that enters $C$; \emph{valid-pair rate} for the probability that both samples are valid; and \emph{support coverage} for whether test-time latent values are represented during training.  High modal coverage does not imply fidelity to the data distribution: a model can spread mass over the wrong modes and still have large $C$.  Behavioral-prior fidelity is therefore reported separately (Section~\ref{sec:corner-entry}).

\section{The effect-size calculus and the attainable region}
\label{sec:calculus}

This section expresses cross-branch mode agreement on a chance-corrected scale.  The calculus is stated for an arbitrary number of modes $K$; both robot environments use the binary specialization in Section~\ref{sec:mode-def}.  Its relation to classical agreement statistics is summarized in Remark~\ref{rmk:correspondence}.

\paragraph{Notation}
Let $M_q, M_T$ take values in a \emph{common} $K$-element mode set $\{1,\dots,K\}$ plus an absorbing \texttt{invalid}. On the both-valid event write the joint distribution $P_{jk}$ with marginals $q_k$, $p_k$, and
\begin{equation*}
\begin{aligned}
p_o &= \textstyle\sum_k P_{kk}, & \mathrm{obs} &= 1-p_o, & p_e &= \textstyle\sum_k q_k p_k, \\
\mathrm{BL} &= 1-p_e, & \Delta &= \mathrm{BL}-\mathrm{obs} = p_o-p_e, & G(m) &= 1-\textstyle\sum_k m_k^2 ,
\end{aligned}
\end{equation*}
with $G$ the Gini--Simpson diversity. For $K=2$ these reduce verbatim to Section~\ref{sec:mode-def}, with $C_2=2\sqrt{q_L(1-q_L)p_L(1-p_L)}\le1/2$; the factors of 2 decorating binary formulas come from summing over $k=1,2$, and also explain the two factors of 2 in \eqref{eq:simpson} below.

\paragraph{Label matching (a hypothesis for $K>2$)}
$p_o$ depends on how the branches' labels are paired; $C_K$ below does not. The pairing is canonical at $K=2$ and canonical here for all $K$ because both branches are scored by \emph{one} environment-defined partition. Applied to two independently clustered mode sets, the calculus requires fixing a matching first, else $p_o$ is undefined; the unsupervised-token family of Section~\ref{sec:corner} avoids the issue because both branches are fed the \emph{same} symbol, not because its labels align.

\begin{theorem}[Attribution, coverage bound, coverage certificate, tight bound]
\label{thm:region}
Fix a condition $c$.  For part~(i), suppose the branches are conditionally independent given $(c,Z)$; parts~(ii)--(iv) hold for any joint law.  Write $a_k(z)=\Pr(M_q=k\mid M_q\neq I,c,z)$ and $b_k(z)=\Pr(M_T=k\mid M_T\neq I,c,z)$.  Let $\tilde z$ denote the distribution of $Z$ tilted by $w(z)=\Pr(M_q\neq I\mid c,z)\Pr(M_T\neq I\mid c,z)$, which accounts for conditioning on both branches being valid.  Then
\begin{align}
\text{(i)}\quad & \Delta(c) \;=\; \sum_k \operatorname{Cov}_{\tilde z}\!\bigl(a_k(Z),\,b_k(Z)\bigr) \;=\; \operatorname{tr}\operatorname{Cov}_{\tilde z}\!\bigl(a(Z),\,b(Z)\bigr), \label{eq:attrib}\\
\text{(ii)}\quad & \bigl|\Delta(c)\bigr| \;\le\; C_K := \sqrt{G(q)\,G(p)} \;\le\; 1-\tfrac{1}{K}, \label{eq:bound}\\
\text{(iii)}\quad & G(q) \;\ge\; \tfrac{K}{K-1}\,\Delta(c)^2, \quad\text{and likewise } G(p), \label{eq:certificate}\\
\text{(iv)}\quad & \Delta \;\le\; C_{K,\mathrm{tight}} := \sum_k \min(q_k,p_k)-\sum_k q_kp_k \;=\; \mathrm{BL}-\mathrm{TV}(q,p), \label{eq:tight}
\end{align}
with $\mathrm{TV}(q,p)=\tfrac12\sum_k|q_k-p_k|$. Always $C_{K,\mathrm{tight}}\le C_K$, with equality iff $q=p$ or $C_K=0$ (one marginal a point mass; both bounds vanish).
\end{theorem}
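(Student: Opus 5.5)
Parts (i)--(iii) follow from rewriting $\Delta$ as a trace covariance and applying Cauchy--Schwarz. Part (iv) follows from the maximal coupling, and the comparison $C_{K,\mathrm{tight}}\le C_K$ is then a corollary of (ii).

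For (i), I fix $c$ and use conditional independence given $(c,Z)$ to factor the both-valid joint law. Given $z$, the event $\{M_q=j,\,M_T=k,\text{ both valid}\}$ has probability $w(z)\,a_j(z)\,b_k(z)$. Normalizing by $\mathbb{E}[w(Z)]$ is exactly passing to the tilted law $\tilde z$, so $P_{jk}=\mathbb{E}_{\tilde z}[a_j(Z)b_k(Z)]$. In particular $q_k=\mathbb{E}_{\tilde z}a_k$ and $p_k=\mathbb{E}_{\tilde z}b_k$. Then
\[
\Delta=p_o-p_e=\sum_k\bigl(\mathbb{E}_{\tilde z}[a_kb_k]-\mathbb{E}_{\tilde z}a_k\,\mathbb{E}_{\tilde z}b_k\bigr),
\]
which is \eqref{eq:attrib}. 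The only care needed is bookkeeping: the tilt must be applied before taking marginals, so that $q,p$ are the both-valid marginals and not head-valid ones.

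For (ii), I drop all structural assumptions and work with an arbitrary joint law on the both-valid event. Let $X=e_{M_q}$ and $Y=e_{M_T}$ be the one-hot indicator vectors in $\mathbb{R}^K$, so that $\mathbb{E}X=q$ and $\mathbb{E}Y=p$. Then $\Delta=\sum_k\bigl(\Pr(M_q=M_T=k)-q_kp_k\bigr)=\mathbb{E}\langle X-q,\,Y-p\rangle$. By Cauchy--Schwarz in $L^2$,
\[
|\Delta|\le\sqrt{\mathbb{E}\lVert X-q\rVert^2\,\mathbb{E}\lVert Y-p\rVert^2}.
\]
Since $\lVert X\rVert^2=1$, we have $\mathbb{E}\lVert X-q\rVert^2=1-\lVert q\rVert^2=G(q)$, and likewise for $p$. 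The outer inequality $G(m)\le 1-1/K$ is the elementary bound $\sum_k m_k^2\ge 1/K$, again by Cauchy--Schwarz.

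For (iii), I square (ii) and bound one factor by its maximum: $\Delta^2\le G(q)G(p)\le G(q)(1-1/K)$. Rearranging gives \eqref{eq:certificate}, and the argument is symmetric in the two branches.

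For (iv), the diagonal mass satisfies $P_{kk}\le\min(q_k,p_k)$, so $p_o\le\sum_k\min(q_k,p_k)=1-\mathrm{TV}(q,p)$. Subtracting $p_e$ and using $\mathrm{BL}=1-p_e$ gives both forms in \eqref{eq:tight}.

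To compare $C_{K,\mathrm{tight}}$ with $C_K$, I note that the maximal coupling of $q$ and $p$ attains $p_o=\sum_k\min(q_k,p_k)$. Hence $C_{K,\mathrm{tight}}$ is itself the $\Delta$ of an admissible joint law, and (ii) gives $C_{K,\mathrm{tight}}\le C_K$ immediately.

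For the equality cases, the "if" direction is direct:
\begin{itemize}
\item If $q=p$, then $C_{K,\mathrm{tight}}=1-\sum_k q_k^2=G(q)=C_K$.
\item If, say, $q=\delta_j$, then $C_{K,\mathrm{tight}}=p_j-p_j=0=C_K$.
\end{itemize}

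I expect the "only if" direction to be the main obstacle. Suppose $C_K>0$ and equality holds; I apply it to the maximal coupling. Equality in Cauchy--Schwarz forces $Y-p=\lambda(X-q)$ almost surely, for some $\lambda>0$ since $\Delta=C_K>0$.

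Evaluate this at an outcome $(M_q,M_T)=(j,k)$ with positive mass. Its $k$th coordinate reads $1-p_k=\lambda(\mathbb 1[j=k]-q_k)$. If $j\neq k$, the right side is $-\lambda q_k\le 0$ while the left side is $1-p_k\ge 0$. This forces $p_k=1$, which contradicts $G(p)>0$. So the coupling is supported on the diagonal, meaning $q=p$.

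I would double-check this coordinate argument, in particular the degenerate sub-case where only some labels carry mass. If it proves fiddly, the fallback is to show directly that $C_K-C_{K,\mathrm{tight}}$ is strictly positive when $q\neq p$ and both marginals are nondegenerate.
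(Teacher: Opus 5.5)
Your proof is correct and follows the paper's route. The paper also uses one-hot indicators with the tilted law for (i), Cauchy--Schwarz on the centered indicators for (ii)--(iii), the cellwise Fr\'echet--Hoeffding bound for (iv), and (ii) applied to the maximal coupling to obtain $C_{K,\mathrm{tight}}\le C_K$. Your coordinate argument for the ``only if'' direction is sound as written: it uses only outcomes of positive mass and $q_k\ge0$, so no degenerate sub-case arises. It also spells out the paper's brief remark that Cauchy--Schwarz equality with a positive constant forces $M_q=M_T$ almost surely.
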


The proof in \ref{app:proofs} uses one-hot indicators, the law of total covariance, Cauchy--Schwarz, and a Fr\'echet--Hoeffding coupling.  Equation~\eqref{eq:attrib} is the standard latent-class covariance decomposition \citep{uebersax1990,agresti1992}, adapted to the both-valid population through the tilted distribution $\tilde z$.  The outer bound is Gorodkin's $R_K\le1$ \citep{gorodkin2004}; \eqref{eq:certificate} is its reverse reading; and \eqref{eq:tight} is the fixed-marginal maximum-kappa numerator \citep{umesh1989}, with $\mathrm{BL}-C_{K,\mathrm{tight}}=\mathrm{TV}(q,p)$ corresponding to the multiclass bias index \citep{byrt1993}.

\paragraph{Interpretation}
Equation~\eqref{eq:attrib} measures how the branches' mode propensities co-vary with a shared variable, rather than how small the raw mismatch happens to be.  If neither branch responds to that variable, the effect is zero.  The bound in \eqref{eq:bound} then separates coordination from collapse: a model that always selects one mode has $\mathrm{obs}=0$, but it also has $\mathrm{BL}=C_K=\Delta=0$.  Conversely, a nonzero effect certifies modal diversity through \eqref{eq:certificate}.  For example, the learned-prior token effect $\Delta=0.4710$ implies $G(q)\ge0.4436$ and hence $q_L\in[0.332,0.668]$, consistent with the measured modal entropies.  Because the inequality holds per condition and is preserved by non-negative weighted averaging, run-level points obey the same boundary.

\paragraph{$C_K$ is an outer bound, not the maximum attainable at fixed marginals}
For fixed unequal marginals the tighter limit is $C_{K,\mathrm{tight}}$ in \eqref{eq:tight}: whenever $q\neq p$ and $C_K>0$ we have $\Delta\le C_{K,\mathrm{tight}}<C_K$, so a run whose two branch marginals differ cannot reach its own $C_K$, and the boundary $\Delta=C_K$ is attained only with matching marginals and perfect same-label agreement.  We retain $C_K$ on the horizontal axis because it depends only on the diversity of each branch and therefore preserves the interpretation of the axis as modal capacity.  It also charges marginal disagreement between the branches rather than normalizing it away: for $2\times2$ tables, dividing each of a family of association coefficients by its maximum under the observed margins returns one and the same coefficient \citep{warrens2008similarity,cureton1959}, so a fixed-marginal denominator discards precisely the marginal discrepancy the horizontal axis is meant to carry.  The fixed-marginal bound is reported as a secondary diagnostic in \ref{app:tables}; at the available per-condition sample sizes its plug-in estimate is more sensitive to marginal noise, but it does not change the qualitative occupancy assignments.  Feasible-region readings of an agreement statistic have a classifier-ensemble precedent in kappa--error diagrams, computed there on correct/wrong tables against ground truth \citep{kuncheva2013bound}; Section~\ref{sec:related} details the relation, and the absence of any correctness label from both coordinates here is what makes this region a runtime object.

\paragraph{The range moves with $K$; the classification does not}
The horizontal range is $[0,1-1/K]$ (0.5, 0.75, 0.875 at $K=2,4,8$), so the corner's absolute position depends on $K$, while the four-way classification is fixed by two dimensionless coordinates: the normalized effect $\rho=\Delta/C_K$ and relative capacity $C_K/(1-1/K)$; cross-$K$ comparisons use this pair (Section~\ref{sec:toy-relabel}). The lower Cauchy--Schwarz side $\Delta\ge-C_K$ is not tight for $K\ge3$ (the folklore range $[-1,1]$ for multiclass correlation is tight only at $K=2$, \citealp{jurman2012}); we use the upper side only, and read negative effects as anti-coordination without interpreting their distance to $-C_K$.

\paragraph{What the effect does not measure}
$\Delta=\langle I_K,\,P-qp^{\!\top}\rangle$ is a \emph{signed linear functional of the diagonal} of the dependence matrix $P-qp^{\!\top}$, not a full dependence measure, and the distinction is not cosmetic once $K\ge3$. Two ways to have $\Delta=0$ with the branches dependent: diagonal deviations of opposite sign that cancel in the trace ($P_{11}>q_1p_1$ offset by $P_{22}<q_2p_2$), and dependence carried entirely off the diagonal (branch~1 picking mode~1 makes branch~2 prefer mode~2 over mode~3, with every diagonal cell at its independent value). In numbers: at $K=3$ with uniform marginals, let branch~2's label be a deterministic permutation of branch~1's fixing one label and swapping the other two. Then $p_o=p_e=1/3$, so $\Delta=0$ and $\rho=0$ at $C_3=C_{3,\mathrm{tight}}=2/3$---the point lands on the floor at two-thirds of maximum capacity while the branches are perfectly dependent, $I(M_q;M_T)=\ln3$. A statistic returning zero there measures same-label agreement, not coordination in general. Only at $K=2$ does the ambiguity close---a permutation fixing one of two labels is the identity, so no such example exists---and there $\Delta=2(P_{LL}P_{RR}-P_{LR}P_{RL})$ is twice the determinant of the $2\times2$ table, so $\Delta=0$ holds \emph{iff} the branches are independent. The two robotic environments and the external benchmark are all binary and therefore not exposed to the gap; every $K>2$ reading of Section~\ref{sec:toy} is. What $\Delta$ measures throughout is the same-label direction: how much more often than chance the two branches land on \emph{the same} mode. That is the quantity a cross-representation reconciliation actually consumes---a residual is small when the branches picked the same mode, and no smaller when they picked two different modes in a correlated way---so the projection is the operationally right one here, but at $K\ge3$ it must not be read as ``the branches are independent'' when it returns zero. The implication runs one way: independence forces $\Delta=0$, not the converse.

\begin{remark}[The Simpson term in pooled reporting]
\label{rmk:simpson}
Pooling across conditions gives
\begin{equation}
\label{eq:simpson}
\mathrm{BL}_{\mathrm{pooled}}-\mathbb{E}_c[\mathrm{obs}(c)]
= \sum_k \mathbb{E}_c\!\left[\operatorname{Cov}_{\tilde z}(a_k,b_k\mid c)\right]
+\sum_k \operatorname{Cov}_c\!\left(q_k(c),p_k(c)\right),
\end{equation}
under equal weighting; both-valid weighting preserves the same decomposition.  The first term is within-condition coordination, while the second is co-movement of the branch marginals across conditions.  A pooled baseline therefore attributes condition-level changes to the mechanism.  Section~\ref{sec:floor} contains an observed example, and the posterior runs show the related consequence that a pooled normalized effect can slightly exceed a pooled-marginal bound even though every exact per-condition table satisfies the bound.  Similar aggregation effects arise in temporal uncertainty summaries \citep{tang2026shifting}.
\end{remark}

\begin{remark}[Conditional independence and disagreement monitors]
\label{rmk:scope}
For two conditionally independent draws, the observed mismatch equals the independent-choice baseline and hence $\Delta=0$.  This is the second-moment counterpart of the data-processing statement
\[
I(M_q;M_T\mid c)\le
\min\{I(M_q;Z\mid c),I(M_T;Z\mid c)\}
\]
for conditionally independent local channels \citep{wyner1975}.  If neither channel responds to the common variable, their conditional dependence vanishes.

The role of Lemma~\ref{lem:zblind} is to connect this familiar criterion to CFM training: an auxiliary latent may be supplied to both velocity fields yet be absent from the population-optimal predictor.  The effect $\Delta$ makes the implication experimentally useful because it has a per-condition capacity and a measurable sampling floor.  In the present protocol, effects within approximately $\pm0.003$ are indistinguishable from the structural zero at 75 conditions and 1280 pairs per condition (Section~\ref{sec:monitors}).  At $K\ge3$, the converse does not hold: $\Delta=0$ need not imply full independence because the statistic measures only same-label agreement.
\end{remark}

\begin{remark}[Correspondence with chance-corrected agreement statistics]
\label{rmk:correspondence}
Treating the branches as two raters gives the exact dictionary in Table~\ref{tab:correspondence}.  The chance model uses each branch's own marginal, as in Cohen's kappa \citep{cohen1960}.

\begin{table}[t]
\centering
\caption{Correspondence with chance-corrected agreement statistics (arbitrary $K$).}
\label{tab:correspondence}
\footnotesize
\begin{tabular}{ll}
\toprule
This paper & Agreement-statistics object \\
\midrule
$\mathrm{obs}$ & observed disagreement $1-p_o$ \\
$\mathrm{BL}=1-\sum_k q_kp_k$ & chance disagreement, own-marginal convention \citep{cohen1960} \\
$\Delta=\mathrm{BL}-\mathrm{obs}$ & numerator of Cohen's kappa; at $K{=}2$, $2(P_{LL}P_{RR}-P_{LR}P_{RL})$ \\
$\Delta/\mathrm{BL}$ & Cohen's kappa \\
$C_K=\sqrt{G(q)G(p)}$ & denominator of Gorodkin's $R_K$ \citep{gorodkin2004} \\
$\rho=\Delta/C_K$ & $R_K$; at $K{=}2$, the phi coefficient/MCC \citep{matthews1975} \\
Theorem~\ref{thm:region}(ii), (iii) & $R_K\le1$ and its rearrangement \citep{gorodkin2004} \\
$C_{K,\mathrm{tight}}$ & fixed-marginal maximum-kappa numerator \citep{cohen1960,umesh1989} \\
$\mathrm{BL}-C_{K,\mathrm{tight}}=\mathrm{TV}(q,p)$ & multiclass bias index \citep{byrt1993} \\
$G(m)$ & Gini impurity/Simpson diversity; total one-hot variance \\
\bottomrule
\end{tabular}
\end{table}

Only the attribution identity requires conditional independence; the two bounds are properties of any non-negative contingency table.  The classical ratios also explain why the two-dimensional presentation is useful.  A collapsed posterior can have kappa and $R_K$ equal to one while both their numerator and denominator approach zero.  Plotting $\Delta$ against $C_K$ places that case near the origin rather than on an undifferentiated ``perfect agreement'' line.  The underlying dependence on the marginals is the one behind the high-agreement/low-kappa paradox \citep{feinstein1990,cicchetti1990}, with the sign reversed: there an unbalanced margin drives kappa down while observed agreement is high, here a degenerate margin drives the ratio to one while modal diversity vanishes.  Reporting the numerator against its capacity is the two-dimensional counterpart of reporting prevalence and bias next to kappa \citep{byrt1993}.  Conversely, $C_K$ is maximized by uniform generated marginals and therefore says nothing about fidelity to a skewed reference distribution; Section~\ref{sec:toy-skew} reports that discrepancy separately.

These are one family rather than several: kappa, Fleiss's kappa, informedness, markedness and the MCC share a numerator and differ in how their denominators average prevalence and bias \citep{powers2012kappa}.  What the vertical axis carries is that shared numerator, left undivided.  The contribution here is the use of these statistics for cross-representation mode choices in generative action policies, including the both-valid conditioning and the interpretation of the bound as a modal-capacity certificate.  An information-theoretic normalization could be developed in parallel; the second-moment form is retained because it yields a simple per-condition boundary and connects directly to kappa, $R_K$, and the MCC literature \citep{thompson1988,simwright2005,vinh2010}.
\end{remark}

\subsection{Equality conditions and empirical coverage}
\label{sec:tight}

Bound~\eqref{eq:bound} combines Cauchy--Schwarz with $G(m)\le1-1/K$.  The upper boundary $\Delta=C_K$ requires perfect same-label agreement with matching branch marginals; the global corner $\Delta=1-1/K$ additionally requires uniform marginals.  At $K=2$, the oracle and window-token families approach both conditions simultaneously.

\paragraph{Empirical coverage of the theorem}
The $K$-ary form has support at $K\in\{2,3,4,8\}$, all of it from the non-robotic testbed (Section~\ref{sec:toy}): 186 runs $\times$ 75 conditions, 135 threshold/NFE re-evaluations, and 153 skewed-prior runs, with zero violations of \eqref{eq:bound} and \eqref{eq:tight} throughout. The robotic assignment rule is a projection sign, so robotic evidence is confined to $K=2$; $K>2$ in those environments requires replacing the rule, not a parameter.

\subsection{An auxiliary latent outside the target's construction is pinned to the floor}
\label{sec:lemma}

The shared-intent construction can be characterized at the population optimum. The relevant distinction is whether the shared variable participates in constructing the CFM interpolation state or regression target, or is merely appended to the velocity-field input.

\paragraph{Notation}
For branch $i$, let $U=x_1^i-x_0^i$ be the regression target, $S=(x_t^i,t,E(c))$ the remaining velocity-field input, and $Z$ the auxiliary latent. The \emph{training tuple} comprises the random variables used to construct $(S,U)$: the condition, data endpoint, source noise, and interpolation time. Source noise is therefore part of the tuple; an auxiliary latent is not.

\paragraph{Training assumptions}
The population statement needs two conditions:
\begin{enumerate}\itemsep0pt\parskip0pt
\item[(A1)] \emph{Mean irrelevance}: $\mathbb{E}[U\mid S,Z]=\mathbb{E}[U\mid S]$ almost surely.
\item[(A2)] \emph{Bayes attainability}: $\mathbb{E}\lVert U\rVert^2<\infty$, the population objective is the squared CFM risk in \eqref{eq:cfm}, and the function class attains the conditional-mean predictor.
\end{enumerate}
In the continuous-prior experiments, (A1) follows from drawing $Z$ independently of the full training tuple $(c,x_0^1,x_0^2,x_1,t)$. The word \emph{auxiliary} refers to this design: $Z$ is appended to the velocity-field input but does not generate source noise, choose the interpolation path, partition the data, or reweight the loss. Shared source noise and data-derived tokens are therefore different cases even before their empirical effects are considered.

\begin{lemma}[$Z$-blindness of the CFM optimum]
\label{lem:zblind}
Under (A1)--(A2), any population minimizer of \eqref{eq:cfm} satisfies, almost surely under the training input distribution,
\begin{equation}
\label{eq:zblind}
v_i^*(x,t,c,z)
=\mathbb{E}\bigl[U\mid S=(x,t,c),Z=z\bigr]
=\mathbb{E}\bigl[U\mid S=(x,t,c)\bigr],
\end{equation}
so $v_i^*$ is independent of $z$ on that support.
\end{lemma}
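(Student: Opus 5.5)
The plan is the standard $L^2$ projection argument for squared-loss regression, followed by a substitution of (A1). I would first rewrite the population risk \eqref{eq:cfm} for branch $i$ as $\mathbb{E}\lVert v_i(S,Z)-U\rVert^2$, where $S=(x_t^i,t,E(c))$. This is legitimate because the velocity field sees the training tuple only through $(S,Z)$. By (A2), $\mathbb{E}\lVert U\rVert^2<\infty$, so the conditional mean $m(S,Z):=\mathbb{E}[U\mid S,Z]$ exists in $L^2$. For any measurable $v$ with finite risk, expanding the square around $m$ gives the Pythagorean decomposition
\[
\mathbb{E}\lVert v(S,Z)-U\rVert^2=\mathbb{E}\lVert v(S,Z)-m(S,Z)\rVert^2+\mathbb{E}\lVert m(S,Z)-U\rVert^2 .
\]
The cross term vanishes by the tower property, since $v-m$ is $\sigma(S,Z)$-measurable and $\mathbb{E}[U-m\mid S,Z]=0$.

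From this decomposition, the risk is at least $\mathbb{E}\lVert m-U\rVert^2$, with equality if and only if $v=m$ almost surely under the law of $(S,Z)$. Bayes attainability in (A2) says $m$ lies in the function class, so the infimum over the class is attained. Every population minimizer $v_i^*$ therefore achieves this lower bound, and hence $v_i^*(S,Z)=m(S,Z)$ almost surely. This gives the first equality in \eqref{eq:zblind}, read at $S=(x,t,c)$, $Z=z$ on the support of the training input distribution.

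Invoking (A1) then replaces $m(S,Z)$ by $\mathbb{E}[U\mid S]$ almost surely, which is the second equality. The right-hand side is a function of $S$ alone, so $v_i^*$ does not depend on $z$ on that support. I would also remark that in the continuous-prior design, (A1) follows because $Z$ is independent of the tuple $(c,x_0^i,x_1,t)$, and $(S,U)$ is a measurable function of that tuple. Consequently $Z$ is independent of $(S,U)$, so conditioning on $Z$ leaves the conditional law of $U$ given $S$ unchanged.

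The main obstacle is bookkeeping rather than analysis. The identities hold only almost surely under the training input law, so the conclusion must not be stated pointwise for all $z$: off the support, the minimizer is unconstrained, and this is exactly why the lemma says ``on that support.'' A second point needs care. Because $E(c)$ is learned, $S$ itself depends on parameters. I would treat the encoder as fixed when taking the minimizer over velocity fields, or else note that the decomposition above holds for each fixed encoder and therefore for the joint minimizer as well. The argument does not need to address uniqueness of the minimizer beyond almost-sure equality.
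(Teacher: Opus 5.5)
Your proposal is correct and takes the same route as the paper: identify the squared-loss Bayes predictor with the conditional mean $\mathbb{E}[U\mid S,Z]$, use (A2) for attainability and (A1) to drop $Z$. Your Pythagorean decomposition is the projection identity the paper invokes in Corollary~\ref{cor:zblind}(a). The one step you assert rather than prove is that independence of $Z$ from the training tuple yields (A1); the paper proves this separately in Lemma~\ref{lem:dropcond} by a $\pi$-system and monotone-class argument.
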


\begin{proof}
Under squared loss, the Bayes predictor is the conditional expectation. Assumption (A2) makes that predictor attainable, and (A1) removes the dependence on $Z$. The standard conditional-expectation identity used when (A1) follows from joint independence is stated in \ref{app:proofs}.
\end{proof}

The conclusion is an almost-sure statement on the training input distribution. Applying it during sampling therefore requires the sampled ODE states to remain on a support where the same equality holds.

\paragraph{From a blind velocity field to a measured zero}
The population statement becomes a sampling statement only under the following additional conditions:
\begin{enumerate}\itemsep0pt\parskip0pt
\item[(A3)] \emph{Inference support}: \eqref{eq:zblind} holds along the ODE states visited at inference, and $Z$ enters neither the initial condition nor the solver.
\item[(A4)] \emph{Branch-local randomness}: conditional on $c$, source noise, solver state, dropout, posterior draws, and randomized post-processing are private to each branch; neither branch reads the other's state, output, or random state.
\item[(A5)] \emph{Branch-local scoring}: mode and validity are functions of one branch's trajectory, and $\mathrm{obs}$ and $\mathrm{BL}$ use the same non-empty both-valid population and weights.
\end{enumerate}
Training-seed variation is treated separately from this per-checkpoint statement.

\begin{corollary}
\label{cor:zblind}
Under (A1)--(A2):

\emph{(a) Excess-risk decomposition.} For any measurable $v(S,Z)$, define
$\bar v(S)=\mathbb{E}[v(S,Z)\mid S]$. Then
\begin{equation}
\label{eq:excess}
\mathcal{L}(v)-\mathcal{L}(v^*)
=\mathbb{E}\!\left[\operatorname{tr}\operatorname{Var}\!\left(v(S,Z)\mid S\right)\right]
+\mathbb{E}\bigl\lVert\bar v(S)-v^*(S)\bigr\rVert^2.
\end{equation}

\emph{(b) Branch conditional independence.} If (A3)--(A5) also hold, then $M_q\perp M_T\mid c$. Consequently, $\mathrm{obs}(c)=\mathrm{BL}(c)$ and $\Delta(c)=0$ for every eligible condition.
\end{corollary}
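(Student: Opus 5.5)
For part~(a) I will run the usual bias--variance argument in two stages, with $\mathcal{L}(v)=\mathbb{E}\lVert v(S,Z)-U\rVert^2$ and $v^*(S)=\mathbb{E}[U\mid S]$. By Lemma~\ref{lem:zblind}, this $v^*$ is also $\mathbb{E}[U\mid S,Z]$.

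\emph{Stage one: conditioning on $(S,Z)$.} Expand $\lVert v-U\rVert^2=\lVert v-v^*\rVert^2+2\langle v-v^*,v^*-U\rangle+\lVert v^*-U\rVert^2$. The cross term vanishes after conditioning on $(S,Z)$, because $v-v^*$ is $(S,Z)$-measurable and $\mathbb{E}[v^*-U\mid S,Z]=0$ by (A1). This step needs $\mathbb{E}\lVert v\rVert^2<\infty$. Otherwise $\mathcal{L}(v)=\infty$, and both sides of \eqref{eq:excess} are infinite; I will state that convention. Stage one gives $\mathcal{L}(v)-\mathcal{L}(v^*)=\mathbb{E}\lVert v(S,Z)-v^*(S)\rVert^2$.

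\emph{Stage two: conditioning on $S$.} Split $v-v^*=(v-\bar v)+(\bar v-v^*)$. The second piece is $S$-measurable, and $\mathbb{E}[v-\bar v\mid S]=0$, so the cross term again vanishes. The remaining terms are $\mathbb{E}\,\mathbb{E}[\lVert v-\bar v\rVert^2\mid S]=\mathbb{E}\operatorname{tr}\operatorname{Var}(v\mid S)$ plus $\mathbb{E}\lVert\bar v-v^*\rVert^2$, which is \eqref{eq:excess}. The only delicate point is measurability of $\bar v$, which I will take as a regular version of the conditional expectation.

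For part~(b) the goal is $M_q\perp M_T\mid c$; the equalities $\mathrm{obs}=\mathrm{BL}$ and $\Delta=0$ then follow at once. I will show that each branch's output is a measurable function of $c$ and that branch's private randomness only.

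\emph{Branch outputs.} By (A3), the integrated velocity field along the visited ODE states does not depend on $z$, and $Z$ enters neither the initial condition nor the solver. So the trajectory $X^i=\Phi_i(c,\xi_i)$, where $\xi_i$ collects the branch-local randomness of (A4): source noise, dropout and post-processing. By (A5), $M_i=g_i(X^i)$ is a function of $\Phi_i(c,\xi_i)$ alone. Since $\xi_q\perp\xi_T\mid c$ under (A4), we get $M_q\perp M_T\mid c$, including the \texttt{invalid} state.

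\emph{Both-valid table.} Conditioning on the product event $\{M_q\neq I\}\cap\{M_T\neq I\}$ preserves independence. Hence $P_{jk}=q_kp_k$ on the both-valid population, so $p_o=p_e$ and $\Delta=0$. Equivalently, this is the special case of Theorem~\ref{thm:region}(i) in which $a,b$ are constant in $z$, so every covariance vanishes. (A5) guarantees that $\mathrm{obs}$ and $\mathrm{BL}$ are computed on the same non-empty population, so the subtraction is well defined.

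\emph{Main obstacle.} I expect the main difficulty in part~(b) to be the transfer from an almost-sure statement under the training law to the inference-time trajectories. That is exactly what (A3) is assumed to supply, so I will invoke it explicitly rather than try to derive it.
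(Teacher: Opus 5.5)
Your proposal is correct and follows the paper's proof essentially step for step. For part~(a), your stage one is the squared-loss projection identity, using (A1) to remove the cross term, and your stage two is the conditional bias--variance decomposition given $S$. For part~(b), you remove $Z$ from both flows via Lemma~\ref{lem:zblind} and (A3), then use branch-local randomness and branch-local scoring to obtain conditional independence that survives restriction to the both-valid product event. There is one index slip to fix: the both-valid table factorizes as $P_{jk}=q_jp_k$, and its diagonal gives $p_o=p_e$.
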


\begin{proof}
For part (a), the squared-loss projection identity gives
$\mathcal{L}(v)-\mathcal{L}(v^*)=\mathbb{E}\lVert v(S,Z)-v^*(S)\rVert^2$. Conditioning on $S$ and applying the conditional bias--variance decomposition yields \eqref{eq:excess}; no independence between $S$ and $Z$ is required. For part (b), Lemma~\ref{lem:zblind} removes $Z$ from both sampled flows along the inference paths. Branch-local random sources then make the two trajectories conditionally independent, and branch-local mode and validity maps preserve that independence on the both-valid population.
\end{proof}

Equation~\eqref{eq:excess} shows that variation with $Z$ at fixed $S$ contributes excess risk and vanishes at the population optimum. It does not prescribe the optimization trajectory. Empirically, however, the saved checkpoints show a monotone decline in $Z$-sensitivity from $0.398$ to $0.00153$ over 50{,}000 steps, a factor of about 260 (Section~\ref{sec:floor}). A shared variable can acquire a role by violating mean irrelevance---for example, through a data-dependent posterior or a trajectory partition---or by adding an objective term that rewards cross-branch agreement. These two routes organize the mechanism study in Section~\ref{sec:atlas}.

\paragraph{Scope}
The result is a population squared-loss statement. Finite optimization and restricted function classes can leave residual $Z$-dependence. A consistency regularizer changes the objective; a data-dependent posterior or partition can violate mean irrelevance; shared source noise belongs to the interpolation construction rather than to the auxiliary case; and shared inference-time randomness invalidates the branch-local sampling argument. Parameter sharing by itself is neither required nor sufficient for dependence: a common deterministic trunk can coexist with private sampling, whereas disjoint parameters do not rule out a shared solver or post-processing step. Partially shared generative states are outside the experiments considered here.

\section{Experimental setup}
\label{sec:setup}

\subsection{Two environments}
\label{sec:envs}

\textbf{BimodalBypass (v2).} RoboSuite 1.4.1, Panda 7-DoF: the Lift task plus a central wall and a goal region (geometry in \ref{app:prereg}). Both bypass paths are scripted, and an initial state enters the dataset only if both succeed from it: 500 pairs / 1000 trajectories, split 350/75/75 by pair, 127{,}707 training windows. Within the $H{=}16$ branching window the modes are $143.68\pm0.66$\,mm apart (CV 0.0046; non-lateral component 0.77\,mm); full trajectories share their endpoint (0.00025\,m). Threshold $\tau=0.11$.

\textbf{RotatingBypass (v3).} The v2 construction has a geometric shortcut: the mode label is readable from one fixed coordinate. Scoring 39 scalar features by pooled AUROC, v2's top four are all summaries of $\mathrm{eef}_y$, each at exactly 1.0000. v3 removes this shortcut: each condition draws an azimuth $\theta\sim U(0,\pi)$ and radius $R\sim U(0.170,0.215)$\,m; branch A uses $\psi_A=\theta$, branch B $\psi_B=(\theta+\pi/2)\bmod\pi$; offsets act on $(y,z-0.955)$ with an anisotropic vertical scaling of 0.68; the transport yaw is $0.45\cos\psi$; the obstacle becomes a suspended beam. 240 pairs from 240 attempts, 0 rejections, split 168/36/36. The strongest single-coordinate AUROC falls to \textbf{0.5586} (below the prespecified 0.75 threshold), and mode separation becomes $162.43\pm16.50$\,mm (CV 0.1016), no longer a constant---with $\theta\sim U(0,\pi)$ each branch's lateral sign is 50/50, so pooled AUROC is $1/2$ by construction.

Two properties of v3 affect interpretation. There is no global ``left/right'': a condition's branches are defined by its own azimuths, and which member of the unordered pair the diagnostics call A is equiprobable---a symmetry of the generative process, not relabeling. What was designed away is the fixed separation direction and constant spacing; the separation direction still does not sweep the full circle, staying within $\pm34^\circ$ of the lateral axis ($-33.8^\circ$ to $+33.6^\circ$, sd $19.2^\circ$; lateral share of magnitude $\ge83.1\%$, mean 94.5\%), and removing this remaining bias would require a different task geometry (Section~\ref{sec:tautology}).

\emph{Comparability.} The same nominal $\tau$ is not the same stringency (v3's decision quantity is whitened, magnifying the vertical by $1/0.68=1.47$): $\tau=0.11$ in v3 gives validity 0.999, and raw comparison would erase v2's coverage cost. We therefore match by coverage: thresholds at which the independent family has equal validity---$\tau=0.11$ in v2 (0.773), $\tau=0.18$ in v3 (0.738). All v3 coordination and $\rho$ numbers use this matched point; the stratified AUROC of Section~\ref{sec:tautology} uses the reference $\tau=0.11$ stated there; the two operating points are never mixed. Threshold sensitivity: \ref{app:sensitivity}.

\subsection{Models, training, inventory}
\label{sec:models}

The condition encoder is 256-dimensional. Each velocity field is a four-layer, 256-unit MLP, and the two action branches share no velocity-field parameters. Models are trained for 50{,}000 steps with batch size 128 and learning rate $3\times10^{-4}$. The 16-dimensional shared latent $z_s$ is encoded to 32 dimensions, concatenated with the condition representation, and supplied to both branches at every ODE step; nonzero input gradients confirm that both fields can respond to it. The pose model adds a third, separately parameterized velocity field over the continuous 6D rotation representation.

The main mechanism study contains 68 training runs in the two robot environments (49 in v2 and 19 in v3). Table~\ref{tab:main} reports the principal v2 families, the remaining grid cells appear in the mechanism-specific analyses, and the v3 runs are summarized in Table~\ref{tab:v3} and Section~\ref{sec:third-rep}. The complete inventory, including the additional token and ensemble runs, is provided in \ref{app:prereg}.

\subsection{Frozen evaluation protocol}
\label{sec:gate}

All mechanism families use the same evaluation protocol; family-specific evaluators differ only in how the shared input is drawn. Each v2 run is evaluated on 75 paired initial-state conditions, with 64 samples and a 20-draw Monte Carlo bank per condition (96{,}000 samples); v3 uses 36 conditions and 46{,}080 samples. We integrate with 20 Euler steps. A shared variable is drawn once per sample, held fixed along the ODE, and supplied to both branches. Bootstrap intervals resample paired conditions with 2000 draws and seed 20260802.

\paragraph{Aggregation order}
A pooled baseline can introduce an apparent effect through condition-level marginal co-movement (Remark~\ref{rmk:simpson}), so the order is part of the protocol. From each condition's both-valid contingency counts we form $\mathrm{obs}_c,\mathrm{BL}_c,\Delta_c,C_c$ using that condition's own marginals; the run value is their $N_{\mathrm{BV}}$-weighted mean, with $\rho$ the ratio of those weighted means and never the mean of per-condition ratios, which would weight near-zero-capacity conditions like the rest; the family value is the mean over seeds, and where a ratio of means is quoted instead the aggregation convention is stated at the point of use (Table~\ref{tab:v3}). No stratum above the condition---seed, checkpoint, environment---is pooled before the baseline is formed, and every bootstrap replicate re-executes the same order, recomputing marginals, $\Delta$ and $C$ rather than holding a denominator fixed. With only 36 conditions the pooled-baseline small-sample bias reaches 0.09, so all v3 results use per-condition baselines (in v2 the two differ by $\le0.004$); Section~\ref{sec:floor} shows a case where this decides the conclusion.

\paragraph{Sample sizes}
The median number of both-valid pairs per condition is 1042 in v2 and 966 in v3; the respective 5th--95th percentile ranges are 581--1279 and 130--1274. Under the $N_{\mathrm{BV}}$ weighting used here, finite-pair plug-in bias shrinks a nonzero $\Delta$ by 0.10\% for the median run and by at most 0.43\%, below the reporting precision.

\paragraph{Marginal accounting}
Exact evaluation requires the mode marginals, mismatch rate, and validity event to be computed from the same both-valid contingency table. Of the 68 trained runs, 67 carry per-condition coordinates (\ref{app:prereg} inventories both counts); seventeen of these store the required both-valid marginals, and the remaining 50 store head-valid marginals only. The 49 runs of the preregistered seed expansion (Table~\ref{tab:seeds}) store them natively. Re-estimating the 17 recoverable runs removes every apparent excursion above $C$ and $C_{K,\mathrm{tight}}$ while leaving the mechanism ordering unchanged (Section~\ref{sec:reestimation}). Each supplementary row states which marginal convention is available.

\subsection{Equivalence-testing protocol}
\label{sec:tost}

The population result predicts an exact zero, whereas finite models can establish only practical equivalence. We therefore apply two one-sided tests (TOST; \citealp{schuirmann1987,lakens2017}) to the normalized effect $\rho=\Delta/C$. Testing $\rho$ rather than the raw effect prevents a collapsed, near-zero capacity from making any raw effect appear negligible; where $C$ itself approaches zero, the equivalence test is correspondingly uninformative.

The structural-zero controls give $\max(|\hat\rho|+1.645\,\mathrm{se})=1.44\%$, while the weakest mechanism treated as nonzero has $\rho=5.93\%$. We therefore use the prespecified margin $\delta=2\%$ and report the margin-free width $\hat d=\max(|\mathrm{CI}_{\mathrm{lo}}|,|\mathrm{CI}_{\mathrm{hi}}|)$ alongside it. Decisions use 90\% percentile bootstrap intervals, as required for two one-sided tests at $\alpha=0.05$; Fig.~\ref{fig:forest} displays the corresponding 95\% descriptive intervals. Both are computed from the same 2000 condition-level resamples. Sensitivity to wider margins is reported in the supplement.

\subsection{Exact-accounting re-estimation and summary of the main results}
\label{sec:reestimation}

Table~\ref{tab:caliber} recomputes every estimand from a single both-valid contingency table for the 17 runs whose artifacts retain the necessary marginals.

\begin{table}[t]
\centering
\caption{Exact-accounting re-estimation: head-valid versus both-valid marginals from the same contingency tables, for the 17 of 67 runs whose artifacts store both. Run-level $N_{\mathrm{BV}}$-weighted mean of per-condition quantities, then mean over seeds. Last column: per-condition excursions above $C$ and above $C_{K,\mathrm{tight}}$, pooled over the family.}
\label{tab:caliber}
\footnotesize
\setlength{\tabcolsep}{4pt}
\begin{tabular}{@{}l c rrr rrr c@{}}
\toprule
& & \multicolumn{3}{c}{head-valid marginals} & \multicolumn{3}{c}{exact both-valid marginals} & excursions \\
\cmidrule(lr){3-5}\cmidrule(lr){6-8}
Family & $n$ & $\Delta$ & $C$ & $\rho$ & $\Delta$ & $C$ & $\rho$ & HV $\to$ BV \\
\midrule
C\ \ $z_s$ dim-0 anchor & 3 & $+0.0020$ & 0.4858 & $+0.4\%$ & $+0.0020$ & 0.4857 & $+0.4\%$ & 0/0 $\to$ 0/0 \\
D\ \ continuous $z_s$ prior & 3 & $-0.0006$ & 0.4708 & $-0.1\%$ & $-0.0009$ & 0.4708 & $-0.2\%$ & 0/0 $\to$ 0/0 \\
E\ \ posterior (3 KL scales) & 5 & $+0.0260$ & 0.0256 & $+101.7\%$ & $+0.0223$ & 0.0223 & $+100.0\%$ & 301/303 $\to$ 0/0 \\
H\ \ episode token, uniform & 3 & $+0.4481$ & 0.4885 & $+91.7\%$ & $+0.4339$ & 0.4766 & $+91.1\%$ & 3/107 $\to$ 0/0 \\
I\ \ episode token, learned prior & 3 & $+0.4876$ & 0.4907 & $+99.4\%$ & $+0.4710$ & 0.4810 & $+97.9\%$ & 68/217 $\to$ 0/0 \\
\bottomrule
\end{tabular}
\end{table}

Exact both-valid accounting removes all apparent bound excursions in these 1275 conditions. The episode-token effects remain near the boundary ($\rho=91.1\%$ and $97.9\%$), the floor estimates are unchanged to reporting precision, and the posterior family becomes exactly degenerate: $\Delta=C$ at capacities between $0.0001$ and $0.0625$. The 50 runs lacking both-valid marginals remain flagged as approximate; comparison with the recoverable runs suggests that their normalized effects may be slightly high, without changing any occupancy classification.

\paragraph{Evidence status of the main claims}
Table~\ref{tab:claims} records not only what was observed but how each statement is licensed. The labels distinguish algebraic identities, empirical support, preregistered tests, inconclusive seed-level comparisons, and analyses designed after an outcome was known.

\begin{table}[t]
\centering
\caption{Evidence status of the headline claims. A status applies only within the regime stated in the evidence column.}
\label{tab:claims}
\scriptsize
\setlength{\tabcolsep}{3.5pt}
\begin{tabular}{@{}p{0.28\textwidth} p{0.36\textwidth} p{0.29\textwidth}@{}}
\toprule
Claim & Evidence & Evidential status \\
\midrule
$|\Delta(c)|\le C_K(c)$ & Algebraic bound; no violation in 1275 exact-accounting robot conditions or in the testbed evaluations & \textbf{Established}: identity, not an empirical finding \\
Auxiliary latent outside the CFM construction & Two robot environments, three seeds each; TOST at $\delta=2\%$; $\hat d=1.16/1.66\%$ & \textbf{Supported as finite-model equivalence}; the exact zero is the population prediction under (A1)--(A2), (A3)--(A5) \\
Shared source noise & v2 eight-seed mean $-0.5\%$ (sd $3.4$, four signs each way); testbed factorial at ten seeds per cell: interaction range 90.1 pp, permutation $p=10^{-4}$ & \textbf{Map dependence preregistered-and-confirmed}; seed-unstable on the robots; quote with seed and representation map \\
Endpoint consistency coordinates without a data-derived partition & $\rho=50.3\%$ in v2 and $63.2\%$ in v3; valid-pair rate $0.768\to0.525$ in v2 & \textbf{Supported in the tested robot regimes}, with the validity cost part of the result \\
Partition tokens reach the robot-side high-capacity corner & Robot-side tokens reach $\rho=91.1$--$100\%$; an amortized posterior reaches the same region on the balanced testbed & \textbf{Supported as a robust sufficient robot-side route}, not as a necessary condition in general \\
Benign multimodality raises residual false alarms & Residual surcharge $+1.57$ pp, CI $[+0.70,+2.46]$; observation-side control $-0.01$ pp & \textbf{Preregistered and supported} on this benchmark and operating point \\
Coordination shrinks the multimodal false-alarm surcharge & Coordinated minus uncoordinated gap difference $-0.03$ pp, CI $[-1.00,+0.86]$ & \textbf{Preregistered; criterion not met} in this regime; the interval leaves both no effect and a partial reduction open \\
Coordination improves failure detection & Three-seed difference $+0.0052\pm0.0288$; the ranking reverses across seeds & \textbf{Inconclusive at seed level}; neither refuted nor shown equivalent \\
Low capacity and token entropy explain the external null & $C=0.0055$; token entropy $0.011$--$0.013$ nats; both measured on the frozen traces after the monitor result & \textbf{Post hoc explanatory analysis}; measured and falsifiable, but not a preregistered success and not sufficient to identify the binding bottleneck \\
\bottomrule
\end{tabular}
\end{table}

The last three rows are deliberately graded apart. A prediction fixed before scoring is graded against its registered criterion, and a comparison whose interval still spans meaningful effects in the predicted direction is reported as criterion-not-met rather than refuted; a contrast smaller than its between-seed spread is inconclusive; and a later capacity analysis may explain the regime without becoming confirmatory evidence for the original prediction.

\section{Four occupancies of the attainable region}
\label{sec:atlas}

\subsection{Main tables and the occupancy classification}
\label{sec:main-tables}

Table~\ref{tab:main} summarizes the mechanism families under the frozen BimodalBypass protocol. Table~\ref{tab:v3} reports the corresponding subset in RotatingBypass, Fig.~\ref{fig:atlas} presents the v2 results as a heat map, and Fig.~\ref{fig:region} places all 67 robot-side runs in the $(C,\Delta)$ plane.

\begin{figure}[pos=tbp]
  \centering
  \includegraphics[width=\textwidth]{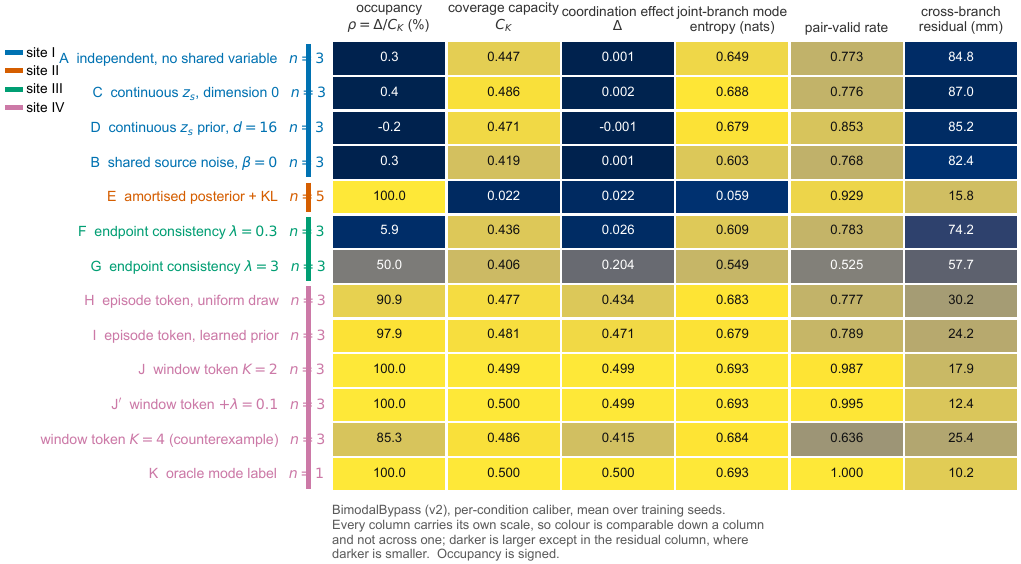}
  \caption{Mechanism summary for BimodalBypass, averaged over training seeds. The first two columns give capacity and coordination effect; the remaining columns report modal entropy, valid-pair rate, and cross-branch residual. Each column has its own scale. The near-origin posterior family illustrates why the normalized effect must be read with capacity: $\rho=100\%$ accompanies $C=0.022$.}
  \label{fig:atlas}
\end{figure}

\begin{table}[t]
\centering
\caption{Mechanism study in the BimodalBypass (v2) robot environment, using per-condition estimates; $n$ is the number of training runs and $K=2$. Sites I--IV denote the floor, near-origin boundary, interior, and high-capacity corner. Families H and I use exact both-valid marginals; the remaining rows use the stored convention stated in the supplement.}
\label{tab:main}
\scriptsize
\setlength{\tabcolsep}{2.6pt}
\begin{tabular}{@{}ll l r rr r@{\hskip 5pt} l r r r c@{}}
\toprule
\# & Family & Shared & $n$ & $\mathrm{obs}$ & $\mathrm{BL}$ & $\Delta$ & $C$ & $\rho$ & Valid & Res.\ (mm) & Site \\
\midrule
A & independent, $\beta1\,\lambda0$ & --- & 3 & 0.4826 & 0.4839 & $+0.0014$ & 0.4474 & $+0.3\%$ & 0.7729 & 84.79 & I \\
B & shared source noise, $\beta0\,\lambda0$ & noise & 3 & 0.4362 & 0.4369 & $+0.0007$ & 0.4185 & $+0.2\%$ & 0.7679 & 82.42 & I$^{\ddagger}$ \\
C & continuous $z_s$, dim-0 anchor & --- & 3 & 0.4921 & 0.4941 & $+0.0020$ & 0.4857 & $+0.4\%$ & 0.7758 & 87.02 & I \\
D & continuous $z_s$ prior, $d{=}16$ & $z{\sim}\mathcal{N}$ & 3 & 0.4843 & 0.4834 & $-0.0009$ & 0.4708 & $-0.2\%$ & 0.8526 & 85.18 & I \\
E & continuous $z_s$ posterior (3 KL) & $z{\sim}q$ & 5 & 0.0000 & $=C$ & $+(0.0001$--$0.0625)$ & 0.0001--0.0625 & $+100.0\%$ & 0.9292 & 15.78 & II \\
F & endpoint consistency $\lambda0.3$ (+B) & loss & 3 & 0.4235 & 0.4492 & $+0.0257$ & 0.4363 & $+5.9\%$ & 0.7830 & 74.17 & III \\
--- & endpoint consistency $\lambda1$ ($\beta0$) & loss & 1 & 0.4107 & 0.4902 & $+0.0795$ & 0.4663 & $+17.0\%$ & 0.4975 & 73.48 & III \\
G & endpoint consistency $\lambda3$ (+B) & loss & 3 & 0.2379 & 0.4420 & $+0.2042$ & 0.4061 & $+50.3\%$ & 0.5254 & 57.66 & III \\
--- & endpoint consistency $\lambda3$, $\beta0.1$ & loss & 1 & 0.2921 & 0.4385 & $+0.1464$ & 0.4330 & $+33.8\%$ & 0.2370 & 69.57 & III \\
--- & endpoint consistency $\lambda3$, $\beta1$ & loss & 1 & 0.5352 & 0.5396 & $+0.0044$ & 0.4176 & $+1.0\%$ & 0.1814 & 75.42 & I \\
H & episode token $K2$, uniform draw & token & 3 & 0.0433 & 0.4772 & $+0.4339$ & 0.4766 & $+91.1\%$ & 0.7769 & 30.16 & IV \\
I & episode token $K2$, learned prior & token & 3 & 0.0101 & 0.4811 & $+0.4710$ & 0.4810 & $+97.9\%$ & 0.7894 & 24.22 & IV \\
J & unsup.\ window token $K2$, uniform & token & 3 & 0.0002 & 0.4996 & $+0.4994$ & 0.4994 & $+100.0\%$ & 0.9871 & 17.87 & IV \\
J$'$ & window token $K2$ + $\lambda0.1$ & token & 3 & 0.0001 & 0.4996 & $+0.4995$ & 0.4996 & $+100.0\%$ & 0.9954 & 12.39 & IV \\
--- & window token $K4$ (one inactive token) & token & 3 & 0.0730 & 0.4879 & $+0.4149$ & 0.4863 & $+85.3\%$ & 0.6356 & 25.40 & IV$^{\ast}$ \\
K & oracle mode label & label & 1 & 0.0000 & 0.4996 & $+0.4996$ & 0.4996 & $+100.0\%$ & 0.9999 & 10.18 & IV \\
\bottomrule
\end{tabular}
\end{table}

\emph{Configuration note.} The table is a mechanism inventory rather than a single-variable ablation. A/C/D/E use $\beta{=}1,\lambda{=}0$; B/F/G use shared source noise with the stated $\lambda$; H/I include $\lambda{=}0.1$ and token dropout; J uses $\lambda{=}0$; J$'$ adds $\lambda{=}0.1$; and K includes weak source-noise sharing and consistency. The separate $\lambda$ sweep shows that these auxiliary settings contribute at most $0.033$ to $\Delta$, well below the token effects of $0.43$--$0.50$.

\begin{table}[t]
\centering
\caption{RotatingBypass (v3) results at the coverage-matched threshold $\tau=0.18$. Values are per-condition estimates averaged over seeds. The $\rho$ column averages seed-level ratios; $^{\dagger}$ denotes a head-valid estimate slightly above one.}
\label{tab:v3}
\scriptsize
\setlength{\tabcolsep}{3.5pt}
\begin{tabular}{@{}l r rr l rrrr@{}}
\toprule
Family & $n$ & $\mathrm{obs}$ & $\mathrm{BL}$ & $\Delta$ (per seed) & $C$ & $\rho$ & Valid & Resid.\ (mm) \\
\midrule
A independent & 3 & 0.4811 & 0.4811 & $0.0000$ ($-0.0005/{-}0.0016/{+}0.0020$) & 0.4700 & $0.0\%$ & 0.7382 & 98.5 \\
B shared source noise & 3 & 0.4567 & 0.4652 & $+0.0085$ ($+0.0079/{+}0.0103/{+}0.0074$) & 0.4555 & $+1.9\%$ & 0.7501 & 95.7 \\
D continuous $z_s$ prior & 3 & 0.4723 & 0.4748 & $+0.0025$ ($+0.0019/{+}0.0019/{+}0.0037$) & 0.4636 & $+0.5\%$ & 0.6474 & 96.9 \\
G endpoint consistency $\lambda{=}3$ & 3 & 0.1709 & 0.4458 & $+0.2748$ ($+0.3097/{+}0.2090/{+}0.3057$) & 0.4319 & $+63.2\%$ & 0.4449 & 60.8 \\
H episode token, uniform draw & 2 & 0.0784 & 0.4708 & $+0.3923$ ($+0.4232/{+}0.3614$) & 0.4661 & $+84.1\%$ (88.6/79.6) & 0.7056 & 38.9 \\
K oracle & 1 & 0.0000 & 0.4994 & $+0.4994$ & 0.4964 & $+100.0\%^{\dagger}$ & 0.8704 & 13.79 \\
\bottomrule
\end{tabular}
\end{table}

v3 does not cover the posterior family, the window tokens, the full grid, or the learned-prior family; the boundary-at-origin occupancy in particular has no v3 replication.

Every row occupies one of four characteristic regions. \emph{I, the floor} (A, C, D and the $\beta=1$ row) combines high capacity with $|\rho|\le1.0\%$; family B lies nearby for the map-dependent reason developed in Section~\ref{sec:basin}. \emph{II, the boundary near the origin} (E) has $\rho=100\%$ only because both the effect and the capacity have collapsed. \emph{III, the interior climb} (F/G and the $\beta=0$ path) reaches $\rho=50.3\%$ in v2 and $63.2\%$ in v3 at nearly constant $C$, while valid-pair rate decreases. \emph{IV, the high-capacity corner} (H--K) contains the robot-side partition-token families, with $C\approx0.48$--$0.50$ and $\rho=91.1$--$100\%$. These locations describe the tested robot mechanisms, not necessary conditions for the region: the balanced testbed later places an amortized posterior in the same corner. The empirical points also do not trace a Pareto frontier; the consistency path bends away from the boundary as it rises, and the middle of the boundary remains largely unoccupied.

\subsection{Class I: the floor (families A, C, D)}
\label{sec:floor}

\begin{figure}[pos=tbp]
  \centering
  \includegraphics[width=\textwidth]{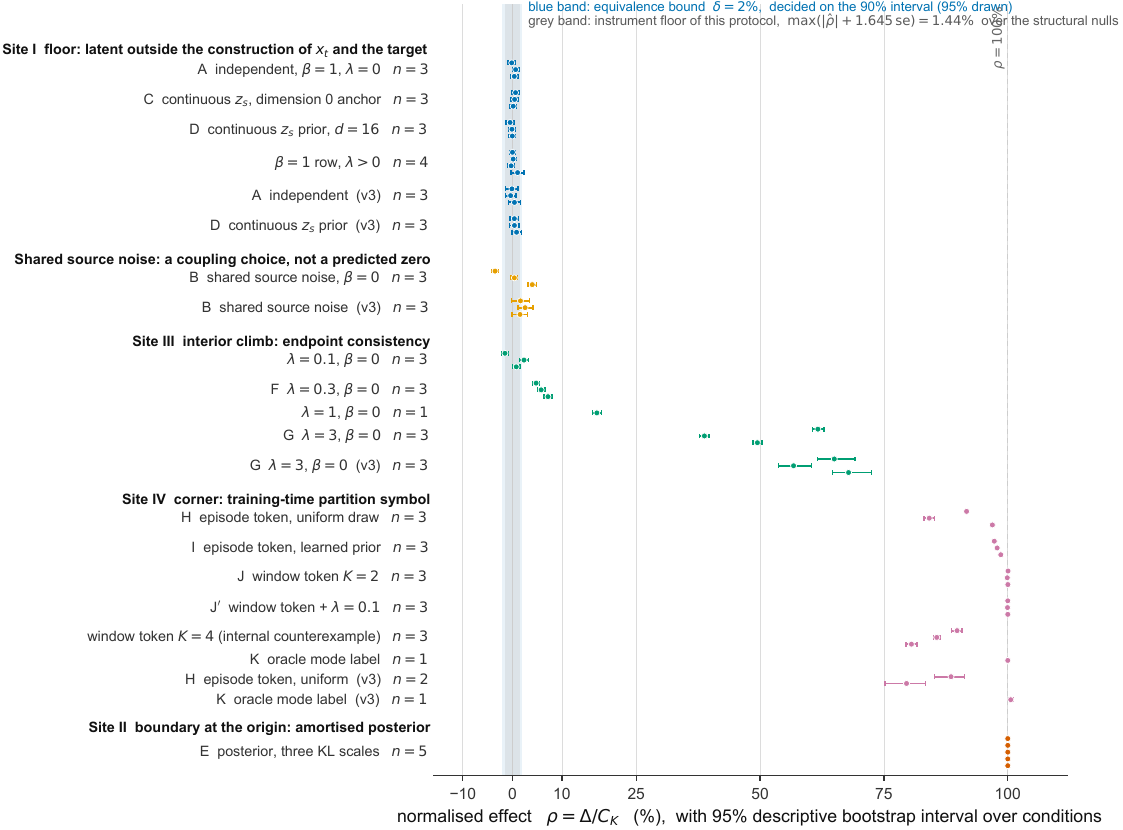}
  \caption{Run-level normalized effects $\rho=\Delta/C$. Points are shown against zero, the measured structural-null range, and the prespecified equivalence band $\delta=\pm2\%$. Error bars are 95\% condition-bootstrap intervals; TOST decisions use the corresponding 90\% intervals. Shared source noise is shown separately because it couples the sampled flows and is not covered by Lemma~\ref{lem:zblind}.}
  \label{fig:forest}
\end{figure}

The continuous-$z_s$ prior family, in which the same 16-dimensional latent is fed to both velocity fields at every ODE step, has $\Delta=-0.0009$ and $\rho=-0.2\%$ in v2. The dimension-zero anchor and independent family provide structural zeros for calibrating the equivalence test: across the two robot environments, $\max(|\hat\rho|+1.645\,\mathrm{se})=1.44\%$. With the prespecified margin $\delta=2\%$, all continuous-prior runs pass TOST; the smallest supported bands are 1.16\% in v2 and 1.66\% in v3. Thus the finite models are empirically equivalent to zero at the stated resolution, in agreement with Corollary~\ref{cor:zblind}(b). The same procedure rejects equivalence for the posterior, consistency, and token families.

The prior family differs slightly from the dimension-zero anchor ($-0.19\%$ versus $+0.42\%$ in $\rho$), but both estimates remain well within the equivalence margin. The comparison therefore does not change the substantive conclusion at the resolution of this protocol. The $z_s$-sensitivity of both branches also decreases monotonically over the saved checkpoints, by factors of about 260 and 300 over 50{,}000 steps (Table~\ref{tab:sensitivity}). Equation~\eqref{eq:excess} predicts that this sensitivity vanishes at the population optimum; the observed monotone trajectory is an empirical optimization pattern.

Shared source noise has a small family mean in the two robot environments but is not part of the auxiliary-latent class. Its v2 seed values are $\rho=-3.50/{+}0.39/{+}4.01\%$, and the valid-pair rates also vary widely (0.9088/0.8503/0.5445). The mean therefore masks substantial seed dependence. Section~\ref{sec:basin} explains the structural distinction, and Section~\ref{sec:toy-mapping} shows that the same coupling changes sign across representation maps.

Two examples illustrate why the baseline is formed per condition. For family B, the observed mismatch is 4.63 percentage points below that of the independent family, but its independent-choice baseline falls by 4.58 points; almost all of the raw gap is explained by changed marginals. In v3, one seed has a pooled effect of $+0.1022$ but a per-condition effect of $+0.0103$, showing the Simpson term in Remark~\ref{rmk:simpson} directly.

\subsection{Why shared source noise is not a member of the floor class}
\label{sec:basin}

Family B is close to the floor numerically, but its mechanism is different. An auxiliary latent is appended to the velocity-field input without changing the interpolation state or target; under the conditions of Lemma~\ref{lem:zblind}, the Bayes predictor averages it out. Source noise instead participates in both $x_t$ and $x_1-x_0$ and is the initial state of the sampled flow. Sharing it therefore selects a common-random-number coupling between the two marginal samplers.

The sign of this coupling depends on how the two noise-to-mode maps partition source space. If their mode basins align, a shared draw promotes agreement; if the basins are unrelated, the effect is near zero; if they are oppositely aligned, the same draw produces anti-coordination. The CFM marginal losses do not determine this cross-branch pairing. Representation geometry and independent training can therefore change the joint coupling without changing either branch's marginal distribution.

The mapping study in Section~\ref{sec:toy-mapping} isolates this effect: changing the representation map moves shared-source-noise coordination across a 90-point range and reverses its sign, and the reversal survived a preregistered confirmatory factorial at ten seeds per cell. The robot-side seed variation is consistent with weak and unstable basin alignment for that particular representation pair; at eight seeds the v2 family mean is $-0.5\%$ with a between-seed sd of $3.4$ and four signs each way. The same distinction applies to other inference-time shared randomness, such as a common solver stream or reused dropout mask; these variables can couple branches even when the velocity fields themselves are $Z$-blind.

\paragraph{Seed expansion}
The claim-bearing families were re-trained to six to ten seeds under a preregistered protocol (design, seed lists, and analysis script committed before any new training or unblinding; \ref{app:prereg}). Table~\ref{tab:seeds} reports seed-level means. The floors hold with between-seed sds below 0.3 pp; the interior ladder is monotone with the $\lambda{=}1$ rung de-singled at five seeds; the corner families hold at the exact both-valid caliber (episode token, learned prior: all eight seeds above 85\%, minimum 94.1\%); and the v3 token family softens from its two-seed value of $+84.1\%$ to a six-seed mean of $+78.5\%$, which is reported as the current estimate. Every expansion run stores both-valid contingency marginals natively.

\begin{table}[t]
\centering
\caption{Preregistered seed expansion: seed-level occupancy $\rho$ (head-valid caliber unless noted; seed count in parentheses). Between-seed sd in brackets.}
\label{tab:seeds}
\footnotesize
\begin{tabular}{@{}l rr l rr@{}}
\toprule
v2 family & $\bar\rho$ & sd & v3 family & $\bar\rho$ & sd \\
\midrule
A independent (6) & $+0.27\%$ & 0.27 & A independent (6) & $+0.00\%$ & 0.20 \\
B shared noise (8) & $-0.54\%$ & 3.41 & B shared noise (8) & $+0.48\%$ & 2.09 \\
G consistency $\lambda3$ (8) & $+45.3\%$ & 8.72 & G consistency $\lambda3$ (6) & $+41.9\%$ & 1.79 \\
$\lambda{=}1$ rung (5) & $+20.2\%$ & 5.18 & episode token (6) & $+78.5\%$ & 3.26 \\
$\beta{=}1,\lambda{=}3$ control (4) & $+0.58\%$ & 0.48 & continuous $z_s$ prior (6) & $+0.48\%$ & 0.19 \\
episode token, learned (8)$^{\mathrm{bv}}$ & $+97.2\%$ & 1.36 & & & \\
episode token, uniform (6)$^{\mathrm{bv}}$ & $+93.1\%$ & 4.90 & & & \\
continuous $z_s$ prior $d{=}16$ (6) & $-0.03\%$ & 0.15 & & & \\
\bottomrule
\end{tabular}
\end{table}

\subsection{Class II: the boundary at the origin (family E)}
\label{sec:origin}

A data-dependent posterior changes the problem by allowing the latent to carry information about the target during training. The five amortized-posterior runs all have $\mathrm{obs}=0$, and their residuals fall to 15--18\,mm. Their modal entropies, however, range from 0 to 0.166 nats, with side fractions near 0 or 1. The branches agree because both concentrate on one mode. Under exact both-valid accounting, $\Delta=C$ in every run, but $C$ is only 0.0001--0.0625. The family therefore lies near the origin of Fig.~\ref{fig:region}: a normalized effect of 100\% with almost no modal capacity. Different seeds collapse to different modes, confirming that the selected side is symmetry breaking rather than task semantics.

\begin{table}[t]
\centering
\caption{Amortized-posterior runs under exact both-valid accounting, with a window-token reference. Each posterior run has $\mathrm{obs}=0$ and $\Delta=C$, so the normalized effect is 100\% despite capacities close to zero. The pooled ratio is shown only to illustrate the aggregation effect.}
\label{tab:posterior}
\scriptsize
\setlength{\tabcolsep}{3pt}
\begin{tabular}{@{}l c rrrr rrrr r@{}}
\toprule
Configuration & seed & $\mathrm{obs}$ & Valid & $q_L$/$p_L$ & Entropy (nats) & $\mathrm{BL}$ & $\Delta$ & $C$ & $\rho$ (pool/p.-c.) & Resid. \\
\midrule
$\beta_{\mathrm{KL}}{=}1$, fb 0.1 & 0 & 0.0000 & 0.8804 & 0.992/0.986 & 0.046/0.074 & 0.0158 & $+0.0158$ & 0.0158 & 104.1\%/100.0\% & 17.61 \\
$\beta_{\mathrm{KL}}{=}1$, fb 0.1 & 1 & 0.0000 & 0.9087 & 0.992/0.988 & 0.048/0.067 & 0.0163 & $+0.0163$ & 0.0163 & 102.1\%/100.0\% & 15.95 \\
$\beta_{\mathrm{KL}}{=}0.01$ & 0 & 0.0000 & 0.9620 & 0.990/0.991 & 0.057/0.051 & 0.0170 & $+0.0170$ & 0.0170 & 100.2\%/100.0\% & 14.76 \\
$\beta_{\mathrm{KL}}{=}0.01$ & 1 & 0.0000 & 0.9010 & \textbf{0.032/0.039} & 0.143/0.166 & 0.0625 & $+0.0625$ & 0.0625 & 100.5\%/100.0\% & 15.56 \\
$\beta_{\mathrm{KL}}{=}1$, fb 0.5 & 0 & 0.0000 & 0.9937 & \textbf{1.000/1.000} & \textbf{0.000/0.000} & 0.0001 & $+0.0001$ & 0.0001 & 100.0\%/100.0\% & 15.04 \\
\midrule
(ref.) window token $K{=}2$ & 0 & 0.0000 & 0.9800 & 0.502/0.501 & 0.693/0.693 & 0.5000 & $+0.5000$ & 0.5000 & 100.0\%/100.0\% & 22.51 \\
\bottomrule
\end{tabular}
\end{table}

The immediate cause is aggregate-posterior/prior mismatch: training samples $z_s$ from $q(z\mid x_1,c)$, whereas inference samples from $\mathcal{N}(0,I)$. The decoder then maps poorly covered latent regions to one behavior. The mechanism is not intrinsically collapsing. On the balanced testbed, the same posterior reaches $C_K=0.494$--$0.498$ with entropy $\ln2$ and $\rho=100\%$; under a $(0.92,0.08)$ target prior it collapses again, while the $\beta_{\mathrm{KL}}=0$ variant does not. The observed regime therefore depends jointly on KL pressure, target imbalance, and separability (Section~\ref{sec:toy-skew}). These factors do not yet explain the balanced robot environments, which indicates that data dimension, demonstration geometry, or chunk structure also contributes.

The token results suggest an additional practical distinction. The $K=2$ window-token usage is 0.674/0.326 rather than uniform, yet both values occur more than 40{,}000 times in training. Test-time draws therefore remain within well-sampled support. In contrast, a continuous posterior can be informative during training while leaving poorly covered regions under the inference prior. The results suggest that a useful shared variable should influence the target during training and be sampled from regions on which the decoder has learned stable behavior.

\subsection{Class III: the interior climb (families F, G)}
\label{sec:interior}

\begin{figure}[pos=tbp]
  \centering
  \includegraphics[width=\textwidth]{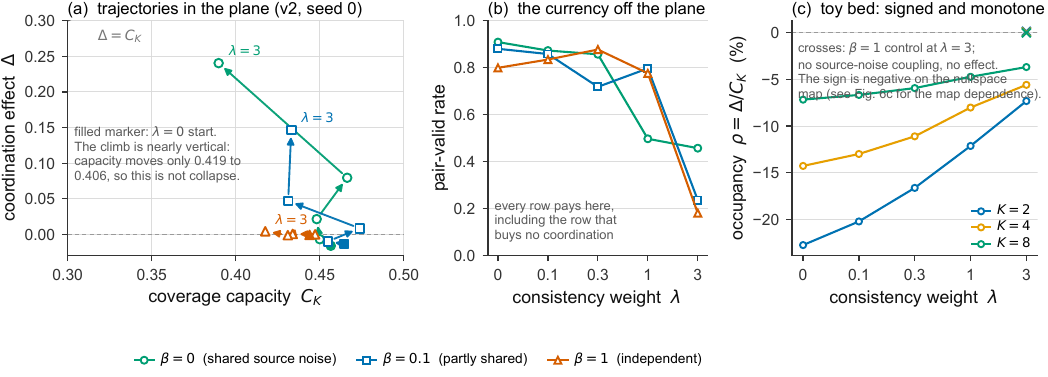}
  \caption{Endpoint-consistency sweep. (a) Paths through the $(C,\Delta)$ plane for the $\beta\times\lambda$ grid. With shared source noise ($\beta=0$), coordination increases while capacity changes little; with independent source noise ($\beta=1$), the path remains near the floor. (b) Valid-pair rate for the same runs. (c) Signed normalized effects on the non-robotic testbed for $K\in\{2,4,8\}$, with three seeds per cell.}
  \label{fig:ladder}
\end{figure}

The consistency regularizer lies outside Lemma~\ref{lem:zblind} because \eqref{eq:cons} changes the objective by rewarding cross-branch agreement. Tables~\ref{tab:grid} and~\ref{tab:l3seeds} report the full $\beta\times\lambda$ grid and the three-seed results for its strongest setting.

\emph{Coordination increases in both environments.} At $\lambda=3$, all three v2 condition-bootstrap intervals exclude zero and the seed-level normalized effects span 38.5--61.5\%. In v3, the corresponding effects are $+0.3097/{+}0.2090/{+}0.3057$, with mean $\rho=63.2\%$. This is the only tested mechanism that produces appreciable coordination without a data-derived partition, although the variation across seeds remains substantial.

\emph{The main cost lies outside the coordination--capacity plane.} In v2, valid-pair rate falls from 0.7679 to 0.5254 (0.750 to 0.445 in v3), while $C$ changes only from 0.4185 to 0.4061. The trajectory is therefore nearly vertical in Fig.~\ref{fig:ladder}: coordination rises without the capacity collapse seen for the posterior family. This behavior depends on the balanced target distribution. On the skewed testbed, the same regularizer also reduces modal capacity, including in the $\beta=1$ control where it produces almost no coordination (Section~\ref{sec:toy-skew}).

Because $\Delta$ is conditioned on both branches being valid, a lower valid-pair rate could in principle increase the measured coordination by removing difficult pairs. The controls do not support that explanation as the sole cause. At $\beta=1,\lambda=3$, valid-pair rate falls to 0.1814 but $\rho$ remains 1.0\%; conversely, the token families reach $\rho\ge91\%$ with valid-pair rates between 0.777 and 0.995. The consistency family therefore combines two effects: greater agreement among valid pairs and fewer valid pairs overall. Both quantities are reported together.

\emph{In this grid, coordination requires a shared source-noise channel.} The $\beta=1$ row remains near zero across the full $\lambda$ range ($\rho\le2.9\%$), whereas the $\beta=0$ and $\beta=0.1$ rows increase with $\lambda$. With independent source noise, the regularizer can reduce disagreement only by narrowing each branch's output distribution; at $\beta=1,\lambda=3$ the valid-pair rate falls to 0.1814 without a meaningful coordination effect. Shared source noise provides a common variable that the loss can use to align the two mode choices. The strongest setting is supported by three seeds; other high-$\lambda$ grid cells are single-seed and are treated as supporting ablations.

\paragraph{A third representation: consistency-driven coordination is pairwise}
\label{sec:third-rep}
Consistency-driven coordination is specific to the pair linked by the loss. In v3, $\lambda_{\mathrm{cons}}=3$ gives $\Delta=+0.3353$ for joint--position, whereas the same model has $+0.0018$ for joint--orientation, with a confidence interval spanning zero. This pattern is more consistent with a pairwise coupling than with a single internal mode variable shared by all branches. The orientation mode information is introduced by the task construction, so this ablation does not establish that orientation naturally encodes the mode. Table~\ref{tab:pose} reports the full $\lambda_{\mathrm{cons}}\times\lambda_{\mathrm{rot}}$ comparison.

\subsection{Class IV: the corner (families H--K), and the residual floor}
\label{sec:corner}

In the robot environments, low-cardinality discrete symbols provide the most reliable route to the high-capacity corner. The oracle and the $K=2$ window token have $\Delta=+0.4996$ and $+0.4994$, respectively, and both reach $\rho=100\%$. These symbols are informative during training and take test-time values that are well represented in the training support. They are not the only possible route in general: the balanced testbed places an amortized posterior at the corner as well. The token experiments instead establish a robust constructive route whose behavior can be interpreted and controlled.

The $K>2$ evidence comes only from the balanced testbed. Uniform oracle draws achieve 80.1--87.3\% across $K\in\{2,3,4,8\}$, while the unsupervised token remains close to the oracle at $K\le3$ and falls to about 50\% at $K=4$ and $8$. Thus shared symbols remain effective when their cardinality and learned use are compatible with the target modes, but the robot experiments themselves support only $K=2$.

The token constructions differ in how the symbol is defined and in the cost they leave behind.
\begin{itemize}
\item \emph{Window-level $K{=}2$ k-means} (fitted on the 127{,}707 training windows): NMI with the true mode 0.1717, usage 0.674/0.326. It coordinates essentially perfectly---on a partition the clustering invented, not the task's two sides. At test time the token is drawn uniformly and fed to both branches; no mode-inference module is used. With $\lambda{=}0.1$ added (J$'$), validity 0.9954 and residual 12.39\,mm, the closest non-oracle approach to the oracle floor (10.18\,mm).
\item \emph{Window-level $K{=}4$} leaves one token value unused by valid behavior: every draw under that value is invalid, reducing valid-pair rate to 0.6356 while $\rho$ remains 85.3\%. This separates coordination from behavioral validity.
\item \emph{Episode-level $K{=}2$ token} (fitted on 700 transport trajectories): NMI $=1.0000$; it \emph{is} the mode label. With a conditional categorical prior (I): $\Delta=+0.4710$ ($\rho=97.9\%$), residual 24.22\,mm, validity 0.7894. With no inference and uniform draws (H): $+0.4339$ ($91.1\%$), residual 30.16\,mm, validity 0.7769; in v3, head-valid, $+0.4232/{+}0.3614$ (88.6/79.6\%). H and I are separate trainings that differ in the test-time token distribution; because the learned prior here is nearly uniform (Section~\ref{sec:corner-entry}), their difference is not a clean measure of the value of inference.
\end{itemize}

The episode-token family does not reduce valid-pair rate relative to the uncoordinated baseline: family I averages 0.7894 versus 0.7729 for family A. Its main cost is instead the same-mode residual floor. The oracle floor is 10.18\,mm; the window-token floors are 10.72--13.70\,mm with $\lambda=0.1$ and 13.93--22.45\,mm without it; the uniform episode-token floor is 17.61--20.08\,mm. The learned-prior episode token was not evaluated under the stratified protocol, so its same-mode floor is not reported. Across the measured families, token granularity rather than coordination strength explains the remaining floor: one symbol per episode is coarser than one per 16-step window, whereas strong consistency regularization has similar floors despite a much smaller coordination effect.

\subsection{Coordination and behavioral-prior fidelity}
\label{sec:corner-entry}

\begin{figure}[pos=tbp]
  \centering
  \includegraphics[width=0.92\textwidth]{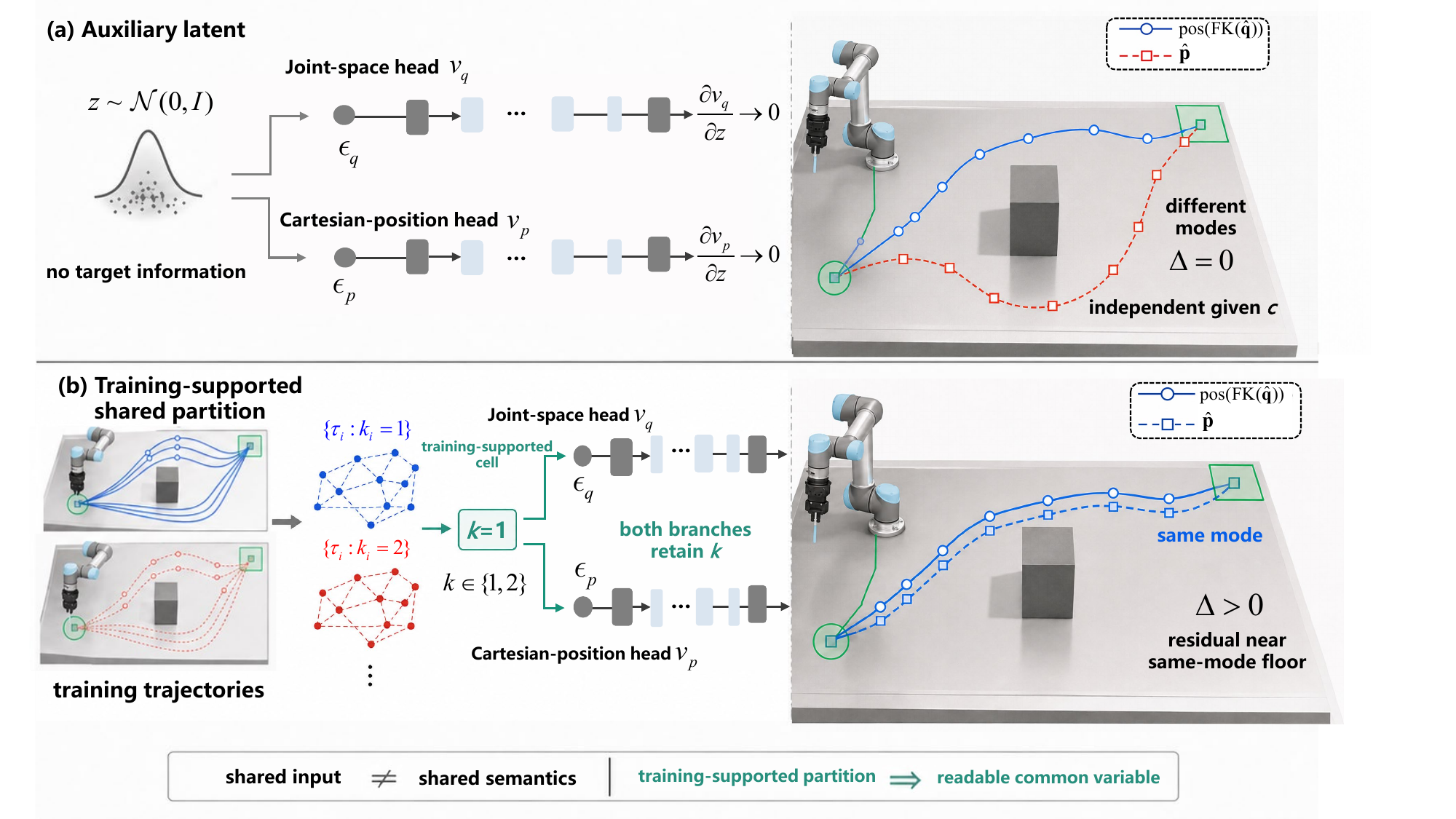}
  \caption{Two shared-variable constructions. An auxiliary random input that is irrelevant to the CFM regression target is ignored at the population optimum (top). A low-cardinality symbol fitted from training trajectories can be learned by both branches and coordinate their samples (bottom). The symbol provides a shared partition; it need not coincide with a semantic task label.}
  \label{fig:symbol}
\end{figure}

The token experiments separate two objectives that coincide only in balanced settings. Coordination requires the two branches to respond similarly to the same symbol. It does not require that the symbol correspond to a task-semantic mode: the $K=2$ window partition has NMI 0.1717 with the task label and still reaches the boundary, while the testbed converts an almost zero-information symbol into a 77.8\% normalized effect. Nor does coordination require test-time mode inference; families H and J draw the symbol uniformly.

Behavioral-prior fidelity asks a different question: whether the generated mode frequencies match the data and whether every symbol value leads to feasible behavior. In the balanced robot environments, the learned categorical prior is essentially uniform (conditional entropy $\ln2$, mean probabilities 0.497/0.503), so uniform and learned draws should be viewed as separate training repeats rather than evidence that inference improves coordination. Two deviations expose the distinction. First, the $K=4$ window token contains a value with no valid behavior and therefore loses valid-pair rate. Second, on skewed testbed targets, a uniform symbol marginal produces total-variation distortion of 0.103--0.330 and its normalized coordination effect falls from 82.4\% to 46.3\% as the target becomes more imbalanced.

Matching the symbol marginal to the learned mode frequencies improves both quantities on the skewed testbed: $\rho$ increases by 3.5--41.2 percentage points and the marginal distortion falls by factors of 7--42. This improvement does not come from state-dependent mode inference; the prior head learns a condition-independent marginal. The resulting design rule is therefore modest: a shared symbol can coordinate branches without semantic labels or test-time inference, but the sampler should match the symbol's marginal to the desired behavior frequencies and ensure that each symbol value is supported by valid training examples. Figure~\ref{fig:symbol} illustrates the distinction.

The coordination estimates depend on the assignment threshold and integration resolution. Full sensitivity curves are provided in \ref{app:sensitivity}; the reference operating points are $\tau=0.11$, NFE 20 for v2 and $\tau=0.18$, NFE 20 for v3.

\section{A synthetic testbed for mechanism transfer}
\label{sec:toy}

The same coordination calculus is applied to a two-dimensional conditional flow-matching testbed with no robotic component. The aim is to distinguish mechanism-dependent behavior from properties of the two robot environments. Three robot-side occupancy patterns transfer directly: the auxiliary-latent floor, the consistency-driven interior, and high-capacity coordination. The collapsed posterior does not transfer under balanced targets; it reaches the high-capacity corner instead. The testbed also isolates a strong dependence of source-noise coupling on the representation map and shows how skewed target priors change modal capacity, coordination, and behavioral-prior fidelity.

\subsection{Testbed design and evidence types}
\label{sec:toy-setup}

\paragraph{Target and branches}
RotatingRing uses conditions $c=(\theta,r)$, with $\theta\sim U(0,2\pi)$ and $r\sim U(0.70,1.30)$, to rotate and scale a ring of $K$ equiprobable modes. Each sample is a length-$H=16$ single-peak path toward mode $k$ at azimuth $\theta+2\pi k/K$. Branch B models $\mathbb{R}^2$; branch A models a known inverse representation $g^{-1}(\mathbb{R}^2)$. The representation study uses three exact round-trip maps: an orthogonal map, a nonlinear polar map, and a primary $\mathbb{R}^2\to\mathbb{R}^6$ null-space lift that plays the structural role of forward kinematics. The $K$-way assignment rule generalizes the RotatingBypass rule, with $\tau=0.55$ at the reference operating point. Generator and map definitions are given in \ref{app:prereg}.

\paragraph{Symmetry controls}
Because $\theta$ is uniform over the full circle, $x_1\mid k$ has the same world-frame distribution for every mode. Fixed world-frame features are therefore uninformative about $k$: the strongest of 36 scalar features has pooled AUROC 0.5183--0.5253, whereas a probe that also receives the condition reaches accuracy 1.0000. The per-condition separation direction is close to uniform in the world frame ($R=0.0289$--$0.0301$), and the mode spacing varies more than in either robot environment (CV 0.1767). The balanced study contains 186 training runs and 135 threshold/NFE re-evaluations.

\paragraph{Evidence types}
Some results follow directly from the construction: fixed world-frame features cannot identify the mode, and balanced targets keep $C_K$ close to its maximum. These are implementation checks rather than empirical findings. The mechanism comparisons remain empirical: the auxiliary-latent and independent-noise controls could have moved away from zero; the consistency ladder could have failed to be monotone; source-noise effects could have retained their sign across maps; and the posterior could have collapsed despite balanced targets.

\subsection{Occupancy patterns on the balanced testbed}
\label{sec:toy-occupancy}

\begin{figure}[pos=tbp]
  \centering
  \includegraphics[width=\textwidth]{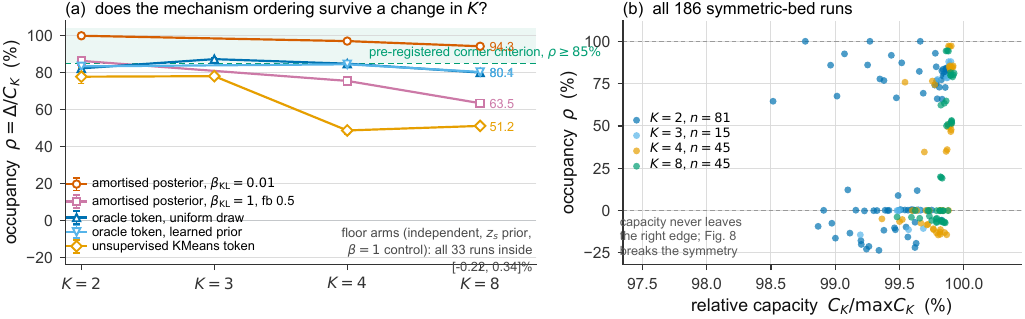}
  \caption{Effect of mode count on the balanced non-robotic testbed. (a) Normalized effects for five mechanism families across $K\in\{2,3,4,8\}$, averaged over three seeds; error bars show the seed range and the shaded line marks the prespecified $85\%$ criterion. The unsupervised token degrades at $K=4$ and $8$, whereas the low-KL posterior remains above the criterion. (b) All 186 runs in relative-capacity--effect coordinates. The construction keeps capacity near its maximum; unequal target probabilities in Fig.~\ref{fig:skew} provide the complementary capacity sweep.}
  \label{fig:ksweep}
\end{figure}

\begin{figure}[pos=tbp]
  \centering
  \includegraphics[width=\textwidth]{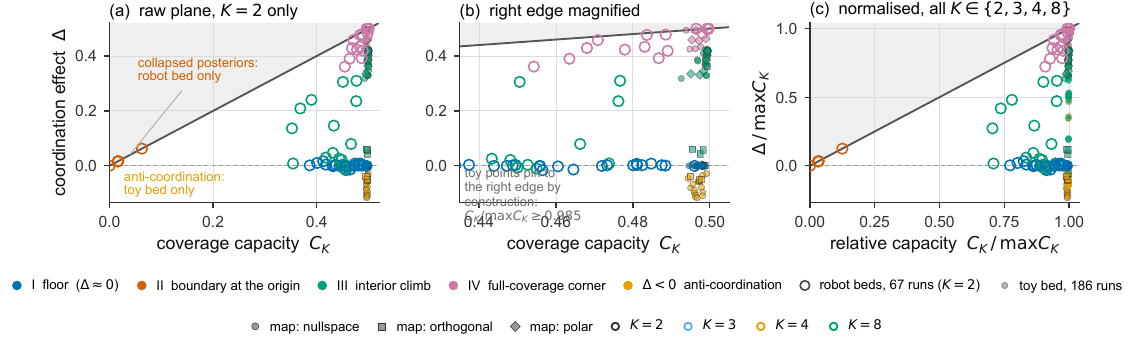}
  \caption{Robot-environment and balanced-testbed runs in one attainable region. (a) Raw coordinates for $K=2$; (b) the high-capacity region magnified; (c) both axes normalized by $\max C_K=1-1/K$ to compare $K\in\{2,3,4,8\}$. The balanced testbed stays near maximum capacity by construction, whereas the robot runs span the capacity axis and include the collapsed-posterior cluster. Pronounced negative effects occur only in map-dependent testbed families.}
  \label{fig:overlay}
\end{figure}

\textbf{Floor.} The continuous-$z_s$ prior and the $\beta{=}1,\lambda{=}3$ control satisfy $|\rho|\le0.49\%$ over 27 runs in 9 ($K$, map) cells; all effect CIs contain zero; TOST at $\delta=2\%$ passes at both run and seed levels; $\hat d\le1.04\%$ (0.30\% at $K{=}8$, where $C_K$ grows while the effect stays zero). The same procedure rejects equivalence for the posterior, the shared-noise ladder, and the token families.

\textbf{Interior climb.} The $\beta{=}0$ ladder is strictly monotone in the signed effect at $K=2/4/8$ (15 family means, no inversion; panel (c) of Fig.~\ref{fig:ladder}), while the $\beta{=}1$ control stays at zero ($+0.13\%/{+}0.04\%/{-}0.03\%$): consistency regularization alone produces no coordination; it requires source-noise coupling. The robot-side pattern is therefore reproduced with three seeds per cell. The preregistered criterion was stated as non-decreasing $|\Delta|$ and fails in its literal form under the \texttt{nullspace} map, where the $\lambda=0$ anchor is already anti-coordinated. The signed effect $\Delta$ is strictly monotone, so the criterion form is recorded as failed while the mechanism pattern is supported.

\textbf{High-capacity region.} Using the prespecified criterion $\rho\ge85\%$ and $C_K\ge0.80\max C_K$, the two posterior settings qualify at $K=2$, the oracle qualifies at $K=3$, and the low-KL posterior qualifies at $K=4$ and $8$. Oracle and token families remain at 78--85\% across the larger mode counts. The low-KL posterior is therefore the most consistent high-capacity occupant on this balanced testbed, directly showing that an explicit partition token is sufficient but not necessary.

\begin{table}[t]
\centering
\caption{Normalized coordination effect $\rho$ across mode counts on the balanced testbed. Bold entries meet the prespecified high-capacity criterion $\rho\ge85\%$ and $C_K\ge0.80\max C_K$. All rows have $C_K\ge0.992\max C_K$ by construction.}
\label{tab:toy-corner}
\footnotesize
\begin{tabular}{@{}l rrrr@{}}
\toprule
Family & $K{=}2$ & $K{=}3$ & $K{=}4$ & $K{=}8$ \\
\midrule
amortized posterior, $\beta_{\mathrm{KL}}{=}0.01$ & \textbf{100.0\%} & --- & \textbf{97.1\%} & \textbf{94.3\%} \\
amortized posterior, $\beta_{\mathrm{KL}}{=}1$, fb 0.5 & \textbf{86.4\%} & --- & 75.6\% & 63.5\% \\
oracle token, uniform test draw & 82.4\% & \textbf{87.3\%} & 84.9\% & 80.1\% \\
oracle token, learned prior & 83.4\% & --- & 84.5\% & 80.4\% \\
unsupervised k-means token & 77.8\% & 78.1\% & 48.7\% & 51.2\% \\
\bottomrule
\end{tabular}
\end{table}

\textbf{Collapsed origin.} No family collapses on the balanced testbed: across 186 runs, modal entropy divided by $\ln K$ is at least 0.9995 and relative capacity is at least 0.9852. In particular, the $\beta_{\mathrm{KL}}=0.01$ posterior has $\mathrm{obs}=0$, $\mathrm{BL}=C_K=0.4938$--$0.4983$, and $\rho=100\%$. The same mechanism therefore occupies opposite ends of the diagonal in the balanced testbed and robot environments. Its testbed valid-pair rate is 17.4--22.5 percentage points below the independent family at the same threshold, showing that a high-capacity corner can still carry a validity cost. Section~\ref{sec:toy-skew} subsequently induces collapse on the same testbed by changing the target prior.

\subsection{Dependence on the representation map}
\label{sec:toy-mapping}

\begin{figure}[pos=tbp]
  \centering
  \includegraphics[width=\textwidth]{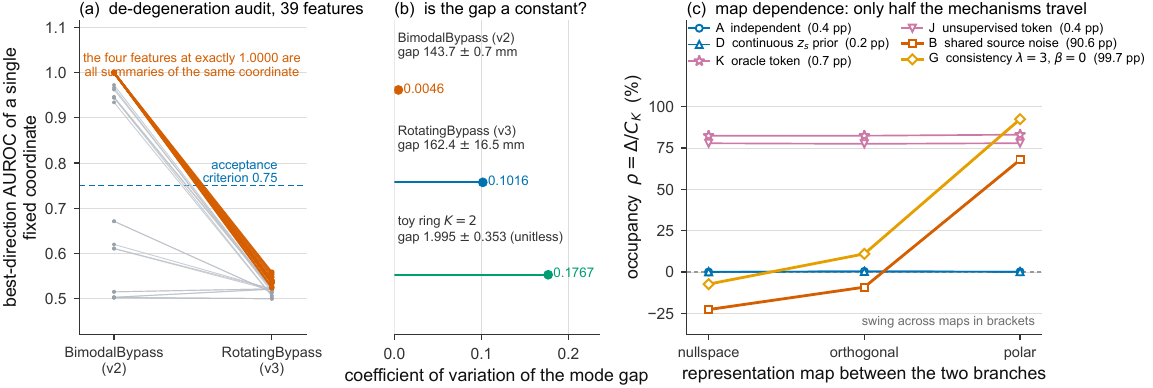}
  \caption{Environment and representation-map checks. (a) Best single-coordinate AUROC for mode prediction in BimodalBypass and RotatingBypass; the fixed-coordinate shortcut present in v2 is substantially reduced in v3. (b) Coefficient of variation of the mode gap in the two robot environments and the non-robotic testbed. (c) Normalized effect $\Delta/C_K$ under three representation maps. The floor and symbol-based families vary by less than 0.7 percentage points, whereas shared source noise and the $\beta=0$ consistency family change sign over ranges of 90.6 and 99.7 points.}
  \label{fig:maps}
\end{figure}

The floor and symbol-based families are stable across the three maps in Table~\ref{tab:maps}, varying by less than 0.7 percentage points. Shared source noise and the $\beta=0$ consistency family are not: their normalized effects span 90.6 and 99.7 points and reverse sign. This confirms the basin-alignment interpretation of Section~\ref{sec:basin}. A shared source draw coordinates branches only when their noise-to-mode maps assign that draw to corresponding modes. The positive end of this continuum is consistent with multi-view generators that operate on aligned latent grids \citep{dong2025noisecontroller,liu2024syncdreamer}; the cross-representation setting need not share that alignment.

\paragraph{Preregistered confirmatory factorial}
Because the reversal is the paper's most consequential mechanism finding, it was re-tested prospectively. A registered design added seeds 3--9 to the published 0--2 for arms A and B on all three maps (ten seeds per cell), with the arm-by-map interaction as the only gated test and the analysis script committed before unblinding. The reversal replicates: $D(\texttt{polar})=+66.8$ pp with 95\% CI $[+65.9,+67.7]$, $D(\texttt{nullspace})=-23.3$ pp $[-23.9,-22.8]$, $D(\texttt{orthogonal})=-9.3$ pp $[-10.6,-7.9]$; the interaction range is 90.1 pp with permutation $p=1\times10^{-4}$ ($10^{4}$ permutations), and between-seed sds are 0.8--1.9 pp. Ten-seed means move the three-seed values of Table~\ref{tab:maps} by at most 0.7 pp (shared source noise: $-23.2/-9.0/+67.1$; range 90.3 pp). The independent-noise floor passes seed-level TOST at $\pm2\%$ on every map, the ladder keeps its monotone climb at ten seeds per rung ($-20.7\%$ to $-8.0\%$ across $\lambda=0.1$--$3$) with a flat valid-pair rate (0.982--0.985, so the validity cost is a robot-side phenomenon rather than a property of the regularizer itself), and the corner arms stay within 2 pp of their three-seed values (kmeans token $79.5\pm3.6$, oracle $83.6\pm2.5$).

\begin{table}[t]
\centering
\caption{Occupancy $\rho$ by representation map ($K{=}2$; family pairs quoted first/second).}
\label{tab:maps}
\footnotesize
\begin{tabular}{@{}l rrr r@{}}
\toprule
Family & \texttt{nullspace} & \texttt{orthogonal} & \texttt{polar} & Range \\
\midrule
independent / continuous $z_s$ prior & $+0.02\%$ / $+0.10\%$ & $+0.37\%$ / $+0.30\%$ & $+0.15\%$ / $+0.17\%$ & 0.2--0.4 pp \\
unsupervised token / oracle token & $+77.8\%$ / $+82.4\%$ & $+77.5\%$ / $+82.4\%$ & $+77.9\%$ / $+83.0\%$ & 0.4--0.7 pp \\
shared source noise & $\mathbf{-22.7\%}$ & $\mathbf{-9.2\%}$ & $\mathbf{+67.8\%}$ & \textbf{90.6 pp} \\
$\beta{=}0$ consistency, $\lambda{=}3$ & $\mathbf{-7.3\%}$ & $\mathbf{+10.9\%}$ & $\mathbf{+92.4\%}$ & \textbf{99.7 pp} \\
\bottomrule
\end{tabular}
\end{table}

\textbf{Anti-coordination.} Sixty-nine of the 186 balanced-testbed runs have negative effects. Eighteen lie within the floor range ($|\rho|\le0.22\%$); the remaining 51 belong to shared-source-noise or consistency settings under the null-space and orthogonal maps, with $\rho=-3.3\%$ to $-23.5\%$. The robot-side negative effects are much smaller (maximum magnitude 3.50\%), so the pronounced anti-coordination band is specific to the representation maps explored on the testbed.

\subsection{Unequal mode probabilities}
\label{sec:toy-skew}

\begin{figure}[pos=tbp]
  \centering
  \includegraphics[width=\textwidth]{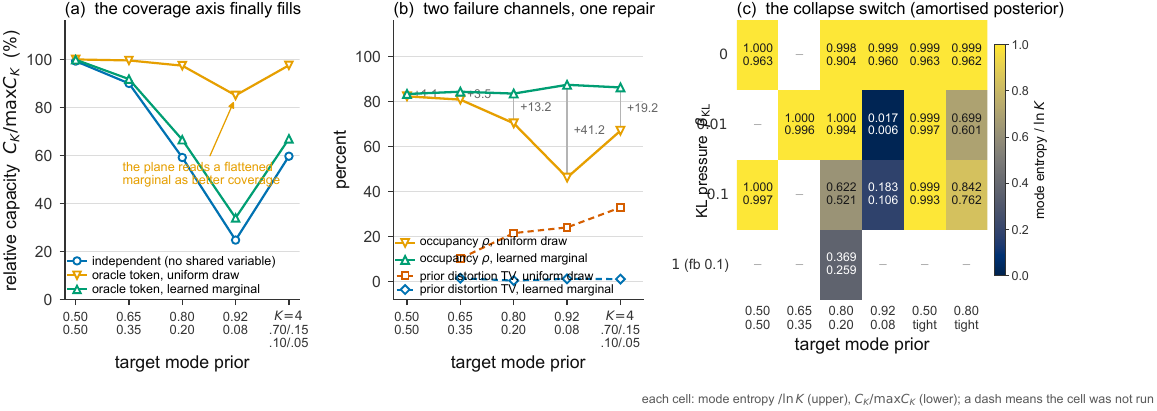}
  \caption{Unequal-probability testbed (153 runs, null-space map, $\tau=0.55$, NFE 20). (a) Relative capacity as the target prior becomes more imbalanced. Uniform token draws retain high generated diversity even when it no longer matches the target frequencies. (b) Normalized coordination and marginal distortion under uniform and learned-marginal token draws. Matching the token marginal improves both. (c) Amortized-posterior behavior over KL weight and target skew; each cell reports modal entropy and relative capacity. Strong skew with a nonzero KL term reproduces the low-capacity, zero-mismatch regime observed in the robot experiments.}
  \label{fig:skew}
\end{figure}

The balanced construction leaves little variation along the capacity axis, so a second study varies target mode probabilities and separability. The $K=2$ priors are $(0.65,0.35)$, $(0.80,0.20)$, and $(0.92,0.08)$; the $K=4$ prior is $(0.70,0.15,0.10,0.05)$. Unequal-spacing and reduced-separability controls are combined with a grid over KL weight and target skew, giving 153 runs (51 cells, three seeds) on the null-space map at $\tau=0.55$ and NFE 20. All per-condition tables satisfy the two bounds, and the binary and $K$-ary implementations agree numerically. Figure~\ref{fig:skew} summarizes the resulting regimes.

\emph{(a) The capacity axis fills.} The independent family's $C_K/\max C_K$ falls to 0.591 ($p{=}.80$), 0.248 ($p{=}.92$), 0.597 ($K{=}4$); the 153 runs span 0.0047--0.9986. Unequal spacing alone has no measurable effect (paired control: occupancy differences $\le0.2$ points, capacity $\le0.021$). On the four standard-separability skewed datasets, the independent family's $C_K/\max C_K$ sits systematically below the fidelity-consistent capacity (relative gaps 3.3/10.7/18.7/8.8\%)---flow matching itself slightly under-covers minority modes on skewed targets.

\emph{(b) Uniform token draws create two costs under skew.} Marginal distortion $\mathrm{TV}=0.103/0.216/0.241/0.330$ with validity unchanged, and occupancy falling $82.4\to80.9\to70.4\to46.3\%$ as the override rate rises. The normalized effect falls below 60\% at $p=0.92$ and under reduced separability with $p=0.80$. The learned-marginal fix repairs both channels at once, and on the balanced testbed the two variants differ by 1.0 point. This also explains why the uniform and learned-prior variants are nearly indistinguishable in the balanced setting.

\emph{(c) Posterior collapse depends on the regime.} At $p=(0.92,0.08)$ and $\beta_{\mathrm{KL}}=0.01$, the generated marginal is $[.9988,.0012]$ against target $[.9169,.0831]$, with entropy$/\ln K=0.017$, $C_K/\max C_K=0.006$, $\mathrm{obs}=0$, and $\rho=100\%$. This reproduces the low-capacity, zero-mismatch regime of family E. Across the observed grid, collapse appears only with a nonzero KL term and strengthens with target skew; reduced separability modifies the transition but does not induce collapse on its own. The grid still does not explain why the balanced robot environments collapse while the balanced testbed does not (Section~\ref{sec:origin}).

\begin{table}[t]
\centering
\caption{The collapse switch: amortized posterior across KL pressure $\times$ target skew (cells: $C_K/\max C_K$ / modal entropy $\div\ln K$; ``tight'' = reduced separability).}
\label{tab:collapse}
\footnotesize
\begin{tabular}{@{}l cccccc@{}}
\toprule
KL pressure & symm.\ .50 & $p{=}.65$ & $p{=}.80$ & $p{=}.92$ & tight .50 & tight .80 \\
\midrule
$\beta_{\mathrm{KL}}=0$ & .963 / 1.000 & --- & .904 / .998 & .960 / .999 & .963 / .999 & .962 / .999 \\
$\beta_{\mathrm{KL}}=0.01$ & .992 / 1.000 & .997 / 1.000 & .994 / 1.000 & \textbf{.006 / .017} & .997 / .999 & .601 / .695 \\
$\beta_{\mathrm{KL}}=0.1$ & .997 / 1.000 & --- & \textbf{.521 / .624} & \textbf{.106 / .189} & .993 / .999 & .762 / .820 \\
$\beta_{\mathrm{KL}}=1$ (fb 0.1) & .994 / 1.000 & --- & \textbf{.259 / .368} & --- & --- & --- \\
\bottomrule
\end{tabular}
\end{table}

\emph{(d) The attainable region changes under skew.} The auxiliary-latent floor extends across the capacity axis ($C_K/\max C_K=0.248$--$0.899$ with $|\rho|\le0.43\%$), confirming that its zero prediction does not depend on mode balance. The consistency path remains monotone but is no longer vertical because the regularizer also reduces modal capacity. Full-capacity corner points necessarily correspond to nearly uniform generated marginals and therefore conflict with fidelity to a skewed target; the learned-marginal family preserves fidelity at a lower, target-consistent capacity. One $\beta=1,\lambda=3$ cell at $p=(.80,.20)$ shows a capacity reduction with essentially zero coordination ($\rho=-0.3\%$), suggesting that consistency regularization can drop minority modes without coordinating the branches. Because this pattern appears in one tested cell, it is treated as preliminary.

\subsection{Scope of the testbed}
\label{sec:toy-scope}

The testbed establishes that the auxiliary-latent floor, the consistency-driven interior, and high-capacity coordination are not specific to robot kinematics. It also shows that the floor is insensitive to mode imbalance, that posterior collapse is regime-dependent, and that valid-pair rate and behavioral-prior fidelity can vary independently of the coordination coordinates.

Its limits are equally important. Invalidity is induced by the assignment threshold rather than by collision dynamics; the trajectories do not include contact, grasp, or acceleration phases; and every $K>2$ and skew-prior result is confined to this synthetic construction. The skew study uses only the \texttt{nullspace} map, and three maps establish map dependence without characterizing all possible representations. Finally, the testbed does not address settings in which the preferred mode is partially predictable from the observation.
\label{sec:toy-relabel}

Figure~\ref{fig:overlay} summarizes the main contrast between the two settings: the near-origin cluster occurs only in the robot-environment runs (five v2 posterior runs, $\rho$ exactly 100.0\%, $C_K\in[6.3\times10^{-5},0.063]$), the $\rho<0$ band to the testbed alone (51 substantive runs, sign-flipping to $+66.6\%$--$+92.9\%$ under \texttt{polar}), and balanced testbed points hug the right edge by construction (min $C_K/\max C_K$ 0.985, median 0.998) while the robot points genuinely spread (down to $1.3\times10^{-4}$; 26 of 67 below 0.9). In the balanced overlay, the robot runs provide most of the variation along the capacity axis; the 153 skewed runs fill the testbed interior in Fig.~\ref{fig:skew}. Threshold-matched comparisons, approximate head-valid coordinates, and valid-pair ranges are documented in \ref{app:tables}.

\section{Residual contamination and disagreement-based monitors}
\label{sec:residual}

The attainable-region analysis describes mode agreement; this section connects it to the cross-representation residual. Two questions are kept separate. First, how much of the residual is produced by valid cross-mode samples? Second, does that contamination materially affect failure detection? The first is studied in the constructed environments and in model ensembles; the second is evaluated on LIBERO-Plus.

\subsection{Residual decomposition}
\label{sec:share}

\begin{figure}[pos=tbp]
  \centering
  \includegraphics[width=\textwidth]{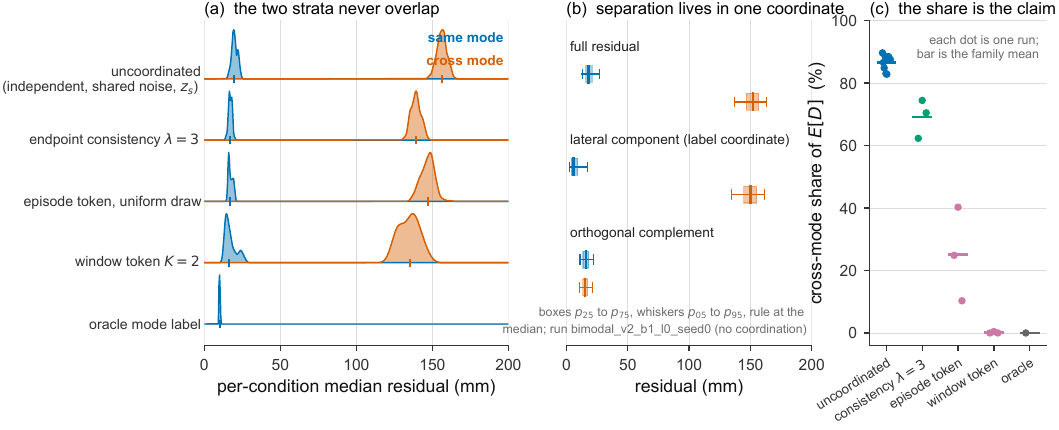}
  \caption{Residual decomposition in BimodalBypass. (a) Per-condition median residuals for same-mode and cross-mode sample pairs, grouped by mechanism family. (b) Full, lateral, and label-orthogonal residual components for one uncoordinated run; the separation is concentrated in the coordinate used by the mode assignment. (c) Cross-mode contribution to the both-valid residual mean, one point per run. Stronger coordination reduces this contribution from roughly $85\%$ to nearly zero, while the cross-mode magnitude itself remains set by the task geometry.}
  \label{fig:strata}
\end{figure}

For each checkpoint, the both-valid residual is decomposed into same-mode and cross-mode contributions. Table~\ref{tab:resid} reports the uncoordinated families; Fig.~\ref{fig:strata} shows the full mechanism progression.

\begin{table}[t]
\centering
\caption{Residual decomposition for uncoordinated families; ranges are over runs.}
\label{tab:resid}
\footnotesize
\begin{tabular}{@{}l cc@{}}
\toprule
Quantity & v2 (10 runs) & v3 (9 runs) \\
\midrule
same-mode floor $\mathbb{E}[D\mid\mathrm{same}]$ & 18.0--24.0\,mm & 25.0--33.5\,mm \\
cross-mode residual $\mathbb{E}[D\mid\mathrm{cross}]$ & 151.4--159.5\,mm & 174.7--183.9\,mm \\
both-valid mean $\mathbb{E}[D]$ & 77.3--88.5\,mm & 87.0--105.2\,mm \\
$\mathbb{E}[D]$ / same-mode floor & 3.32--4.68$\times$ & 2.96--3.96$\times$ \\
cross-mode contribution share & \textbf{82.8--89.7\%} & \textbf{81.2--87.0\%} \\
\bottomrule
\end{tabular}
\end{table}

The cross-mode share decreases as coordination strengthens: it is 0.81--0.90 for uncoordinated models, 0.51--0.75 under strong endpoint consistency, 0.10--0.45 for episode tokens, and approximately zero for window tokens and the oracle. Thus, in the constructed multimodal tasks, different valid mode choices account for more than four fifths of the uncoordinated residual mean. This is a statement about the composition of the mean, not about the residual's ability to distinguish success from failure.

\subsection{Geometric origin of the cross-mode separation}
\label{sec:tautology}

In v2, the residual separates same-mode and cross-mode samples with AUROC 1.0000 wherever both strata are present. This separation follows directly from the task geometry. The mode label is defined by the lateral trajectory component, and the residual contains that same component. The lateral residual has AUROC 1.0000 in v2 and 0.977--0.999 in v3, whereas the orthogonal component is 0.39--0.71 in v2 (mean 0.52 over adequately powered runs) and 0.60--0.71 in v3.

The magnitudes are likewise set by the environment. The v2 modes are $143.68$\,mm apart within the branching window, and the cross-mode residual is 0.97--1.11 times this distance. In v3 the mode gap varies across conditions, but the mean cross-mode residual remains 1.08--1.13 times the mean gap. The informative quantities are therefore the cross-mode probability, which the coordination mechanisms change, and the direction of the excess residual. A cross-mode sample is an alternative valid prediction rather than a uniformly worse prediction.

\subsection{Implications for disagreement monitors}
\label{sec:monitors}

Remark~\ref{rmk:scope} applies equally to two samples from one model and to samples from different models. If the compared draws are conditionally independent given the condition, then $\Delta=0$ by construction. On the v2 evaluation grid, private-noise ensembles and same-model Monte Carlo pairs have per-condition effects within 0.0010 of zero and pooled effects within 0.0027, giving an empirical estimator scale of approximately $\pm0.003$ for 75 conditions and 1280 samples per condition.

Independent ensembles have an even larger cross-mode share than the cross-representation residual: 0.922--0.950 in end-effector space and 0.932--0.948 for same-model Monte Carlo sampling. Their within-mode disagreement is smaller, so valid mode changes dominate the total more strongly. Under independent sampling this share is determined by the mode marginals and cannot be reduced by improving either marginal model alone.

\paragraph{Common random numbers}
\label{sec:crn}
Sharing the initial noise produces a large coordination effect across ensemble members but almost none across the joint and position representations (Table~\ref{tab:crn}). Ensemble members use the same representation, architecture, and data, so their noise-to-mode maps are closely aligned. The two action representations use different state spaces and independently parameterized fields, so a shared numerical draw need not represent the same latent direction. The mixed-objective ensemble confirms that the representation, rather than the objective weight, is the main distinction. Across the ensemble pairs, shared $x_0$ retains 62--93\% of the same-mode component while reducing the total both-valid mean to 17--57\% of its private-noise value.

\begin{table}[t]
\centering
\caption{Effect of shared initial noise across representations and across ensemble members. $\Delta$ is reported on the pooled accounting used for this comparison; per-condition values are 0.0018 lower on average.}
\label{tab:crn}
\footnotesize
\begin{tabular}{@{}l l rrr@{}}
\toprule
Object & Mechanism & $\mathrm{obs}$ & $\Delta$ & Valid-pair rate (uncoupled) \\
\midrule
cross-representation (family B) & shared source noise & 0.4362 & $+0.0039$ & 0.768 (0.773) \\
ensemble \texttt{b1\_l0} (ten pairs) & shared $x_0$ & 0.1434 & $+0.3632$ & 0.9305 (0.9296) \\
ensemble \texttt{b0\_l0} & shared $x_0$ & 0.1859 & $+0.3259$ & 0.9330 (0.9334) \\
ensemble \texttt{b0\_l3} & shared $x_0$ & 0.1079 & $+0.4003$ & 0.8353 (0.8303) \\
ensemble, mixed objectives & shared $x_0$ & 0.0551 & $+0.4130$ & 0.8656 (0.8587) \\
\bottomrule
\end{tabular}
\end{table}

\paragraph{VFD}
\label{sec:vfd}
The VFD construction \citep{romer2026vfd} introduces three dependencies absent from Action-L2: shared $x_0$, cross-evaluation at another member's state, and the time weight $\kappa_s=s/(1-s)$. Applied to the ensembles used here, the estimated cross-mode share falls from 92--95\% for Action-L2 to 44--72\% after shared-noise and same-state coupling, and to 1.1--33.5\% with the full weighted score. These values are obtained on our ensembles rather than the original VFD models, so they characterize the construction rather than the published method's absolute performance. Full score definitions and pairwise results are in \ref{app:mechtables}.

\subsection{External failure-monitor experiment}
\label{sec:liberoplus}

LIBERO-Plus \citep{fei2025liberoplus}, built on the LIBERO suites \citep{liu2023libero}, supplies external tasks, perturbations, severity labels, and success predicates. The executing policy is an action-chunk CFM policy trained on the official demonstrations; the monitored dual-representation heads are evaluated offline and do not affect the rollouts. The dataset contains 1664 episodes and 283{,}375 steps. Five base tasks are used for calibration and five for testing; the test split contains 851 episodes, including 369 failures. AUROCs are averaged over 20 perturbation-dimension--severity strata with at least five successes and five failures. The monitor protocol, including the episode-level primary comparison, baseline set, stratification, aggregation, and mechanism tests, was preregistered before the first scoring output; the full specification is given in \ref{app:liberoplus}.

All test failures are 250-step timeouts, whereas successful episodes terminate earlier. Elapsed step count alone consequently reaches AUROC 0.9969, so the preregistered episode-level aggregate is length-confounded. After identifying this confound, we added a fixed-horizon analysis and treat it as the primary interpretable comparison, not as a preregistered success criterion. Truncating every episode to its first 100 steps gives AUROC 0.6200 for the coordinated residual versus 0.5285 for $\max|z|$, a margin of $+0.0915$ with paired-bootstrap lower bound $+0.0058$; the uncoordinated residual reaches 0.6856. A preregistered sweep over four further horizons (Table~\ref{tab:horizons}) gives the stability question a split answer. The family-best signal, the uncoordinated residual, exceeds every observation-side baseline at every horizon. The coordinated arm's margin is horizon-sensitive: it sits 0.026 below random-network distillation at 50 steps and 0.002 below at 75, and at 150 steps, where only 48 successful episodes survive truncation, it falls below action-chunk entropy. Statements about the residual family therefore hold across horizons through its uncoordinated arm, while the coordinated arm's advantage over observation-side baselines is specific to the 100--125-step range.

\begin{table}[t]
\centering
\caption{Fixed-horizon detection AUROC (stratified, EWMA-max aggregation, monitor seed 0) as the truncation length $K$ varies. Failures are 250-step timeouts, so all 369 remain at every $K$; successes shorter than $K$ drop out.}
\label{tab:horizons}
\footnotesize
\begin{tabular}{@{}r rr rrrrr@{}}
\toprule
$K$ & fail & succ & coord.\ residual & uncoord.\ residual & $\max|z|$ & RND & ACE \\
\midrule
50  & 369 & 482 & 0.4878 & \textbf{0.5226} & 0.4808 & 0.5137 & 0.3080 \\
75  & 369 & 482 & 0.5740 & \textbf{0.6212} & 0.5352 & 0.5756 & 0.3590 \\
100 & 369 & 312 & 0.6200 & \textbf{0.6856} & 0.5285 & 0.5662 & 0.5458 \\
125 & 369 & 99  & 0.6139 & \textbf{0.7309} & 0.5237 & 0.6066 & 0.5269 \\
150 & 369 & 48  & 0.7667 & \textbf{0.9000} & 0.3333 & 0.4000 & 0.8667 \\
\bottomrule
\end{tabular}
\end{table} Table~\ref{tab:liberoplus} reports the preregistered episode-level aggregation as a secondary, confounded analysis.

\begin{table}[t]
\centering
\caption{Secondary episode-level analysis on 851 test episodes (369 timeout failures), stratified over 20 perturbation--severity cells, for the first monitor seed. Values are affected by episode length and are used only for relative ordering.}
\label{tab:liberoplus}
\footnotesize
\begin{tabular}{@{}l l r@{}}
\toprule
Signal & Family & Stratified AUROC \\
\midrule
uncoordinated residual, 16-sample mean & cross-representation & 0.8780 \\
uncoordinated residual & cross-representation & 0.8479 \\
coordinated residual, 16-sample mean & cross-representation & 0.8401 \\
coordinated residual & cross-representation & 0.8209 \\
\midrule
random-network distillation \citep{romer2025fiper} & observation side & 0.7524 \\
$\max|z|$ & observation side & 0.7066 \\
action-chunk entropy \citep{romer2025fiper} & action distribution & 0.5866 \\
\midrule
sampling variance, uncoordinated head & same-head dispersion & 0.5975 \\
sampling variance, coordinated head & same-head dispersion & 0.5621 \\
sampling variance, $\pi_{\mathrm{exec}}$ & same-head dispersion & 0.5378 \\
\bottomrule
\end{tabular}
\end{table}

On the episode-level analysis, the coordinated residual exceeds $\max|z|$ by $+0.1143$, with 95\% CI $[+0.0650,+0.1810]$; the corresponding residual scores from two additional monitor seeds are 0.8104 and 0.8319. The fixed-horizon result is more conservative and is the value used in the abstract and conclusions. The residual requires 11.70\,ms per step, compared with 0.0090\,ms for $\max|z|$; this cost--performance trade-off is not resolved by the present experiment.

\paragraph{Scoring correction}
An initial implementation standardized the condition vector twice before computing $\max|z|$, which weakened the comparison baseline and biased every margin quoted against it in the residual's favor. All $\max|z|$ comparisons were recomputed from the raw conditions under the specified normalization, and only the corrected values appear above and in \ref{app:liberoplus}; the correction reduced the margins without changing the ordering of the signal families. The false-alarm comparison, the monitored-head sampling variances, and the action-chunk-entropy calibration do not involve $\max|z|$ and were unaffected.

\paragraph{False alarms at multimodal steps}
\label{sec:fa-mechanism}
The prespecified multimodality flag identifies 17{,}140 of 55{,}279 successful test steps (31.0\%). At signal-specific conformal thresholds calibrated to 5\% false alarms, both cross-representation residuals alarm more often on multimodal than on unimodal steps. The coordinated residual's surcharge is 1.57 percentage points, with 95\% CI $[+0.70,+2.46]$; the observation-side $\max|z|$ control has a gap of $-0.01$ points (Table~\ref{tab:fagap}). This pattern links the surcharge specifically to representation disagreement rather than to multimodal states being uniformly harder.

\begin{table}[t]
\centering
\caption{False-alarm rates on successful test episodes, split by the prespecified multimodality flag. Thresholds are calibrated separately for each signal at 5\%; intervals use episode-level cluster bootstrap.}
\label{tab:fagap}
\footnotesize
\begin{tabular}{@{}l rrrr@{}}
\toprule
Signal & FA multimodal & FA unimodal & Gap (pp) & 95\% CI (pp) \\
\midrule
coordinated residual & 9.75\% & 8.19\% & $+1.57$ & $[+0.70,+2.46]$ \\
uncoordinated residual & 8.52\% & 6.98\% & $+1.54$ & $[+0.71,+2.46]$ \\
$\max|z|$ & 5.41\% & 5.42\% & $-0.01$ & $[-1.17,+1.30]$ \\
random-network distillation & 5.33\% & 6.38\% & $-1.05$ & $[-2.17,+0.14]$ \\
action-chunk entropy & 5.23\% & 5.63\% & $-0.39$ & $[-1.38,+0.51]$ \\
\bottomrule
\end{tabular}
\end{table}

The preregistered mechanism prediction was that coordination reduces this surcharge. Its registered success criterion was not met: the coordinated-minus-uncoordinated gap difference is $-0.03$ percentage points with 95\% CI $[-1.00,+0.86]$, an interval that leaves both no effect and a partial reduction open. The detection contrast is less precise. Coordinated minus uncoordinated AUROC averages $+0.0052\pm0.0288$ across the three monitor seeds, and the ranking reverses across seeds. This comparison is inconclusive at the seed level: it supports only the limited statement that no robust gain was observed, and it neither refutes a detection benefit nor establishes equivalence between the arms. In all three seeds, both residual arms remain above the observation-side and same-head dispersion baselines.

The attainable-region bound alone does not predict this null because it is an upper bound rather than a lower bound on coordination. The following measurement characterizes the regime in which the comparison was made.

\subsection{Modal capacity of the external monitored heads}
\label{sec:liberoplus-coverage}

This capacity analysis was designed after the monitoring result and was not part of the preregistered protocol. On the same 851 test episodes and frozen traces, 32 action chunks are drawn from each monitored branch at every one of 147{,}529 test steps. Coordination is measured with the paper's per-condition both-valid estimator under two partitions: the training-time token partition used by the coordinated model and a local two-cluster partition used by the multimodality flag. The analysis uses three training seeds and two independent Monte Carlo draws. It is used to characterize the regime and generate a falsifiable explanation, not to convert the failed mechanism prediction into a confirmatory result.

The demonstration neighborhoods contain a clear two-cluster structure: at flagged steps, their endpoint centroids are 0.163\,m apart with within-cluster dispersion 0.078\,m. The monitored heads cover little of this structure. Their sampled chunks span roughly 0.014--0.016\,m between centroids, and only 1.8--2.0\% of samples occupy the minority demonstration cluster. The resulting token-partition capacity is $C=0.0055$, compared with 0.4810 for the corresponding robot-side learned-prior token family.

\begin{table}[t]
\centering
\caption{Coordination on the LIBERO-Plus test split. Each test step is a condition; estimates use 32 samples per branch, both-valid marginals, and a three-seed mean. The last two rows retain only steps whose token-prior entropy exceeds 0.30 nats.}
\label{tab:liberoplus-coord}
\footnotesize
\begin{tabular}{@{}l rrr rrr@{}}
\toprule
& \multicolumn{3}{c}{Token partition} & \multicolumn{3}{c}{Local partition} \\
\cmidrule(lr){2-4}\cmidrule(l){5-7}
Arm & $\Delta$ & $C$ & $\rho$ & $\Delta$ & $C$ & $\rho$ \\
\midrule
coordinated & $\mathbf{+0.000135}$ & 0.0055 & $+2.5\%$ & $+0.000020$ & 0.0281 & $+0.1\%$ \\
\quad independent tokens & $-0.000007$ & 0.0055 & $-0.1\%$ & $+0.000004$ & 0.0282 & $+0.0\%$ \\
\quad shuffled pairing & $-0.000003$ & 0.0055 & $-0.1\%$ & $-0.000072$ & 0.0281 & $-0.3\%$ \\
\quad uncoordinated network & $+0.000012$ & 0.0068 & $+0.2\%$ & $+0.000049$ & 0.0309 & $+0.2\%$ \\
\midrule
\multicolumn{7}{@{}l}{\emph{prior entropy $>0.30$ nats}} \\
coordinated & $\mathbf{+0.00712}$ & 0.0315 & 22.6\% & $\mathbf{+0.00174}$ & 0.0144 & 12.1\% \\
\quad uncoordinated network & $+0.00034$ & 0.0225 & 1.5\% & $-0.00004$ & 0.0113 & $-0.3\%$ \\
\bottomrule
\end{tabular}
\end{table}

The coordinated model has a small but reproducible token-partition effect, $\Delta=0.000135\pm0.000040$, while all controls are close to zero. Its absolute size is limited by three factors. First, the capacity is low. Second, the learned token prior has conditional entropy 0.011--0.013 nats, with median zero, so the token is nearly deterministic given the condition. Third, when the token does vary, each branch follows it only weakly: on the 1.5--1.8\% of steps with prior entropy above 0.30 nats, token obedience is 53.0--55.9\% against a 51\% base rate. Restricting to these steps raises the capacity to 0.0315 and the effect to 0.00712, or $\rho=22.6\%$.

Low capacity and low token entropy are both sufficient to suppress the average effect, and the experiment cannot identify which is the dominant bottleneck. It therefore does not test the utility of coordination in a policy that already covers multiple modes with a genuinely stochastic shared symbol. Within the measured regime, the practical result is narrower: multimodality adds a specific false-alarm surcharge, the residual family remains the strongest evaluated signal at every tested horizon through its uncoordinated arm (the coordinated arm's margin holds only near 100--125 steps), and the available coordination intervention produces no robust monitoring improvement.

\section{Discussion}
\label{sec:discussion}

\subsection{Interpreting the attainable region}
\label{sec:claims}

The attainable region separates quantities that a mismatch rate alone combines.  The vertical coordinate, $\Delta=\mathrm{BL}-\mathrm{obs}$, measures same-label agreement beyond that implied by the two branch marginals.  The horizontal coordinate, $C_K=\sqrt{G(q)G(p)}$, is an outer capacity bound determined by the two modal marginals.  Valid-pair rate and behavioral-prior fidelity remain separate outcomes.  Reporting the components rather than one compressed score is the same design requirement that produces robustness conditions on multitask and task-similarity metrics \citep{nn2024penwarden,nn2025liu}.  This distinction is essential at both ends of the diagonal: a posterior can report zero mismatch while collapsing $C_K$ toward zero, whereas a balanced model can retain near-maximal capacity while remaining uncoordinated.

The mechanism study gives the region a concrete interpretation.  An auxiliary latent that is absent from the construction of the CFM state and target is ignored at the population squared-loss optimum; under branch-local sampling this yields $\Delta=0$.  Shared source noise is a different object because it determines the interpolation path and the sampling initial condition.  Its effect is therefore set by the alignment of the two noise-to-sample maps and changes sign across the representation maps studied here.  Endpoint consistency creates an interior effect, but at a substantial cost in valid-pair rate and only when a common source-noise coordinate is available.  Low-cardinality partition symbols are the most reliable route to the high-capacity corner in the robot experiments.  They are not a necessary route in general: the amortized posterior reaches the same region on the balanced testbed. Across these successful corner cases, the shared variable is informative during training and usable over its test-time support.

The skewed testbed further separates coordination from fidelity.  A uniform symbol marginal can keep $C_K$ high while distorting the generated mode frequencies; matching the symbol marginal to the data largely repairs that distortion and can also improve coordination when the branches otherwise override rare symbols.  Thus the top-right corner is not, by itself, a complete design objective.  A coordinated policy should be reported together with its valid-pair rate and with the discrepancy between generated and reference mode frequencies.

The external monitor experiment bounds the practical reading.  Benign multimodality increases false alarms for cross-representation residuals, and the residual remains the strongest evaluated signal family at a fixed 100-step horizon.  The token intervention did not meet its registered false-alarm criterion, and the detection contrast, whose sign changes across training seeds, is inconclusive.  The monitored heads have both very low modal capacity and an almost deterministic token prior, so the intervention creates little first-stage change in branch behavior; the experiment therefore constrains this regime rather than settings where the heads have modes to reconcile.

\subsection{Design implications}
\label{sec:waysout}

The results point to three distinct ways of handling cross-representation disagreement.  The first is to share part of the generative state rather than append an auxiliary random input.  A common trunk or canonical latent representation removes the conditional-independence structure used by Lemma~\ref{lem:zblind}; Mixture of Frames follows this route by fusing frame-specific information into one diffusion state \citep{wang2026mof}.  Such designs can improve consistency, but they also reduce or eliminate the residual that would otherwise be used as a monitor.

The second is to couple the samplers.  Hierarchical Diffusion Policy refines a joint-space trajectory toward the pose-space prediction through differentiable forward kinematics at inference \citep{ma2024hdp}.  Common random numbers provide a softer version: across models with closely aligned noise-to-sample maps, sharing $x_0$ removes much of the cross-mode term while retaining 62--93\% of the same-mode component (Section~\ref{sec:crn}).  Across heterogeneous representations, however, the same construction has no predictable sign unless the latent coordinates have a meaningful correspondence.

The third is to introduce an explicit low-cardinality symbol.  The robot experiments suggest that symbol granularity, rather than coordination strength alone, controls the residual floor: window-level symbols approach the oracle floor more closely than episode-level symbols.  A useful design would therefore choose the symbol at the shortest temporal scale that remains stable, train both branches to respond to every symbol value, and sample the symbol from a marginal consistent with the desired behavior distribution.  This is a stronger requirement than merely supplying the same random seed, but it does not require the symbol to coincide with a human-interpretable task label.

\subsection{Relation to prior work}
\label{sec:related}

\paragraph{Multiple action representations.}
Joint-space and task-space action representations have long been combined for accuracy and kinematic consistency.  Implicit Kinematic Policies sample one joint-space action and compute Cartesian quantities deterministically \citep{ganapathi2022ikp}; there is no pair of independently sampled representations and hence no cross-branch residual of the kind studied here.  Hierarchical Diffusion Policy uses separate pose and joint diffusion models with training-time distillation and inference-time kinematic refinement \citep{ma2024hdp}.  MimicIK uses a differentiable-forward-kinematics consistency loss with conditional flow matching \citep{yang2026mimicik}, while Mixture of Frames maintains a shared canonical generative state \citep{wang2026mof}.  These methods treat consistency as an accuracy or inference problem.  The present analysis instead asks what an independently sampled cross-space residual measures and when its benign component can be reduced without removing the signal itself.  The two questions are complementary; the results do not argue against multi-representation policies, whose representations can provide complementary predictive structure \citep{feng2026actionspace}.

\paragraph{Runtime monitoring.}
Sentinel combines temporal consistency with a vision--language model's progress estimate \citep{agia2024sentinel}; FAIL-Detect formulates monitoring as sequential out-of-distribution detection \citep{xu2025faildetect}; and FIPER combines observation-side random-network distillation with action-chunk entropy \citep{romer2025fiper}.  VFD/SAVE uses ensemble velocity-field disagreement and a coupled evaluation construction \citep{romer2026vfd}, with related recent work examining action-space probes and trajectory-derived warning signals \citep{actprobe2026,hideandseek2026}.  Section~\ref{sec:liberoplus} compares signal constructions on one common log rather than reproducing each method's original training and tuning protocol.  The resulting ranking should therefore be read as a comparison of signals under a shared evaluation, not as a leaderboard among complete monitoring systems.  Benchmarks of conditional generative models under one protocol draw the same line between comparing constructions and comparing separately tuned systems \citep{nn2026kohl}.

\paragraph{Multimodality and disagreement.}
Diff-DAgger notes that ensemble disagreement can be high at multimodal decision points even when several actions are valid \citep{lee2025diffdagger}.  The present work studies the same failure mode across representations of one policy and provides a chance-corrected scale for it.  The ensemble literature has long used kappa and related pairwise diversity measures \citep{margineantu1997,kuncheva2003}; the semantic difference is that disagreement between equally valid modes is a nuisance for a residual monitor rather than a desirable source of ensemble diversity.

\paragraph{Kappa--error diagrams and their feasible region.}
The feasible region of a kappa--error diagram has been derived before.  \citet{kuncheva2013bound} bounds kappa as a function of the pair's individual accuracies, delimits the attainable part of the kappa--error plane, and reports that part of that area is left unoccupied by real ensembles.  Kappa is computed there on the $2\times2$ correct/wrong table of two classifiers scored against ground truth.  That paper names two perspectives at the outset, agreement in assigning class labels and agreement in assigning the correct label, notes that both produce the same qualitative pattern, and takes the second because it lends itself to the algebraic manipulations.  The present calculus develops the first perspective, and the substitution changes what the horizontal coordinate certifies.  $C_K=\sqrt{G(q)G(p)}$ is a function of each branch's own modal marginal and requires no correctness label, so a small $C_K$ reports low output diversity rather than low accuracy, and zero mismatch at small $C_K$ identifies collapse.  A bound computed against ground truth is available only where correctness labels are; $C_K$ is computed from the branches' own outputs, which is what lets the certificate run at deployment rather than in post-hoc analysis.  The region is additionally conditioned on both branches returning valid trajectories, which the correct/wrong construction does not require, and it is used here to exclude candidate designs for generative policies rather than to characterize an ensemble already built.  \citet{kuncheva2003} find pairwise diversity measures only weakly related to ensemble accuracy; what coordination is tied to here is the composition of the residual, and the single performance comparison is the monitoring experiment of Section~\ref{sec:liberoplus}.

\paragraph{Latents and coupled noise.}
Variational Rectified Flow Matching uses a data-dependent posterior during training to make the latent informative to a velocity field \citep{guo2025vrfm}; hierarchical flow-matching models provide another route to multimodal structure \citep{hrfm2025}.  Lemma~\ref{lem:zblind} concerns the complementary case in which an auxiliary latent is mean-irrelevant to the CFM target.  Initial-noise coupling can nevertheless alter the joint law while preserving each marginal \citep{jia2026couple}.  Multi-view generation often benefits because the branches share a common spatial coordinate system or exchange features explicitly \citep{dong2025noisecontroller,liu2024syncdreamer}.  The map study here shows why that success does not transfer automatically to heterogeneous action representations: the sign of common-random-number coordination follows the alignment of the two noise-to-sample maps.

\subsection{Testable implications}
\label{sec:falsifiability}

Three implications are especially useful for subsequent work.  First, an auxiliary latent satisfying the conditions of Lemma~\ref{lem:zblind} should remain within the experimental zero band even when model size, mode count, or mode balance changes; the corresponding test is a sweep across many trained models, in the manner of the re-examination of long-term dependence in recurrent networks \citep{nn2025johnston}, and a reproducible nonzero effect would indicate that the latent enters the target construction, the loss, or the sampling dependence in an unaccounted way.  Second, source-noise coupling should strengthen as the two representations acquire better aligned noise-to-sample maps and may become anti-coordinating when those maps are oppositely aligned.  This can be tested directly by interpolating between shared and representation-specific latent coordinates.

Third, the utility of coordination should be evaluated where it can actually change the sampled modes.  A decisive monitor experiment therefore requires non-negligible per-condition modal capacity, a shared symbol with appreciable conditional entropy, and measurable obedience of both branches to that symbol.  Under those conditions, a reduction in false alarms without a loss of failure sensitivity would establish practical value that the LIBERO-Plus experiment here could not test.  Conversely, when the separation between valid modes is below the same-mode residual floor, the benign surcharge itself should largely disappear.  A concise set of follow-up tests is listed in \ref{app:falsify}.

\subsection{Limitations}
\label{sec:limitations}

The two robot environments are deliberately controlled and closely related.  They use one simulator, one embodiment, state rather than visual conditioning, two constructed modes, and relatively small flow-matching networks.  RotatingBypass removes the fixed-coordinate shortcut of BimodalBypass but is not an independent application domain.  The orientation-head experiment covers only a small set of configurations, and all robot-side conclusions about the attainable region are limited to $K=2$.  The non-robotic testbed broadens the representation maps, mode counts, and priors, but its trajectories and validity mechanism are much simpler than manipulation dynamics.

The robot-side mechanism study is not a fully factorial causal experiment; the preregistered map-by-coupling factorial exists only on the testbed. A preregistered seed expansion (Table~\ref{tab:seeds}) brings the claim-bearing families to six to ten training seeds with the seed as the inference unit: the floors pass seed-level equivalence tests at $\pm2\%$, the interior ladder is monotone with its $\lambda{=}1$ rung at five seeds, the corner families hold at the exact caliber, and source-noise instability is measured rather than assumed (eight v2 seeds spanning $-4.3\%$ to $+4.7\%$).  Some cells remain thin: $\beta{=}0.1$ rows, oracle configurations, pose configurations, and third-representation runs are single-seed, and the posterior family has five runs over three KL scales.  Bootstrap intervals over evaluation conditions still quantify only per-checkpoint variation.  Seventeen of the original robot runs can be reconstructed at the exact both-valid marginal convention while the remaining fifty retain the head-valid approximation; every seed-expansion run stores both-valid contingency marginals natively, as the original pipeline should have.

The external experiment covers one LIBERO suite and one executing-policy family.  Every test failure is a 250-step timeout, which makes episode-level aggregates strongly length-confounded; the fixed 100-step analysis avoids that particular confound, was added after the length problem was identified, and has since been repeated at four further horizons with a split outcome: the family ordering is horizon-stable while the coordinated arm's margin is not (Table~\ref{tab:horizons}).  The comparison uses signals re-evaluated on a common frozen log rather than third-party implementations tuned by their authors.  Most importantly, the coordinated monitored heads exhibit very low modal capacity and token entropy, so the experiment does not determine whether coordination would help a policy that genuinely samples several modes.  The preregistered false-alarm test is graded against its registered criterion, which was not met; the detection contrast remains limited by three training seeds.  The capacity-and-entropy explanation was measured after the monitoring outcome and is treated as post hoc.

\subsection{Statistical scope}
\label{sec:stats-honesty}

The inferential unit should match the claim.  Condition-level bootstrap intervals describe uncertainty over the evaluated condition set, whereas comparisons between training procedures should also be read against the observed seed spread.  This distinction is material for endpoint consistency and for the coordinated-versus-uncoordinated detection contrast.  Equivalence tests for the auxiliary-latent family are performed on the normalized effect $\rho$ and establish that the finite-sample effect lies within the prespecified practical band; they do not turn an empirical estimate into an exact zero.  The exact zero is the population prediction under Lemma~\ref{lem:zblind} and Corollary~\ref{cor:zblind}.

\section{Conclusion}
\label{sec:conclusion}

Cross-representation disagreement in a multimodal policy cannot be interpreted from the mismatch rate alone.  The chance-corrected effect
$\Delta=\mathrm{BL}-\mathrm{obs}$ separates agreement beyond the branches' own mode marginals, and the bound
$|\Delta|\le C_K=\sqrt{G(q)G(p)}$ places that effect on an attainable region whose horizontal coordinate is modal diversity.  Reporting the effect together with $C_K$ distinguishes genuine coordination from both chance agreement and mode collapse; valid-pair rate and behavioral-prior fidelity remain separate quantities.

The mechanism study identifies several qualitatively different ways to occupy this region.  An auxiliary latent that does not participate in constructing the CFM interpolation state or target is ignored by the population squared-loss optimum.  With branch-local sampling, its predicted coordination effect is exactly zero, and the finite experiments place the effect within the prespecified equivalence band.  Shared source noise is not covered by that result: it selects a coupling, and its effect changes from negative to positive as the relation between the two representation maps changes, a reversal confirmed in a preregistered factorial at ten seeds per cell.  Endpoint consistency produces an interior coordination effect, but only in conjunction with shared source noise and with a substantial reduction in valid-pair rate.

Low-cardinality partition symbols provide the strongest and most reliable robot-side coordination.  They need not coincide with semantic task labels, and they need not be inferred from the observation at test time.  Their marginal and temporal granularity nevertheless matter: a poorly matched marginal distorts the generated mode frequencies under skew, while a coarse symbol leaves a larger same-mode residual floor.  The balanced testbed also shows that an explicit partition token is not the only possible route to the high-capacity corner, because an amortized posterior reaches the same region without collapse.  What the successful corner cases share is a variable that is informative during training and remains usable over its test-time support.

The monitoring results put a practical boundary around these mechanism findings.  On LIBERO-Plus, benign multimodality raises the cross-representation residual's false-alarm rate by 1.57 percentage points relative to unimodal steps, while observation-side signals show no corresponding surcharge.  The residual nevertheless remains the strongest evaluated signal family at a fixed 100-step horizon.  The preregistered prediction that sharing the token would reduce the surcharge did not meet its registered criterion in this regime; the effect on failure detection is smaller than the variation across three training seeds and remains inconclusive.  The later capacity analysis finds modal capacity 0.0055 and an almost deterministic token prior, but because that analysis was designed after the monitoring outcome, it characterizes the regime rather than confirming the original prediction.  The supported practical statement is therefore narrow: no robust monitoring gain was observed where the shared symbol had almost no modal variation to organize.

The broader conclusion is therefore not that disagreement should be removed, nor that coordination is without value.  It is that coordination has a measurable capacity, a mechanism, and a cost.  A useful design or monitor should state all three: how much same-mode agreement exceeds chance, how much modal diversity makes that agreement possible, and what is paid in validity or behavioral fidelity.  The attainable region provides a compact way to make those distinctions before a low residual is interpreted as success.

\section*{Declaration of competing interest}
The authors declare that they have no known competing financial interests or
personal relationships that could have appeared to influence the work reported
in this paper.

\section*{Acknowledgements}
This work was supported in part by the Jiangsu Provincial Major Science and
Technology Program under Grant BG2025043, in part by the National Natural
Science Foundation of China under Grant 52502402, in part by the Basic Research
Program of Jiangsu Province under Grant BK20251462, and in part by the Natural
Science Foundation of Jiangsu Province under Grant BK20232038.

\section*{Data availability}
All numerical results are computed from frozen evaluation products. The code is
available at \url{https://github.com/kimo423/dual-head-coordination} under an
MIT licence: the dual-head model family, the testbed and its representation
maps, the differentiable forward kinematics, the external monitor pipeline, one
configuration file per training run reported here, and the regression suite that
pins the effect-size algorithms. Gate files store per-condition sufficient
statistics, so every effect, capacity, occupancy and bootstrap interval in the
paper is recomputable from the stored counts without access to model
checkpoints, which are therefore not required to reproduce any number reported.
The evaluation products are released in the same repository: the per-point
coordinate tables behind each figure, the frozen gate files for all robot-side
and testbed runs, the preregistration documents with the commit hashes and
timestamps that place each before the runs it governs, and the LIBERO-Plus
per-step scoring log for all 1664 episodes. The two testbed datasets (675\,MB
and 945\,MB) are regenerated by the deterministic seeded generators shipped with
the code rather than deposited.

\printcredits

\bibliographystyle{cas-model2-names}
\bibliography{refs_min}

\begin{thebibliography}{51}
\expandafter\ifx\csname natexlab\endcsname\relax\def\natexlab#1{#1}\fi
\providecommand{\url}[1]{\texttt{#1}}
\providecommand{\href}[2]{#2}
\providecommand{\path}[1]{#1}
\providecommand{\DOIprefix}{doi:}
\providecommand{\ArXivprefix}{arXiv:}
\providecommand{\URLprefix}{URL: }
\providecommand{\Pubmedprefix}{pmid:}
\providecommand{\doi}[1]{\href{http://dx.doi.org/#1}{\path{#1}}}
\providecommand{\Pubmed}[1]{\href{pmid:#1}{\path{#1}}}
\providecommand{\bibinfo}[2]{#2}
\ifx\xfnm\relax \def\xfnm[#1]{\unskip,\space#1}\fi
\bibitem[{Agia et~al.(2024)Agia, Sinha, Yang, Cao, Antonova, Pavone and
  Bohg}]{agia2024sentinel}
\bibinfo{author}{Agia, C.}, \bibinfo{author}{Sinha, R.}, \bibinfo{author}{Yang,
  J.}, \bibinfo{author}{Cao, Z.a.}, \bibinfo{author}{Antonova, R.},
  \bibinfo{author}{Pavone, M.}, \bibinfo{author}{Bohg, J.},
  \bibinfo{year}{2024}.
\newblock \bibinfo{title}{Unpacking failure modes of generative policies:
  runtime monitoring of consistency and progress}, in:
  \bibinfo{booktitle}{Conference on Robot Learning (CoRL)}.
\newblock \bibinfo{note}{ArXiv:2410.04640}.
\bibitem[{Agresti(1992)}]{agresti1992}
\bibinfo{author}{Agresti, A.}, \bibinfo{year}{1992}.
\newblock \bibinfo{title}{Modelling patterns of agreement and disagreement}.
\newblock \bibinfo{journal}{Statistical Methods in Medical Research}
  \bibinfo{volume}{1}, \bibinfo{pages}{201--218}.
\bibitem[{Byrt et~al.(1993)Byrt, Bishop and Carlin}]{byrt1993}
\bibinfo{author}{Byrt, T.}, \bibinfo{author}{Bishop, J.},
  \bibinfo{author}{Carlin, J.B.}, \bibinfo{year}{1993}.
\newblock \bibinfo{title}{Bias, prevalence and kappa}.
\newblock \bibinfo{journal}{Journal of Clinical Epidemiology}
  \bibinfo{volume}{46}, \bibinfo{pages}{423--429}.
\bibitem[{Chi et~al.(2023)Chi, Feng, Du, Xu, Cousineau, Burchfiel and
  Song}]{chi2023diffusion}
\bibinfo{author}{Chi, C.}, \bibinfo{author}{Feng, S.}, \bibinfo{author}{Du,
  Y.}, \bibinfo{author}{Xu, Z.}, \bibinfo{author}{Cousineau, E.},
  \bibinfo{author}{Burchfiel, B.}, \bibinfo{author}{Song, S.},
  \bibinfo{year}{2023}.
\newblock \bibinfo{title}{Diffusion policy: visuomotor policy learning via
  action diffusion}, in: \bibinfo{booktitle}{Robotics: Science and Systems
  (RSS)}.
\newblock \bibinfo{note}{ArXiv:2303.04137}.
\bibitem[{Cicchetti and Feinstein(1990)}]{cicchetti1990}
\bibinfo{author}{Cicchetti, D.V.}, \bibinfo{author}{Feinstein, A.R.},
  \bibinfo{year}{1990}.
\newblock \bibinfo{title}{High agreement but low kappa: {II}. {R}esolving the
  paradoxes}.
\newblock \bibinfo{journal}{Journal of Clinical Epidemiology}
  \bibinfo{volume}{43}, \bibinfo{pages}{551--558}.
\bibitem[{Cohen(1960)}]{cohen1960}
\bibinfo{author}{Cohen, J.}, \bibinfo{year}{1960}.
\newblock \bibinfo{title}{A coefficient of agreement for nominal scales}.
\newblock \bibinfo{journal}{Educational and Psychological Measurement}
  \bibinfo{volume}{20}, \bibinfo{pages}{37--46}.
\bibitem[{Cureton(1959)}]{cureton1959}
\bibinfo{author}{Cureton, E.E.}, \bibinfo{year}{1959}.
\newblock \bibinfo{title}{Note on $\varphi/\varphi_{\max}$}.
\newblock \bibinfo{journal}{Psychometrika} \bibinfo{volume}{24},
  \bibinfo{pages}{89--91}.
\bibitem[{Dong et~al.(2025)Dong, Wang, Lin, Wu, Chen, Liu, Yang, Li and
  Guo}]{dong2025noisecontroller}
\bibinfo{author}{Dong, H.}, \bibinfo{author}{Wang, X.}, \bibinfo{author}{Lin,
  D.}, \bibinfo{author}{Wu, Y.}, \bibinfo{author}{Chen, Q.},
  \bibinfo{author}{Liu, R.}, \bibinfo{author}{Yang, K.}, \bibinfo{author}{Li,
  P.}, \bibinfo{author}{Guo, Q.}, \bibinfo{year}{2025}.
\newblock \bibinfo{title}{{NoiseController}: towards consistent multi-view
  video generation via noise decomposition and collaboration}, in:
  \bibinfo{booktitle}{IEEE/CVF International Conference on Computer Vision
  (ICCV)}, pp. \bibinfo{pages}{14443--14452}.
\newblock \bibinfo{note}{ArXiv:2504.18448}.
\bibitem[{Fei et~al.(2025)Fei, Wang, Shi, Dai, Cai, Qian, Ji, He, Zhang, Fei,
  Fu, Gong and Qiu}]{fei2025liberoplus}
\bibinfo{author}{Fei, S.}, \bibinfo{author}{Wang, S.}, \bibinfo{author}{Shi,
  J.}, \bibinfo{author}{Dai, Z.}, \bibinfo{author}{Cai, J.},
  \bibinfo{author}{Qian, P.}, \bibinfo{author}{Ji, L.}, \bibinfo{author}{He,
  X.}, \bibinfo{author}{Zhang, S.}, \bibinfo{author}{Fei, Z.},
  \bibinfo{author}{Fu, J.}, \bibinfo{author}{Gong, J.}, \bibinfo{author}{Qiu,
  X.}, \bibinfo{year}{2025}.
\newblock \bibinfo{title}{{LIBERO}-plus: in-depth robustness analysis of
  vision--language--action models}.
\newblock \bibinfo{note}{Preprint, arXiv:2510.13626}.
\bibitem[{Feinstein and Cicchetti(1990)}]{feinstein1990}
\bibinfo{author}{Feinstein, A.R.}, \bibinfo{author}{Cicchetti, D.V.},
  \bibinfo{year}{1990}.
\newblock \bibinfo{title}{High agreement but low kappa: {I}. {T}he problems of
  two paradoxes}.
\newblock \bibinfo{journal}{Journal of Clinical Epidemiology}
  \bibinfo{volume}{43}, \bibinfo{pages}{543--549}.
\bibitem[{Feng et~al.(2026)Feng, Zheng, Wang, Liu, Li, Pang, Wang and
  Zhan}]{feng2026actionspace}
\bibinfo{author}{Feng, Y.}, \bibinfo{author}{Zheng, J.}, \bibinfo{author}{Wang,
  Z.}, \bibinfo{author}{Liu, D.}, \bibinfo{author}{Li, J.},
  \bibinfo{author}{Pang, J.}, \bibinfo{author}{Wang, T.},
  \bibinfo{author}{Zhan, X.}, \bibinfo{year}{2026}.
\newblock \bibinfo{title}{Demystifying action space design for robotic
  manipulation policies}.
\newblock \bibinfo{note}{Preprint, arXiv:2602.23408}.
\bibitem[{Ganapathi et~al.(2022)Ganapathi, Florence, Varley, Burns, Goldberg
  and Zeng}]{ganapathi2022ikp}
\bibinfo{author}{Ganapathi, A.}, \bibinfo{author}{Florence, P.},
  \bibinfo{author}{Varley, J.}, \bibinfo{author}{Burns, K.},
  \bibinfo{author}{Goldberg, K.}, \bibinfo{author}{Zeng, A.},
  \bibinfo{year}{2022}.
\newblock \bibinfo{title}{Implicit kinematic policies: unifying joint and
  {C}artesian action spaces in end-to-end robot learning}, in:
  \bibinfo{booktitle}{IEEE International Conference on Robotics and Automation
  (ICRA)}.
\newblock \bibinfo{note}{ArXiv:2203.01983}.
\bibitem[{Gorodkin(2004)}]{gorodkin2004}
\bibinfo{author}{Gorodkin, J.}, \bibinfo{year}{2004}.
\newblock \bibinfo{title}{Comparing two {K}-category assignments by a
  {K}-category correlation coefficient}.
\newblock \bibinfo{journal}{Computational Biology and Chemistry}
  \bibinfo{volume}{28}, \bibinfo{pages}{367--374}.
\bibitem[{Guo and Schwing(2025)}]{guo2025vrfm}
\bibinfo{author}{Guo, P.}, \bibinfo{author}{Schwing, A.G.},
  \bibinfo{year}{2025}.
\newblock \bibinfo{title}{Variational rectified flow matching}, in:
  \bibinfo{booktitle}{Proceedings of the 42nd International Conference on
  Machine Learning (ICML)}, pp. \bibinfo{pages}{20921--20940}.
\newblock \bibinfo{note}{ArXiv:2502.09616}.
\bibitem[{Huang et~al.(2026)Huang, Li, Wang, Mi, Hao, Wang, Wu, Li, Liu and
  Cao}]{actprobe2026}
\bibinfo{author}{Huang, B.}, \bibinfo{author}{Li, X.}, \bibinfo{author}{Wang,
  X.}, \bibinfo{author}{Mi, L.}, \bibinfo{author}{Hao, Z.},
  \bibinfo{author}{Wang, W.}, \bibinfo{author}{Wu, H.}, \bibinfo{author}{Li,
  K.}, \bibinfo{author}{Liu, Y.}, \bibinfo{author}{Cao, T.},
  \bibinfo{year}{2026}.
\newblock \bibinfo{title}{{ActProbe}: action-space probe for early failure
  detection of generative robot policies}.
\newblock \bibinfo{note}{Preprint, arXiv:2606.08508}.
\bibitem[{Hwang et~al.(2025)Hwang, Kang, Eo, Kim and Rhee}]{nn2025hwang}
\bibinfo{author}{Hwang, D.}, \bibinfo{author}{Kang, S.}, \bibinfo{author}{Eo,
  M.}, \bibinfo{author}{Kim, J.}, \bibinfo{author}{Rhee, W.},
  \bibinfo{year}{2025}.
\newblock \bibinfo{title}{Towards a better evaluation of out-of-domain
  generalization}.
\newblock \bibinfo{journal}{Neural Networks} \bibinfo{volume}{188},
  \bibinfo{pages}{107434}.
\newblock \DOIprefix\doi{10.1016/j.neunet.2025.107434}.
\bibitem[{Jia et~al.(2026)Jia, Shen and Wang}]{jia2026couple}
\bibinfo{author}{Jia, J.}, \bibinfo{author}{Shen, L.}, \bibinfo{author}{Wang,
  G.}, \bibinfo{year}{2026}.
\newblock \bibinfo{title}{Couple to control: joint initial noise design in
  diffusion models}.
\newblock \bibinfo{note}{Preprint, arXiv:2605.11311}.
\bibitem[{Johnston et~al.(2025)Johnston, Patel, Cui and
  Balaprakash}]{nn2025johnston}
\bibinfo{author}{Johnston, L.}, \bibinfo{author}{Patel, V.},
  \bibinfo{author}{Cui, Y.}, \bibinfo{author}{Balaprakash, P.},
  \bibinfo{year}{2025}.
\newblock \bibinfo{title}{Revisiting the problem of learning long-term
  dependencies in recurrent neural networks}.
\newblock \bibinfo{journal}{Neural Networks} \bibinfo{volume}{183},
  \bibinfo{pages}{106887}.
\newblock \DOIprefix\doi{10.1016/j.neunet.2024.106887}.
\bibitem[{Jurman et~al.(2012)Jurman, Riccadonna and Furlanello}]{jurman2012}
\bibinfo{author}{Jurman, G.}, \bibinfo{author}{Riccadonna, S.},
  \bibinfo{author}{Furlanello, C.}, \bibinfo{year}{2012}.
\newblock \bibinfo{title}{A comparison of {MCC} and {CEN} error measures in
  multi-class prediction}.
\newblock \bibinfo{journal}{PLoS ONE} \bibinfo{volume}{7},
  \bibinfo{pages}{e41882}.
\bibitem[{Kohl et~al.(2026)Kohl, Chen and Thuerey}]{nn2026kohl}
\bibinfo{author}{Kohl, G.}, \bibinfo{author}{Chen, L.W.},
  \bibinfo{author}{Thuerey, N.}, \bibinfo{year}{2026}.
\newblock \bibinfo{title}{Benchmarking autoregressive conditional diffusion
  models for turbulent flow simulation}.
\newblock \bibinfo{journal}{Neural Networks} \bibinfo{volume}{199},
  \bibinfo{pages}{108641}.
\newblock \DOIprefix\doi{10.1016/j.neunet.2026.108641}.
\bibitem[{Kuncheva(2013)}]{kuncheva2013bound}
\bibinfo{author}{Kuncheva, L.I.}, \bibinfo{year}{2013}.
\newblock \bibinfo{title}{A bound on kappa-error diagrams for analysis of
  classifier ensembles}.
\newblock \bibinfo{journal}{IEEE Transactions on Knowledge and Data
  Engineering} \bibinfo{volume}{25}, \bibinfo{pages}{494--501}.
\newblock \DOIprefix\doi{10.1109/tkde.2011.234}.
\bibitem[{Kuncheva and Whitaker(2003)}]{kuncheva2003}
\bibinfo{author}{Kuncheva, L.I.}, \bibinfo{author}{Whitaker, C.J.},
  \bibinfo{year}{2003}.
\newblock \bibinfo{title}{Measures of diversity in classifier ensembles and
  their relationship with the ensemble accuracy}.
\newblock \bibinfo{journal}{Machine Learning} \bibinfo{volume}{51},
  \bibinfo{pages}{181--207}.
\bibitem[{Lakens(2017)}]{lakens2017}
\bibinfo{author}{Lakens, D.}, \bibinfo{year}{2017}.
\newblock \bibinfo{title}{Equivalence tests: a practical primer for $t$ tests,
  correlations, and meta-analyses}.
\newblock \bibinfo{journal}{Social Psychological and Personality Science}
  \bibinfo{volume}{8}, \bibinfo{pages}{355--362}.
\bibitem[{Lee et~al.(2025)Lee, Kang and Kuo}]{lee2025diffdagger}
\bibinfo{author}{Lee, S.W.}, \bibinfo{author}{Kang, X.}, \bibinfo{author}{Kuo,
  Y.L.}, \bibinfo{year}{2025}.
\newblock \bibinfo{title}{Diff-{DA}gger: uncertainty estimation with diffusion
  policy for robotic manipulation}, in: \bibinfo{booktitle}{IEEE International
  Conference on Robotics and Automation (ICRA)}.
\newblock \bibinfo{note}{ArXiv:2410.14868}.
\bibitem[{Lipman et~al.(2023)Lipman, Chen, Ben-Hamu, Nickel and
  Le}]{lipman2023flow}
\bibinfo{author}{Lipman, Y.}, \bibinfo{author}{Chen, R.T.Q.},
  \bibinfo{author}{Ben-Hamu, H.}, \bibinfo{author}{Nickel, M.},
  \bibinfo{author}{Le, M.}, \bibinfo{year}{2023}.
\newblock \bibinfo{title}{Flow matching for generative modeling}, in:
  \bibinfo{booktitle}{International Conference on Learning Representations
  (ICLR)}.
\newblock \bibinfo{note}{ArXiv:2210.02747}.
\bibitem[{Liu et~al.(2023)Liu, Zhu, Gao, Feng, Liu, Zhu and
  Stone}]{liu2023libero}
\bibinfo{author}{Liu, B.}, \bibinfo{author}{Zhu, Y.}, \bibinfo{author}{Gao,
  C.}, \bibinfo{author}{Feng, Y.}, \bibinfo{author}{Liu, Q.},
  \bibinfo{author}{Zhu, Y.}, \bibinfo{author}{Stone, P.}, \bibinfo{year}{2023}.
\newblock \bibinfo{title}{{LIBERO}: benchmarking knowledge transfer for
  lifelong robot learning}, in: \bibinfo{booktitle}{Advances in Neural
  Information Processing Systems (NeurIPS), Datasets and Benchmarks Track}.
\newblock \bibinfo{note}{ArXiv:2306.03310}.
\bibitem[{Liu et~al.(2025)Liu, Bai, Lu, Soltoggio and Kolouri}]{nn2025liu}
\bibinfo{author}{Liu, X.}, \bibinfo{author}{Bai, Y.}, \bibinfo{author}{Lu, Y.},
  \bibinfo{author}{Soltoggio, A.}, \bibinfo{author}{Kolouri, S.},
  \bibinfo{year}{2025}.
\newblock \bibinfo{title}{Wasserstein task embedding for measuring task
  similarities}.
\newblock \bibinfo{journal}{Neural Networks} \bibinfo{volume}{181},
  \bibinfo{pages}{106796}.
\newblock \DOIprefix\doi{10.1016/j.neunet.2024.106796}.
\bibitem[{Liu et~al.(2024)Liu, Lin, Zeng, Long, Liu, Komura and
  Wang}]{liu2024syncdreamer}
\bibinfo{author}{Liu, Y.}, \bibinfo{author}{Lin, C.}, \bibinfo{author}{Zeng,
  Z.}, \bibinfo{author}{Long, X.}, \bibinfo{author}{Liu, L.},
  \bibinfo{author}{Komura, T.}, \bibinfo{author}{Wang, W.},
  \bibinfo{year}{2024}.
\newblock \bibinfo{title}{{SyncDreamer}: generating multiview-consistent images
  from a single-view image}, in: \bibinfo{booktitle}{International Conference
  on Learning Representations (ICLR)}.
\newblock \bibinfo{note}{ArXiv:2309.03453}.
\bibitem[{Ma et~al.(2024)Ma, Patidar, Haughton and James}]{ma2024hdp}
\bibinfo{author}{Ma, X.}, \bibinfo{author}{Patidar, S.},
  \bibinfo{author}{Haughton, I.}, \bibinfo{author}{James, S.},
  \bibinfo{year}{2024}.
\newblock \bibinfo{title}{Hierarchical diffusion policy for kinematics-aware
  multi-task robotic manipulation}, in: \bibinfo{booktitle}{IEEE/CVF Conference
  on Computer Vision and Pattern Recognition (CVPR)}.
\newblock \bibinfo{note}{ArXiv:2403.03890}.
\bibitem[{Margineantu and Dietterich(1997)}]{margineantu1997}
\bibinfo{author}{Margineantu, D.D.}, \bibinfo{author}{Dietterich, T.G.},
  \bibinfo{year}{1997}.
\newblock \bibinfo{title}{Pruning adaptive boosting}, in:
  \bibinfo{booktitle}{Proceedings of the 14th International Conference on
  Machine Learning (ICML)}, pp. \bibinfo{pages}{211--218}.
\bibitem[{Matthews(1975)}]{matthews1975}
\bibinfo{author}{Matthews, B.W.}, \bibinfo{year}{1975}.
\newblock \bibinfo{title}{Comparison of the predicted and observed secondary
  structure of {T4} phage lysozyme}.
\newblock \bibinfo{journal}{Biochimica et Biophysica Acta (BBA) -- Protein
  Structure} \bibinfo{volume}{405}, \bibinfo{pages}{442--451}.
\bibitem[{Menda et~al.(2019)Menda, Driggs-Campbell and
  Kochenderfer}]{menda2019ensembledagger}
\bibinfo{author}{Menda, K.}, \bibinfo{author}{Driggs-Campbell, K.},
  \bibinfo{author}{Kochenderfer, M.J.}, \bibinfo{year}{2019}.
\newblock \bibinfo{title}{Ensemble{DA}gger: a {B}ayesian approach to safe
  imitation learning}, in: \bibinfo{booktitle}{IEEE/RSJ International
  Conference on Intelligent Robots and Systems (IROS)}.
\newblock \bibinfo{note}{ArXiv:1807.08364}.
\bibitem[{Park et~al.(2026)Park, Li, Oh, Yeh, Kira, Hagenow and
  Li}]{hideandseek2026}
\bibinfo{author}{Park, S.}, \bibinfo{author}{Li, W.}, \bibinfo{author}{Oh, C.},
  \bibinfo{author}{Yeh, S.}, \bibinfo{author}{Kira, Z.},
  \bibinfo{author}{Hagenow, M.}, \bibinfo{author}{Li, S.},
  \bibinfo{year}{2026}.
\newblock \bibinfo{title}{Hide-and-seek in trajectories: discovering failure
  signals for {VLA} runtime monitoring}.
\newblock \bibinfo{note}{Preprint, arXiv:2605.30834}.
\bibitem[{Penwarden et~al.(2024)Penwarden, Owhadi and Kirby}]{nn2024penwarden}
\bibinfo{author}{Penwarden, M.}, \bibinfo{author}{Owhadi, H.},
  \bibinfo{author}{Kirby, R.M.}, \bibinfo{year}{2024}.
\newblock \bibinfo{title}{Kolmogorov n-widths for multitask physics-informed
  machine learning (piml) methods: Towards robust metrics}.
\newblock \bibinfo{journal}{Neural Networks} \bibinfo{volume}{180},
  \bibinfo{pages}{106703}.
\newblock \DOIprefix\doi{10.1016/j.neunet.2024.106703}.
\bibitem[{Powers(2012)}]{powers2012kappa}
\bibinfo{author}{Powers, D.M.W.}, \bibinfo{year}{2012}.
\newblock \bibinfo{title}{The problem with kappa}, in:
  \bibinfo{booktitle}{Proceedings of the 13th Conference of the European
  Chapter of the Association for Computational Linguistics},
  \bibinfo{publisher}{Association for Computational Linguistics},
  \bibinfo{address}{Avignon, France}. pp. \bibinfo{pages}{345--355}.
\newblock \URLprefix \url{https://aclanthology.org/E12-1035/}.
\bibitem[{R{\"o}mer et~al.(2025)R{\"o}mer, Kobras, Worbis and
  Schoellig}]{romer2025fiper}
\bibinfo{author}{R{\"o}mer, R.}, \bibinfo{author}{Kobras, A.},
  \bibinfo{author}{Worbis, L.}, \bibinfo{author}{Schoellig, A.P.},
  \bibinfo{year}{2025}.
\newblock \bibinfo{title}{Failure prediction at runtime for generative robot
  policies}, in: \bibinfo{booktitle}{Advances in Neural Information Processing
  Systems (NeurIPS)}.
\newblock \bibinfo{note}{ArXiv:2510.09459}.
\bibitem[{R{\"o}mer et~al.(2026)R{\"o}mer, Seeliger, Liu, Sturgis, Bagatella,
  Marta, Krause and Schoellig}]{romer2026vfd}
\bibinfo{author}{R{\"o}mer, R.}, \bibinfo{author}{Seeliger, M.},
  \bibinfo{author}{Liu, S.}, \bibinfo{author}{Sturgis, B.},
  \bibinfo{author}{Bagatella, M.}, \bibinfo{author}{Marta, D.},
  \bibinfo{author}{Krause, A.}, \bibinfo{author}{Schoellig, A.P.},
  \bibinfo{year}{2026}.
\newblock \bibinfo{title}{Uncertainty quantification for flow-based
  vision--language--action models}.
\newblock \bibinfo{note}{Preprint, arXiv:2606.18043}.
\bibitem[{Schuirmann(1987)}]{schuirmann1987}
\bibinfo{author}{Schuirmann, D.J.}, \bibinfo{year}{1987}.
\newblock \bibinfo{title}{A comparison of the two one-sided tests procedure and
  the power approach for assessing the equivalence of average bioavailability}.
\newblock \bibinfo{journal}{Journal of Pharmacokinetics and Biopharmaceutics}
  \bibinfo{volume}{15}, \bibinfo{pages}{657--680}.
\bibitem[{Sim and Wright(2005)}]{simwright2005}
\bibinfo{author}{Sim, J.}, \bibinfo{author}{Wright, C.C.},
  \bibinfo{year}{2005}.
\newblock \bibinfo{title}{The kappa statistic in reliability studies: use,
  interpretation, and sample size requirements}.
\newblock \bibinfo{journal}{Physical Therapy} \bibinfo{volume}{85},
  \bibinfo{pages}{257--268}.
\bibitem[{Sluijterman et~al.(2024)Sluijterman, Cator and
  Heskes}]{nn2024sluijterman}
\bibinfo{author}{Sluijterman, L.}, \bibinfo{author}{Cator, E.},
  \bibinfo{author}{Heskes, T.}, \bibinfo{year}{2024}.
\newblock \bibinfo{title}{How to evaluate uncertainty estimates in machine
  learning for regression?}
\newblock \bibinfo{journal}{Neural Networks} \bibinfo{volume}{173},
  \bibinfo{pages}{106203}.
\newblock \DOIprefix\doi{10.1016/j.neunet.2024.106203}.
\bibitem[{Tang et~al.(2026)Tang, Wang, Chen, Zhang, Guan and
  Tang}]{tang2026shifting}
\bibinfo{author}{Tang, Y.}, \bibinfo{author}{Wang, T.}, \bibinfo{author}{Chen,
  Y.}, \bibinfo{author}{Zhang, B.}, \bibinfo{author}{Guan, Q.},
  \bibinfo{author}{Tang, R.}, \bibinfo{year}{2026}.
\newblock \bibinfo{title}{Shifting uncertainty to critical moments: towards
  reliable uncertainty quantification for {VLA} models}.
\newblock \bibinfo{note}{Preprint, arXiv:2603.18342}.
\bibitem[{Thompson and Walter(1988)}]{thompson1988}
\bibinfo{author}{Thompson, W.D.}, \bibinfo{author}{Walter, S.D.},
  \bibinfo{year}{1988}.
\newblock \bibinfo{title}{A reappraisal of the kappa coefficient}.
\newblock \bibinfo{journal}{Journal of Clinical Epidemiology}
  \bibinfo{volume}{41}, \bibinfo{pages}{949--958}.
\bibitem[{Uebersax and Grove(1990)}]{uebersax1990}
\bibinfo{author}{Uebersax, J.S.}, \bibinfo{author}{Grove, W.M.},
  \bibinfo{year}{1990}.
\newblock \bibinfo{title}{Latent class analysis of diagnostic agreement}.
\newblock \bibinfo{journal}{Statistics in Medicine} \bibinfo{volume}{9},
  \bibinfo{pages}{559--572}.
\bibitem[{Umesh et~al.(1989)Umesh, Peterson and Sauber}]{umesh1989}
\bibinfo{author}{Umesh, U.N.}, \bibinfo{author}{Peterson, R.A.},
  \bibinfo{author}{Sauber, M.H.}, \bibinfo{year}{1989}.
\newblock \bibinfo{title}{Interjudge agreement and the maximum value of kappa}.
\newblock \bibinfo{journal}{Educational and Psychological Measurement}
  \bibinfo{volume}{49}, \bibinfo{pages}{835--850}.
\bibitem[{Vinh et~al.(2010)Vinh, Epps and Bailey}]{vinh2010}
\bibinfo{author}{Vinh, N.X.}, \bibinfo{author}{Epps, J.},
  \bibinfo{author}{Bailey, J.}, \bibinfo{year}{2010}.
\newblock \bibinfo{title}{Information theoretic measures for clusterings
  comparison: variants, properties, normalization and correction for chance}.
\newblock \bibinfo{journal}{Journal of Machine Learning Research}
  \bibinfo{volume}{11}, \bibinfo{pages}{2837--2854}.
\bibitem[{Wang et~al.(2026)Wang, Park, Xu, Zhang, Song and Bohg}]{wang2026mof}
\bibinfo{author}{Wang, D.}, \bibinfo{author}{Park, J.}, \bibinfo{author}{Xu,
  X.}, \bibinfo{author}{Zhang, H.}, \bibinfo{author}{Song, S.},
  \bibinfo{author}{Bohg, J.}, \bibinfo{year}{2026}.
\newblock \bibinfo{title}{Mixture of frames policy: multi-frame action
  denoising for bimanual mobile manipulation}.
\newblock \bibinfo{note}{Preprint, arXiv:2607.11884}.
\bibitem[{Warrens(2008)}]{warrens2008similarity}
\bibinfo{author}{Warrens, M.J.}, \bibinfo{year}{2008}.
\newblock \bibinfo{title}{On association coefficients for 2x2 tables and
  properties that do not depend on the marginal distributions}.
\newblock \bibinfo{journal}{Psychometrika} \bibinfo{volume}{73},
  \bibinfo{pages}{777--789}.
\newblock \DOIprefix\doi{10.1007/s11336-008-9070-3}.
\bibitem[{Wyner(1975)}]{wyner1975}
\bibinfo{author}{Wyner, A.D.}, \bibinfo{year}{1975}.
\newblock \bibinfo{title}{The common information of two dependent random
  variables}.
\newblock \bibinfo{journal}{IEEE Transactions on Information Theory}
  \bibinfo{volume}{21}, \bibinfo{pages}{163--179}.
\bibitem[{Xu et~al.(2025)Xu, Nguyen, Dixon, Rodriguez, Miller, Lee, Shah,
  Ambrus, Nishimura and Itkina}]{xu2025faildetect}
\bibinfo{author}{Xu, C.}, \bibinfo{author}{Nguyen, T.K.},
  \bibinfo{author}{Dixon, E.}, \bibinfo{author}{Rodriguez, C.},
  \bibinfo{author}{Miller, P.}, \bibinfo{author}{Lee, R.},
  \bibinfo{author}{Shah, P.}, \bibinfo{author}{Ambrus, R.},
  \bibinfo{author}{Nishimura, H.}, \bibinfo{author}{Itkina, M.},
  \bibinfo{year}{2025}.
\newblock \bibinfo{title}{Can we detect failures without failure data?
  {U}ncertainty-aware runtime failure detection for imitation learning
  policies}, in: \bibinfo{booktitle}{Robotics: Science and Systems (RSS)}.
\newblock \bibinfo{note}{ArXiv:2503.08558}.
\bibitem[{Yang et~al.(2026)Yang, Yan, Feng, Yao, Wang, Mai, Zhao and
  Han}]{yang2026mimicik}
\bibinfo{author}{Yang, J.}, \bibinfo{author}{Yan, S.}, \bibinfo{author}{Feng,
  F.}, \bibinfo{author}{Yao, C.}, \bibinfo{author}{Wang, G.},
  \bibinfo{author}{Mai, Z.}, \bibinfo{author}{Zhao, Y.}, \bibinfo{author}{Han,
  Y.}, \bibinfo{year}{2026}.
\newblock \bibinfo{title}{{MimicIK}: real-time generative inverse kinematics
  from teleoperation with {FK} consistency}.
\newblock \bibinfo{note}{Preprint, arXiv:2606.15148}.
\bibitem[{Zhang et~al.(2025)Zhang, Yan, Schwing and Zhao}]{hrfm2025}
\bibinfo{author}{Zhang, Y.}, \bibinfo{author}{Yan, Y.},
  \bibinfo{author}{Schwing, A.G.}, \bibinfo{author}{Zhao, Z.},
  \bibinfo{year}{2025}.
\newblock \bibinfo{title}{Hierarchical rectified flow matching with mini-batch
  couplings}.
\newblock \bibinfo{note}{Preprint, arXiv:2507.13350}.

\end{thebibliography}

\appendix

\clearpage
\setcounter{section}{0}
\setcounter{table}{0}
\setcounter{figure}{0}
\renewcommand{\thesection}{S\arabic{section}}
\renewcommand{\thesubsection}{S\arabic{section}.\arabic{subsection}}
\renewcommand{\thetable}{S\arabic{table}}
\renewcommand{\thefigure}{S\arabic{figure}}
\renewcommand{\theHsection}{S.\arabic{section}}
\renewcommand{\theHsubsection}{S.\arabic{section}.\arabic{subsection}}
\renewcommand{\theHtable}{S.\arabic{table}}
\renewcommand{\theHfigure}{S.\arabic{figure}}

\section*{Supplementary material}
Sections~S1--S7 provide the proof details, run inventory, per-condition
coordinates, complete mechanism tables, threshold and NFE sensitivity,
reproduction and protocol information, and a set of prospective tests.

\section{Proof details}
\label{app:proofs}

Throughout, $y=\Delta=\mathrm{BL}-\mathrm{obs}$, positive for coordination.

\subsection{Proof of Theorem~\ref{thm:region}}

\begin{proof}
Write one-hot indicators $U=(\mathbf{1}[M_q=k])_k$, $V=(\mathbf{1}[M_T=k])_k$.

(i) $y=\sum_k(\Pr(M_q{=}k,M_T{=}k)-q_kp_k)=\operatorname{tr}\operatorname{Cov}(U,V)$. Conditional independence gives $\mathbb{E}[U_kV_k\mid z]=a_k(z)b_k(z)$; under $\tilde z$ the conditional term of the law of total covariance vanishes, so $\operatorname{Cov}(U_k,V_k)=\operatorname{Cov}_{\tilde z}(a_k,b_k)$. The tilt separates from the covariance because $\{M_q\neq I\}\cap\{M_T\neq I\}$ factorizes given $(c,z)$.

(ii) Cauchy--Schwarz on centered vectors in $L^2(\Omega;\mathbb{R}^K)$ gives
\[
|y|=\bigl|\langle U-\mathbb{E}U,\, V-\mathbb{E}V\rangle\bigr| \;\le\; \sqrt{\textstyle\sum_k\operatorname{Var}(U_k)}\;\sqrt{\textstyle\sum_k\operatorname{Var}(V_k)} \;=\; \sqrt{G(q)G(p)},
\]
using $\sum_k\operatorname{Var}(U_k)=G(q)$. By Jensen $\sum_k m_k^2\ge1/K$, so $G(m)\le1-1/K$, with equality iff uniform.

(iii) From (ii) and $G(p)\le1-1/K$: $G(q)\ge y^2/G(p)\ge Ky^2/(K-1)$.

(iv) Fr\'echet--Hoeffding cellwise, $P_{kk}\le\min(q_k,p_k)$, gives $p_o\le1-\mathrm{TV}(q,p)$; subtract $p_e$. The bound is attained: set $P_{kk}=\min(q_k,p_k)$ and fill the off-diagonal with the normalized product of the disjointly supported row and column surpluses. That coupling also satisfies (ii), whence $C_{K,\mathrm{tight}}\le C_K$.
\end{proof}

\subsection{Conditioning can be dropped under independence}

\begin{lemma}
\label{lem:dropcond}
Let $X$ be integrable and $Z$ independent of $\sigma(X,Y)$. Then $\mathbb{E}[X\mid Y,Z]=\mathbb{E}[X\mid Y]$ a.s.
\end{lemma}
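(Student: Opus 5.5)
The plan is to verify the defining property of conditional expectation directly. Let $h(Y)=\mathbb{E}[X\mid Y]$, a $\sigma(Y)$-measurable and integrable version. Since $\sigma(Y)\subseteq\sigma(Y,Z)$, $h(Y)$ is $\sigma(Y,Z)$-measurable. It therefore suffices to show
\[
\mathbb{E}\bigl[X\,\mathbf{1}_{G}\bigr]=\mathbb{E}\bigl[h(Y)\,\mathbf{1}_{G}\bigr]\quad\text{for all } G\in\sigma(Y,Z).
\]
Almost-sure uniqueness of conditional expectation then gives the claim.

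The identity is first checked on the $\pi$-system of rectangles $G=A\cap B$, with $A\in\sigma(Y)$ and $B\in\sigma(Z)$. Because $Z$ is independent of $\sigma(X,Y)$, and both $X\mathbf{1}_A$ and $h(Y)\mathbf{1}_A$ are $\sigma(X,Y)$-measurable and integrable, the expectations factor:
\[
\mathbb{E}[X\mathbf{1}_A\mathbf{1}_B]=\mathbb{E}[X\mathbf{1}_A]\Pr(B),\qquad \mathbb{E}[h(Y)\mathbf{1}_A\mathbf{1}_B]=\mathbb{E}[h(Y)\mathbf{1}_A]\Pr(B).
\]
The right-hand sides agree by the defining property of $h(Y)$ on $A\in\sigma(Y)$.

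The rectangles form a $\pi$-system that contains $\Omega$ and generates $\sigma(Y,Z)$. The next step is to show that the class $\mathcal{D}$ of sets $G\in\sigma(Y,Z)$ satisfying the identity is a $\lambda$-system. Closure under complements follows because $\Omega\in\mathcal{D}$ and the expectations are finite. Closure under countable disjoint unions follows by dominated convergence, with dominating functions $|X|$ and $|h(Y)|$. Dynkin's $\pi$--$\lambda$ theorem then extends the identity to all of $\sigma(Y,Z)$.

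The main obstacle is the extension step, which is where integrability is used. Independence of $Z$ from the joint $\sigma(X,Y)$ is needed here, not merely from $X$, because the factorization must apply to $X\mathbf{1}_A$ with $A\in\sigma(Y)$. In the application to Lemma~\ref{lem:zblind}, one takes $X=U$, $Y=S$, and uses independence of $Z$ from the training tuple, which generates $(S,U)$. That independence gives (A1).
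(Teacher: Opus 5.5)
Your proof is correct and follows essentially the same route as the paper. Both arguments factor the expectation on the $\pi$-system $\{A\cap B: A\in\sigma(Y),\, B\in\sigma(Z)\}$ using $Z\perp\sigma(X,Y)$ and the defining property of $\mathbb{E}[X\mid Y]$, then extend to $\sigma(Y,Z)$. The only difference is cosmetic: you extend with indicator sets via Dynkin's $\pi$--$\lambda$ theorem and dominated convergence, whereas the paper uses bounded test functions $h(Y)g(Z)$ and the functional monotone class theorem.
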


\begin{proof}
For bounded measurable $h,g$,
\[
\mathbb{E}[X\,h(Y)\,g(Z)] = \mathbb{E}[X\,h(Y)]\,\mathbb{E}[g(Z)] = \mathbb{E}\bigl[\mathbb{E}[X\mid Y]h(Y)\bigr]\mathbb{E}[g(Z)] = \mathbb{E}\bigl[\mathbb{E}[X\mid Y]\,h(Y)\,g(Z)\bigr],
\]
the first and third equalities using $Z\perp\sigma(X,Y)$ (note $\mathbb{E}[X\mid Y]h(Y)$ is $\sigma(Y)\subset\sigma(X,Y)$ measurable). The family $\{A\cap B: A\in\sigma(Y), B\in\sigma(Z)\}$ is a $\pi$-system generating $\sigma(Y,Z)$; by the functional monotone class theorem the display extends to all bounded $\sigma(Y,Z)$-measurable functions, hence $\mathbb{E}[X\mid Y,Z]=\mathbb{E}[X\mid Y]$ a.s.
\end{proof}

A convenient sufficient condition for mean irrelevance is joint independence of $Z$ from the full training tuple $(c,x_0,x_1,t)$. Merely assuming $t\perp Z$ is not enough: conditioning on an adaptively chosen $t$ can reintroduce information about $Z$ through the other variables. Standard CFM implementations draw $t\sim U[0,1]$ independently of the sample, which satisfies the sufficient condition when $Z$ is also drawn independently.

\subsection{Equality conditions for the coverage bound}

As in Section~\ref{sec:tight}: the $K$-ary bound \eqref{eq:bound} chains two inequalities. Cauchy--Schwarz is tight iff the centered one-hot indicators are a.s.\ proportional, which for a positive proportionality constant forces $q=p$ and $M_q=M_T$ a.s.\ (perfect agreement), while a negative constant requires degeneracy onto two categories; $G(m)\le1-1/K$ is tight iff the marginal is uniform. Hence $y=C_K$ iff perfect agreement, and $y=1-1/K$ additionally requires uniform marginals. At $K=2$ the conditions read: each branch deterministic given the shared variable; identical choices; 50/50 marginals. The oracle and window-token families measure $+0.4996$ and $+0.4994$ per-condition, near all three at once.

\section{Run inventory}
\label{app:inventory}

Table~\ref{tab:inventory} lists every experimental round behind the paper, so that any number in the text can be traced to a population. Within each round, all runs that reached the frozen protocol are included. Incomplete runs, convergence-gate exceptions, and the one reclassified cell are listed in \ref{app:prereg}.

\begin{table}[t]
\centering
\caption{Run inventory. ``Runs'' counts trained models; evaluation columns count what each run was scored on. The original mechanism atlas comprises 68 training runs, of which 67 carry per-condition coordinates and appear in Fig.~\ref{fig:region} (52 v2, 15 v3). The preregistered seed expansion adds 49 robot runs and 84 testbed runs (design, seed lists, and analysis script committed before training and before unblinding, with the registration and its two revision records archived alongside the artifacts); the expansion runs are reported in Table~\ref{tab:seeds} and Section~\ref{sec:toy-mapping} and do not enter Fig.~\ref{fig:region}.}
\label{tab:inventory}
\footnotesize
\begin{tabular}{@{}l l l l@{}}
\toprule
Round & Environment & Runs & Evaluation \\
\midrule
Mechanism atlas & BimodalBypass (v2) & \multirow{2}{*}{68 total} & 75 conditions, 500 pairs \\
 & RotatingBypass (v3) & & 36 conditions, 240 pairs \\
Threshold / NFE sensitivity & v2 and v3 & (no retraining) & 108 re-evaluations \\
Ensemble protocols & v2 only & 4 families, 5 members & 4 protocols, 10 pairs (main family) \\
Testbed, balanced & RotatingRing & 186 $\times$ 50k steps & 96{,}000 samples/run; 135 re-evaluations \\
Testbed, skewed prior & RotatingRing & 153 (51 cells $\times$ 3 seeds) & 75 conditions/run; 15.1 GPU-hours \\
External monitor & LIBERO-Plus & 3 monitor seeds $\times$ 2 arms & 1664 episodes, 283{,}375 steps; 10 signals \\
External coverage & LIBERO-Plus & same checkpoints & 147{,}529 steps $\times$ 32 chunks $\times$ 2 draws \\
\bottomrule
\end{tabular}
\end{table}

\section{Per-condition coordinates behind Fig.~\ref{fig:region}}
\label{app:tables}

Tables~\ref{tab:percond-v2} and~\ref{tab:percond-v3} list the per-condition coordinates. Computation convention (identical to the effect-table code): $n_i=$ both-valid count, $\mathrm{obs}_i=\mathrm{mismatch}_i/n_i$, $a_i,b_i=$ side fractions (marginal convention flagged per row), $\mathrm{BL}_i=a_i(1-b_i)+(1-a_i)b_i$, $C_i=2\sqrt{a_i(1-a_i)b_i(1-b_i)}$, aggregation weighted by $n_i$. The plug-in estimator of $\Delta$ shrinks toward zero at rate (number of conditions)/(total both-valid pairs): the median run-level shrinkage is 0.10\% of the effect and the worst 0.43\%, against a median per-condition both-valid count of 1040 (range 2--1280 over 4440 condition-run cells), so no reported effect is materially biased by it.

\begin{table}[t]
\centering
\caption{BimodalBypass (v2, 75 conditions, $\tau=0.11$), per-condition accounting; $\Delta=\mathrm{BL}-\mathrm{obs}$. $^{*}$Per-seed occupancies 100.05\%/99.90\%/100.02\%: micro-excursions within the head-valid approximation error of Section~\ref{sec:gate}. $^{**}$Families H and I are quoted in the exact both-valid accounting (ratio of means), the only accounting under which they can be re-estimated from the shipped artifacts: H 91.1\% (per seed 91.7/84.1/96.9) and I 97.9\% (97.3/97.9/98.6), with all $75\times6$ conditions satisfying $y_i\le C_i$. Under head-valid marginals the same families read 91.7\% and 99.4\%, and a minority of family-I conditions micro-exceed the bound (15/13/40 conditions, 68 in total, maximum $+0.0463$); family E contributes 301 excursions above $C$ and 303 above the fixed-marginal bound across its five runs. All of them vanish under exact both-valid accounting (Section~\ref{sec:reestimation}).}
\label{tab:percond-v2}
\scriptsize
\setlength{\tabcolsep}{2.5pt}
\begin{tabular}{@{}l rrrr r l@{}}
\toprule
Family & $\mathrm{obs}$ & $\mathrm{BL}$ & $\Delta$ & $C$ & $\rho$ & Marg. \\
\midrule
A independent $\beta1\lambda0$ (3s) & 0.4826 & 0.4839 & $+0.0014$ & 0.4474 & 0.3\% & head \\
\quad A seed0/1/2 & & & $-0.0006/{+}0.0029/{+}0.0017$ & 0.4439/0.4495/0.4488 & 0.1/0.7/0.4\% & head \\
B shared noise $\beta0\lambda0$ (3s) & 0.4362 & 0.4369 & $+0.0007$ & 0.4185 & 0.2\% & head \\
\quad B seed0/1/2 & & & $-0.0160/{+}0.0015/{+}0.0166$ & 0.4565/0.3864/0.4126 & $-3.5$/0.4/4.0\% & head \\
C $z_s$ dim-0 anchor (3s) & 0.4921 & 0.4941 & $+0.0020$ & 0.4857 & 0.4\% & both \\
D continuous $z_s$ prior $d16$ (3s) & 0.4843 & 0.4834 & $-0.0009$ & 0.4708 & $-0.2$\% & both \\
E posterior $\beta_{\mathrm{KL}}1$ fb0.1 s0/s1 & 0.0000 & 0.0158/0.0163 & $+0.0158/{+}0.0163$ & 0.0158/0.0163 & 100.0\% & both \\
E posterior $\beta_{\mathrm{KL}}0.01$ s0/s1 & 0.0000 & 0.0170/0.0625 & $+0.0170/{+}0.0625$ & 0.0170/0.0625 & 100.0\% & both \\
E posterior $\beta_{\mathrm{KL}}1$ fb0.5 s0 & 0.0000 & 0.0001 & $+0.0001$ & 0.0001 & 100.0\% & both \\
F $\beta0\lambda0.3$ (3s) & 0.4235 & 0.4492 & $+0.0257$ & 0.4363 & 5.9\% & head \\
$\beta0\lambda1$ (1s) & 0.4107 & 0.4902 & $+0.0795$ & 0.4663 & 17.0\% & head \\
G $\beta0\lambda3$ (3s) & 0.2379 & 0.4420 & $+0.2042$ & 0.4061 & 50.3\% & head \\
\quad G seed0/1/2 & & & $+0.2404/{+}0.1366/{+}0.2355$ & 0.3898/0.3523/0.4762 & 61.7/38.8/49.5\% & head \\
$\beta0.1\lambda3$ (1s) & 0.2921 & 0.4385 & $+0.1464$ & 0.4330 & 33.8\% & head \\
$\beta1\lambda3$ (1s) & 0.5352 & 0.5396 & $+0.0044$ & 0.4176 & 1.0\% & head \\
H episode uniform (3s) & 0.0433 & 0.4913 & $+0.4481$ & 0.4885 & 91.1\% & both$^{**}$ \\
I episode learned prior (3s) & 0.0101 & 0.4977 & $+0.4876$ & 0.4907 & 97.9\% & both$^{**}$ \\
J window token $K2$ (3s) & 0.0002 & 0.4996 & $+0.4994$ & 0.4994 & 100.0\%$^{*}$ & head \\
J$'$ window token $K2+\lambda0.1$ (3s) & 0.0001 & 0.4996 & $+0.4995$ & 0.4996 & 100.0\% & head \\
window token $K4$ (3s) & 0.0730 & 0.4879 & $+0.4149$ & 0.4863 & 85.3\% & head \\
K oracle (1s) & 0.0000 & 0.4996 & $+0.4996$ & 0.4996 & 100.0\% & head \\
\bottomrule
\end{tabular}
\end{table}

\begin{table}[t]
\centering
\caption{RotatingBypass (v3, 36 conditions, $\tau=0.18$), per-condition accounting; $\Delta=\mathrm{BL}-\mathrm{obs}$. $^{\dagger}$100.6\% under head-valid marginals. All v3 pooled baselines are 0.4999--0.5001 (the environment is fully branch-symmetric, leaving no global preference for pooled marginals to exploit), yet pooled-accounting effects can deviate by up to 0.09; this table is per-condition throughout (Section~\ref{sec:gate}). The $\rho$ column averages per-seed occupancies (G: 65.0/56.8/67.9 $\to$ 63.2\%); mean effect over mean bound gives 63.6\% (ratio nonlinearity; either is usable when the aggregation convention is stated).}
\label{tab:percond-v3}
\scriptsize
\setlength{\tabcolsep}{4pt}
\begin{tabular}{@{}l rr l rrr@{}}
\toprule
Family & $\mathrm{obs}$ & $\mathrm{BL}$ & $\Delta$ (per seed) & $C$ & $\rho$ & Valid \\
\midrule
A independent (3 seeds) & 0.4811 & 0.4811 & $0.0000$ ($-0.0005/{-}0.0016/{+}0.0020$) & 0.4700 & 0.3\% & 0.7382 \\
B shared source noise (3 seeds) & 0.4567 & 0.4652 & $+0.0085$ ($+0.0079/{+}0.0103/{+}0.0074$) & 0.4555 & 1.9\% & 0.7501 \\
D continuous $z_s$ prior (3 seeds) & 0.4723 & 0.4748 & $+0.0025$ ($+0.0019/{+}0.0019/{+}0.0037$) & 0.4636 & 0.5\% & 0.6474 \\
G endpoint consistency $\lambda{=}3$ (3 seeds) & 0.1709 & 0.4458 & $+0.2748$ ($+0.3097/{+}0.2090/{+}0.3057$) & 0.4319 & 63.2\% & 0.4449 \\
H episode uniform token (2 seeds) & 0.0784 & 0.4708 & $+0.3923$ ($+0.4232/{+}0.3614$) & 0.4661 & 84.1\% (88.6/79.6) & 0.7056 \\
K oracle (1 seed) & 0.0000 & 0.4994 & $+0.4994$ & 0.4964 & 100.0\%$^{\dagger}$ & 0.8704 \\
\bottomrule
\end{tabular}
\end{table}

\emph{Tight-bound readings (Section~\ref{sec:calculus}).} Recomputed from per-condition sufficient statistics: the interior-climb family G moves from a 50.3\% occupancy of $C$ to 69.2\% of $C_{K,\mathrm{tight}}$ in v2 (seeds 94.0/67.0/55.4\%, per-condition biases 0.175/0.211/0.056) and from 63.2\% to 75.9\% in v3 (75.7/67.6/83.0\%); floor families read 0.2--7.4\% under either bound; corner families stay above 85\%; the collapsed family is exactly 100\% under both-valid accounting ($\mathrm{obs}=0$ forces the both-valid marginals to agree per condition, so $C_{K,\mathrm{tight}}=C_K$). The estimator caveat of Section~\ref{sec:calculus} is directly measurable: the v3 oracle family has per-condition $\mathrm{obs}$ identically 0 (true bias zero), yet the mean per-condition $|\hat q_L-\hat p_L|$ is 0.0196, and $\Delta/C_{K,\mathrm{tight}}$ reads 100.1--104.1\% on corner families and up to 138.7\% on the collapsed family under head-valid marginals.

\emph{Marginal conventions for the figures.} Fig.~\ref{fig:region} contains 67 per-condition run points (52 v2 + 15 v3), each at the most exact marginal convention its artifacts support: 17 runs both-valid (11 continuous-$z_s$, 6 episode-token recomputations), the remaining 50 head-valid. Six head-valid points sit microscopically above the diagonal (largest $\Delta/C=1.005981$: four window-token and the two oracles), drawn as-is and not clipped. That they are marginal-accounting error rather than estimator failure is now settled rather than asserted: over the 1275 conditions of the 17 re-estimable runs the exact both-valid accounting produces zero excursions (Section~\ref{sec:reestimation}). Per-condition inequalities are preserved by weighted aggregation, so aggregate points obey the same boundary. Fig.~\ref{fig:overlay} adds the 186 testbed runs (both-valid weighted accounting; $\tau=0.55$, NFE 20, 50k steps; zero violations, max $\Delta/C_K=1+4\times10^{-16}$), and Fig.~\ref{fig:skew} the 153 skew-round runs (\texttt{nullspace} only, same accounting). Reading conventions moved here from Section~\ref{sec:toy-relabel}: the beds' thresholds are not the same stringency ($\tau=0.55$ acts on a normalized projection peak, independent validity 0.984; the robots' $\tau$ are metric, 0.773/0.738), and re-matching by valid-pair rate ($\tau_{\mathrm{match}}=0.87$, validity 0.778) leaves rankings unchanged over $\tau\in[0.40,0.87]$ while moving corner occupancies 2.4--2.7 points (oracle $82.4\to84.8\%$, unsupervised $77.8\to80.5\%$), so vertical comparisons are robust but the two $\tau$'s do not interpret each other; valid-pair rates span 0.760--0.999 (testbed) and 0.24--1.00 (robots), a dimension off-figure that must accompany any quoted point.

\emph{The $\pm34^\circ$ geometry behind Section~\ref{sec:tautology}.} A top-grasp parallel gripper cannot pass under the obstacle, so both azimuths live in the upper half-circle; a $90^\circ$-separated pair confined to $(0^\circ,180^\circ)$ has its difference direction confined to $(45^\circ,135^\circ)$, and the vertical scaling of 0.68 narrows that to $\pm34^\circ$ around the lateral axis. The v2 orthogonal-complement per-seed triple at $\lambda{=}0.3$ is $0.578/0.391/0.463$; the v3 family means are 0.5970--0.7098 at the frozen gate $\tau=0.11$.

\section{Additional mechanism tables}
\label{app:mechtables}

Tables~\ref{tab:sensitivity}--\ref{tab:pose} carry the per-cell detail behind Sections~\ref{sec:floor}--\ref{sec:interior}: the $z_s$-sensitivity decay of the floor family, the full $\beta\times\lambda$ grid, the three-seed detail of its strongest cell, and the third-representation ablation. Representative configurations quoted in the main text are selected by a rule fixed before the grid was read: the strongest cell of each ladder ($\lambda{=}3$), the two endpoints of the $\beta$ axis, and the families named in the preregistration; no cell is promoted or omitted on the basis of its effect.

\begin{table}[t]
\centering
\caption{Sensitivity of the velocity fields to $z_s$ (relative change under a redrawn $z_s$; joint/position branch) at three checkpoints, from training histories. A validation-batch companion metric agrees in magnitude (prior 0.00128--0.00196; posterior 0.0122--0.0370).}
\label{tab:sensitivity}
\footnotesize
\begin{tabular}{@{}l ccc@{}}
\toprule
Run & step 1 & step 25k & step 50k \\
\midrule
prior $d16$ seed0 & 0.3791 / 0.4002 & 0.00280 / 0.00227 & 0.00146 / 0.00137 \\
prior $d16$ seed1 & 0.4466 / 0.4674 & 0.00293 / 0.00256 & 0.00135 / 0.00127 \\
prior $d16$ seed2 & 0.3694 / 0.4217 & 0.00283 / 0.00267 & 0.00177 / 0.00167 \\
prior, three-seed mean & \textbf{0.398 / 0.430} & 0.00286 / 0.00250 & \textbf{0.00153 / 0.00144} \\
posterior (range, 5 runs) & 0.368--0.442 / 0.382--0.525 & 0.0158--0.0328 / 0.0192--0.0435 & 0.0125--0.0215 / 0.0175--0.0269 \\
$z$-dim-0 anchor (3 runs) & 0 & 0 & 0 \\
\bottomrule
\end{tabular}
\end{table}

\begin{table}[t]
\centering
\caption{The full $\beta\times\lambda$ grid (v2, seed 0, 15 runs): pooled $\Delta=\mathrm{BL}-\mathrm{obs}$, valid-pair rate, residual (mm).}
\label{tab:grid}
\footnotesize
\begin{tabular}{@{}l rrrrr@{}}
\toprule
 & $\lambda{=}0$ & $\lambda{=}0.1$ & $\lambda{=}0.3$ & $\lambda{=}1$ & $\lambda{=}3$ \\
\midrule
\multicolumn{6}{@{}l}{\emph{effect $\Delta$ (pooled)}} \\
shared $\beta{=}0$ & $-0.0124$ & $-0.0062$ & $+0.0222$ & $+0.0799$ & $+0.2442$ \\
partial $\beta{=}0.1$ & $-0.0117$ & $-0.0094$ & $+0.0088$ & $+0.0474$ & $+0.1519$ \\
independent $\beta{=}1$ & $+0.0024$ & $+0.0006$ & $+0.0038$ & $+0.0020$ & $+0.0126$ \\
\midrule
\multicolumn{6}{@{}l}{\emph{valid-pair rate}} \\
shared $\beta{=}0$ & 0.9088 & 0.8728 & 0.8569 & 0.4975 & 0.4576 \\
partial $\beta{=}0.1$ & 0.8802 & 0.8580 & 0.7184 & 0.7967 & 0.2370 \\
independent $\beta{=}1$ & 0.7993 & 0.8345 & 0.8771 & 0.7761 & 0.1814 \\
\midrule
\multicolumn{6}{@{}l}{\emph{cross-branch residual (mm)}} \\
shared $\beta{=}0$ & 82.95 & 78.36 & 77.14 & 73.48 & 54.72 \\
partial $\beta{=}0.1$ & 83.24 & 79.49 & 76.34 & 73.16 & 69.57 \\
independent $\beta{=}1$ & 85.59 & 79.85 & 79.33 & 82.51 & 75.42 \\
\bottomrule
\end{tabular}
\end{table}

\begin{table}[t]
\centering
\caption{Three seeds of $\beta{=}0,\lambda{=}3$ (v2). In the mean row the occupancy can be the mean of per-seed occupancies (pooled 49.9\%, per-condition 50.0\%) or mean effect over mean bound (50.3\%); the text quotes the per-condition 50.3\% with per-seed 61.7/38.8/49.5\%.}
\label{tab:l3seeds}
\scriptsize
\setlength{\tabcolsep}{2pt}
\begin{tabular}{@{}c rl r rr r rl r r rr l r@{}}
\toprule
seed & $\mathrm{obs}$ & 95\% CI & Valid & $q_L$ & $p_L$ & $\mathrm{BL}$ & $\Delta$ & 95\% CI & $C$ & $\rho$ & $\Delta_{\mathrm{pc}}$ & $\rho_{\mathrm{pc}}$ & $H_q$/$H_T$ & Res. \\
\midrule
0 & 0.1900 & [.1853,\,.1949] & 0.4576 & 0.211 & 0.386 & 0.4342 & $+0.2442$ & [$+.2382,+.2500$] & 0.3974 & 61.5\% & $+0.2404$ & 61.7\% & 0.515/0.667 & 54.72 \\
1 & 0.2782 & [.2735,\,.2831] & 0.6338 & 0.164 & 0.375 & 0.4160 & $+0.1378$ & [$+.1323,+.1433$] & 0.3583 & 38.5\% & $+0.1366$ & 38.8\% & 0.446/0.662 & 58.81 \\
2 & 0.2454 & [.2389,\,.2516] & 0.4849 & 0.562 & 0.611 & 0.4863 & $+0.2410$ & [$+.2370,+.2448$] & 0.4838 & 49.8\% & $+0.2355$ & 49.5\% & 0.686/0.668 & 59.44 \\
mean & 0.2379 &  & 0.5254 &  &  & 0.4455 & $+0.2077$ &  & 0.4132 & 49.9\% & $+0.2042$ & 50.3\% & 0.549/0.666 & 57.66 \\
\bottomrule
\end{tabular}
\end{table}

\begin{table}[t]
\centering
\caption{$\lambda_{\mathrm{cons}}\times\lambda_{\mathrm{rot}}$ ablation (v3, seed 0, $\tau=0.18$; the orientation branch is mode-assigned by nearest mean-yaw match; $\Delta=\mathrm{BL}-\mathrm{obs}$. Random three-way agreement 0.25; teacher-forced training residuals: position 10.7--17.6\,mm, orientation 1.45--1.88$^\circ$).}
\label{tab:pose}
\footnotesize
\setlength{\tabcolsep}{3.5pt}
\begin{tabular}{@{}cc lll c rr@{}}
\toprule
$\lambda_{\mathrm{cons}}$ & $\lambda_{\mathrm{rot}}$ & $q\!\leftrightarrow\!$pos & $q\!\leftrightarrow\!$rot & pos$\leftrightarrow$rot & 3-way & Pos.\ (mm) & Rot.\ ($^\circ$) \\
\midrule
0 & 0 & $+0.0501$ [$+0.0316$, $+0.0633$] & $+0.0227$ [$+0.0064$, $+0.0371$] & $+0.0291$ [$+0.0070$, $+0.0476$] & 0.307 & 93.90 & 13.65 \\
0 & 1 & $+0.0403$ [$+0.0241$, $+0.0522$] & $+0.0223$ [$+0.0084$, $+0.0355$] & $+0.0356$ [$+0.0170$, $+0.0523$] & 0.302 & 94.00 & 13.46 \\
3 & 0 & $+0.3353$ [$+0.3210$, $+0.3465$] & $+0.0018$ [$-0.0131$, $+0.0165$] & $+0.0279$ [$+0.0087$, $+0.0461$] & 0.430 & 60.97 & 13.05 \\
3 & 1 & $+0.2671$ [$+0.2499$, $+0.2843$] & $+0.0227$ [$+0.0105$, $+0.0337$] & $+0.0166$ [$+0.0009$, $+0.0316$] & 0.408 & 63.40 & 12.63 \\
\bottomrule
\end{tabular}
\end{table}

\emph{VFD score and per-configuration readings (Section~\ref{sec:vfd}).} VFD's Eq.~(7) was verified against the preprint: $u_e(y;V)=\frac{1}{M(M-1)N_s}\mathbb{E}_{x_0\sim p_0}\sum_{i\neq j}\sum_\ell \kappa_{s_\ell}\lVert v^{\theta_i}_{s_\ell}(x^{(i)}_{s_\ell},y)-v^{\theta_j}_{s_\ell}(x^{(i)}_{s_\ell},y)\rVert_2^2$ with $\kappa_s=s/(1-s)$, all members sharing $x_0$, each walking its own ODE path, member $j$ evaluated at member $i$'s state; the original fine-tunes one base model on reshuffled data, $M=2$ in its main text. Recomputed on our ensembles and stratified by member mode agreement ($\kappa$ mean over 20 steps: 2.598): \texttt{b1\_l0}, $\kappa$-weighted contamination share 0.055--0.335 (mean 0.158), unweighted 0.066--0.459 (mean 0.202), within-condition AUROC 0.478--0.730 weighted against 0.820--0.943 unweighted; \texttt{b0\_l0}, 0.081--0.300 / 0.090--0.387, AUROC 0.581--0.849 / 0.903--0.959; \texttt{b0\_l3}, 0.011--0.190 / 0.015--0.235, AUROC 0.553--0.829 / 0.835--0.973; mixed objectives, $\kappa$-weighted 0.031--0.116, AUROC 0.695--0.940.

\section{Threshold and NFE sensitivity}
\label{app:sensitivity}

The assignment threshold $\tau$ and the integration budget NFE were frozen in the main protocol; both were then scanned, 108 evaluations across the two robotic environments.

\emph{Ranking is stable; the interior-climb effect is not an invariant.} Family ordering is unchanged over $\tau\in[0.05,0.14]$ (v2) and $[0.05,0.17]$ (v3): oracle $>$ episode token $>$ $\lambda{=}3$ $>$ continuous $z_s\approx$ independent $\approx$ shared noise. But the $\lambda{=}3$ effect itself moves with the threshold (Table~\ref{tab:tau}).

\begin{table}[h]
\centering
\footnotesize
\caption{$\lambda{=}3$ family under threshold variation (pooled $\Delta$; valid-pair rate).}
\label{tab:tau}
\begin{tabular}{@{}l rr rr@{}}
\toprule
$\tau$ & v2 $\Delta$ & v2 valid & v3 $\Delta$ & v3 valid \\
\midrule
0.05 & $+0.1913$ & 0.729 & $+0.2150$ & 0.999 \\
0.08 & $+0.2141$ & 0.609 & $+0.2150$ & 0.999 \\
0.11 & $+0.2442$ & 0.458 & $+0.2161$ & 0.994 \\
0.14 & $+0.2995$ & 0.255 & $+0.2396$ & 0.880 \\
0.17 & $+0.4427$ & 0.050 & $+0.3006$ & 0.585 \\
0.20 & --- & --- & $+0.3936$ & 0.154 \\
\bottomrule
\end{tabular}
\end{table}

Tighter thresholds strengthen the effect and cut the valid-pair rate together: ``$\lambda{=}3$ trades valid pairs for coordination'' holds not only across mechanisms but \emph{within} the mechanism along $\tau$. The corner families are far flatter (episode token: v2 $+0.4363\to+0.4545$, v3 $+0.4076\to+0.4341$), and the independent and continuous-$z_s$ families move within $\pm0.01$. The corner families' \emph{occupancy} is robust too: under $\pm20\%$ threshold perturbation ($0.088/0.110/0.132$, both-valid accounting), family H reads $90.1\%\to91.1\%\to92.5\%$ (2.4-point swing) and family I $97.4\%\to97.9\%\to98.5\%$ (1.1), monotone, conclusion unchanged---the contrast with the $\lambda{=}3$ family stands. At $\tau=0.17$ (v2) and $\tau=0.24$ (v3) everything collapses (validity 0.5--8\% and 0): the threshold approaches the demonstrations' own bypass amplitude, an artifact, not a conclusion.

\emph{NFE moves the same way}: fewer integration steps, stronger apparent coordination, lower valid-pair rate---coarser integration leaves samples short of the full bypass amplitude, the shortfall side is declared invalid, and the survivors are more extreme. Across NFE $5\to50$ the $\lambda{=}3$ effect moves 22\% in v2 ($+0.2913\to+0.2381$) and 5\% in v3 ($+0.2249\to+0.2139$); all other families move within 0.01; the family ranking is unchanged at every NFE. Both directional predictions were written down on the robotic side first and then independently checked on the testbed: its 135 threshold/NFE re-evaluations agree in direction, with the family ranking unchanged over $\tau\in[0.40,0.87]$ and all NFE; the only inversion occurs at $\tau=0.90$, where the two floor families swap at $+0.2\%$ against $+0.2\%$---ranking noise at zero.

Practical rule, restated from Section~\ref{sec:corner-entry}: report any coordination effect together with its threshold, valid-pair rate and NFE. Canonical citations: v2 $+0.2077$ (pooled, three-seed mean) at $\tau=0.11$, validity 0.5254, NFE 20; v3 $+0.2748$ (per-condition, three-seed mean) at $\tau=0.18$, validity 0.4449, NFE 20.

\section{Reproduction, protocol provenance, and corrections}
\label{app:prereg}

\subsection{Testbed and environment specification}

RotatingRing generator: condition $c=(\theta,r)$, $\theta\sim U(0,2\pi)$, $r\sim U(0.70,1.30)$; the target is a mixture of $K$ equiprobable modes, mode $k$ centered at azimuth $\theta+2\pi k/K$, radius $r$; a sample is the length-$H{=}16$ path $x_1[h]=\sin(\pi h/(H{-}1))\,(r\,u_k+\delta)+\sigma_{\mathrm{step}}\eta[h]$ with $u_k=(\cos(\theta+2\pi k/K),\sin(\theta+2\pi k/K))$, $\delta\sim\mathcal{N}(0,0.08^2I_2)$, $\eta[h]\sim\mathcal{N}(0,I_2)$, $\sigma_{\mathrm{step}}=0.02$: the standard ``$K$ Gaussians on a ring'', rotated and scaled by the condition ($H=1$ would erase the threshold-sensitivity axis). Branch maps: \texttt{orthogonal}, a fixed orthogonal reflection; \texttt{polar}, a radial power law with radius-dependent rotation; \texttt{nullspace}, an $\mathbb{R}^2\to\mathbb{R}^6$ lift with linear left inverse; round-trip errors $1.3/4.7/2.7\times10^{-15}$. Assignment rule: $\mathrm{peak}_k=\max_h\langle X_h,u_k(c)\rangle/r(c)$, argmax over $k$, invalid below $\tau$. $K{=}3$ covers the five preregistered families; the $K{=}4/8$ extensions are post-registration under unchanged criteria.

Robot-side geometry: v2's central wall has half-size $0.025\times0.09\times0.09$\,m at $[0.07,0,0.89]$, goal region $[0.24,0,0.803]$; v3's suspended beam is $0.025\times0.062\times0.040$\,m at $[0.07,0,0.952]$, with apex error mean 4.27\,mm (max 6.50) and minimum clearance 28.5\,mm over the 240 accepted pairs.

\subsection{Artifacts and reproduction}

All gate evaluations are single-GPU and serial; stratified residuals cost 3--4 seconds per configuration; nothing requires retraining, and every number in the paper is recomputable from the deposited artifacts listed in the data-availability statement. Two acceptance checks are worth naming because conclusions rest on them: the episode-family per-condition recomputation matches the four count fields of the original gate JSONs exactly, and an independent $K$-ary math check, importing nothing from the testbed code, reproduces the effect and capacity functions; a regression suite of seven locked tests pins the effect-size algorithms. Environment geometry constants are computed directly from the HDF5 datasets and split manifests, with the branch point defined as the last step before the paired trajectories first separate (Euclidean distance $>10^{-5}$), identical to the evaluation code. The two testbed datasets (675\,MB and 945\,MB) are excluded from version control; reproducibility rests on deterministic generators with fixed seeds.

\subsection{Protocol provenance and deviations}

The balanced-testbed protocol was registered before any full mechanism-family evaluation, and the skew-prior protocol was registered before its datasets were generated. The archived documents specify the evaluated families, operating points, acceptance checks, and interpretation criteria; commit hashes are provided with the artifacts.

Four deviations affect how the supplementary results should be read:
\begin{enumerate}
\item An early 2000-step smoke test preceded a minor revision of the balanced-testbed document. The revised capacity formula and the decision to separate shared source noise from the auxiliary-latent floor were already present in the robot-side analysis; no full mechanism result existed at the time.
\item The first unsupervised-token launcher unpacked a clustering return tuple incorrectly, so those jobs stopped before training and produced no analyzed checkpoints. A separate launcher used 20{,}000 rather than 50{,}000 steps for 126 configurations; all headline values were rerun at 50{,}000 steps. The shorter runs agree qualitatively except for the training-budget sensitivity of posterior setting E3, noted in Section~\ref{sec:toy-scope}.
\item Preregistered criterion P3 required $|\Delta|$ to be non-decreasing in $\lambda$. It fails in its literal form under the null-space map because the $\lambda=0$ point is already anti-coordinated. The signed effect is strictly monotone across the 15 family means. We therefore record the criterion form as failed and the mechanism pattern as supported.
\item Two total-variation thresholds in the skew-prior study did not account for their structural ceilings and are therefore reported in normalized form. The separate criterion that uniform-draw occupancy remain above 60\% genuinely fails at $p=(0.92,0.08)$ and in the reduced-separability $p=(0.80,0.20)$ condition. Four balanced-study runs and five skew-study cells fall just outside their loss-plateau gates; including or excluding them changes family means by at most 0.1 percentage points. A reduced-separability cell originally assigned to the horizontal-shift class was reclassified as a floor point after recomputing its own relative capacity.
\end{enumerate}

\subsection{Implementation corrections and excluded pilot analysis}

\emph{Orientation convention.} The forward-kinematics implementation and RoboSuite encode end-effector orientation in frames differing by a constant $90^\circ$ rotation about the $z$ axis (measured $90.000^\circ\pm0.031^\circ$; position difference 0.30\,mm). The first orientation-consistency runs omitted this transform and their geodesic residuals remained at $87$--$99^\circ$. All four pose configurations were retrained after the correction; the residual fell to $6.7^\circ$ within 1750 steps. No pre-correction checkpoint enters any result, and no position-branch result was affected.

\emph{Excluded internal monitor pilot.} An earlier internal failure-farm analysis is not used because the evaluation perturbed effectively degenerate input dimensions, allowed the monitored model to influence executed trajectories, and stratified too coarsely to support the reported AUROCs. No number from that pilot contributes to the monitor results in Section~\ref{sec:liberoplus}; the external experiment was designed to remove those three defects.

\subsection{LIBERO-Plus monitor experiment}
\label{app:liberoplus}

\emph{Assets and evaluation set.} The experiment uses the official LIBERO-Plus assets and the \texttt{libero\_spatial} task set. Perturbation dimensions and severity labels are taken from the benchmark metadata, and initial states are obtained through the official API. Six perturbation dimensions are included. Language-instruction perturbations are excluded because the executing policy conditions on a task index rather than text; tasks without an official severity label are also excluded.

\emph{Policies and monitored heads.} The executing policy $\pi_{\mathrm{exec}}$ is an action-chunk CFM policy with visual, proprioceptive, and task-index inputs, trained on the 500 official demonstrations. The monitored coordinated arm uses a $K=2$ partition of future position chunks and supplies the same sampled symbol to two otherwise separate heads; the uncoordinated arm omits that symbol. The PCA reducer, normalization statistics, and token partition are fixed across the three monitor-training seeds, so the reported seed spread reflects network initialization and batch order. The monitored heads are never used to generate the rollouts, and all signals are scored offline from the same recorded inputs.

\emph{Calibration and aggregation.} Conformal thresholds use the finite-sample empirical quantile $\lceil(n+1)(1-\alpha)\rceil/n$ at $\alpha=0.05$, computed from calibration-split successes. The multimodality flag uses a deterministic two-cluster initialization and the constants specified before scoring. Episode-level scores use an exponentially weighted moving average followed by the episode maximum ($\alpha=0.1$); window-maximum and mean alternatives give the same family ordering.

\emph{Protocol provenance.} The monitor protocol was committed before the first scoring output. It fixes the primary comparison, aggregation, stratification, baseline set, mechanism comparison, and exclusions. A later top-up to 1664 episodes continued the same cells and random sequence. The corresponding commit hash and the unchanged protocol file are included with the artifacts.

\emph{Scoring correction.} An initial implementation standardized the condition vector twice before computing $\max|z|$. This weakened the comparison baseline and therefore biased margins in favor of the residual monitor. All comparisons involving $\max|z|$ were recomputed from the raw conditions under the specified normalization; the correction reduced the margins but did not change the signal-family ordering. Only the corrected values appear in the paper. The false-alarm comparison, monitored-head sampling variances, and action-chunk-entropy calibration were unaffected and use the final scoring code.

\emph{External capacity analysis.} The modal-capacity analysis in Section~\ref{sec:liberoplus-coverage} was designed after the monitoring result and is not part of the registered criterion. It reuses the frozen traces and three monitor checkpoints, drawing 32 chunks per branch at each of 147{,}529 test steps. Confidence intervals resample the 851 episodes as clusters. Independent checks reproduced the coordinated sampler when tokens agree, the stored multimodality flag, and the per-condition estimator used elsewhere in the paper.

\emph{Per-seed capacity readings.} On the token partition, the coordinated arm gives $\Delta=+0.000182$, $+0.000110$, and $+0.000115$ for seeds 0--2. The corresponding episode-cluster intervals are
\[
[+0.000115,+0.000261],\qquad
[+0.000069,+0.000155],\qquad
[+0.000074,+0.000156].
\]
Restricting to steps with token-prior entropy above 0.30 nats gives $+0.00847$, $+0.00642$, and $+0.00647$; an independent Monte Carlo draw gives $+0.00719$. On the local partition, the coordinated arm gives $+0.000020\pm0.000097$ over seeds and $+0.000031\pm0.000040$ on the replicate.

\section{Falsifiability and remaining tests}
\label{app:falsify}

The paper's negative and mechanism claims are paired with explicit challenges; four have already changed the manuscript.
\begin{enumerate}\itemsep1pt\parskip0pt\parsep0pt
\item \emph{Auxiliary-latent null---not triggered.} A non-collapsed model satisfying (A1)--(A2) and (A3)--(A5), with a per-condition interval clearly outside the equivalence band, would contradict the implemented population prediction or expose a violated premise.
\item \emph{Partition token as a necessary route---refuted.} An amortized posterior reaches the balanced testbed's high-capacity corner, so partition tokens are claimed only as a robust sufficient robot-side route.
\item \emph{Map-invariant source-noise coordination---refuted, and the reversal preregistered-and-confirmed.} The same mechanism changes from anti-coordination to strong coordination across representation maps; the confirmatory factorial at ten seeds per cell upheld the reversal (interaction 90.1 pp, permutation $p=10^{-4}$). A continuous learned-map study remains an open test of the basin-alignment explanation.
\item \emph{Uniform token draws are free---refuted under skew.} Skew produces both prior distortion and lower normalized coordination; matching the symbol marginal repairs both in the tested regimes.
\item \emph{External monitoring---split verdict.} The preregistered prediction that coordination shrinks the false-alarm surcharge did not meet its registered criterion on LIBERO-Plus. The detection contrast is inconclusive at three seeds. The later capacity analysis is post hoc and supplies a testable regime explanation, not a confirmed prediction.
\item \emph{Open tests.} Repeat the monitor comparison with appreciable modal capacity, symbol entropy, and symbol obedience (the horizon sweep is now done, Table~\ref{tab:horizons}, with a split outcome), and extend the robot study beyond $K=2$ and to contact-rich or real trajectories.
\end{enumerate}

\end{document}